\definecolor{orange}{HTML}{ff6c0c}
\definecolor{blue}{HTML}{1f77b4}
\definecolor{Gray}{gray}{0.85}
\definecolor{LightCyan}{rgb}{0.88,1,1}
\def\@onedot{\ifx\@let@token.\else.\null\fi\xspace}
\DeclareRobustCommand\onedot{\futurelet\@let@token\@onedot}
\definecolor{blue1}{RGB}{0,128,255}
\definecolor{blue3}{RGB}{0,0,128}
\definecolor{darkpastelgreen}{rgb}{0.01, 0.75, 0.24}
\definecolor{cerulean}{rgb}{0.0, 0.48, 0.65}
\newcommand{\mbb}[1]{\mathbb{#1}}
\newcommand{\mcal}[1]{\mathcal{#1}}
\def\eg{\emph{e.g}\onedot}
\def\ie{\emph{i.e}\onedot}
\def\etc{\emph{etc}\onedot}
\def\wrt{w.r.t\onedot}
\definecolor{darkgreen}{rgb}{0,0.6,0}
\newtheorem{theorem}{Theorem}
\newtheorem{lemma}{Lemma}
\def\eqref#1{equation~\ref{#1}}
\def\1{\bm{1}}
\def\rvepsilon{{\bm{\epsilon}}}
\def\rvtheta{{\bm{\theta}}}
\def\rvv{{\mathbf{v}}}
\def\rvx{{\mathbf{x}}}
\def\rvz{{\mathbf{z}}}
\def\vtheta{{\bm{\theta}}}
\def\vphi{{\bm{\phi}}}
\def\mD{{\bm{D}}}
\def\mI{{\bm{I}}}
\def\mSigma{{\bm{\Sigma}}}
\DeclareMathAlphabet{\mathsfit}{\encodingdefault}{\sfdefault}{m}{sl}
\SetMathAlphabet{\mathsfit}{bold}{\encodingdefault}{\sfdefault}{bx}{n}
\def\net{{\texttt{NN}}}
\def\mx{{\text{max}}}
\def\mn{{\text{min}}}
\newcommand{\Var}{\mathrm{Var}}
\newcommand{\ud}{\mathop{}\!\mathrm{d}}
\newcounter{ourcustomizedthrm}
\NewDocumentEnvironment{ourcustomizedthrm}{mm}
{
    \refstepcounter{ourcustomizedthrm}
    \noindent\textbf{Theorem \hyperref[#2]{\theourcustomizedthrm} \;(#1)}\ 
    \itshape
}
{\par\normalfont} 
\newcounter{ourcustomizedapprox}
\NewDocumentEnvironment{ourcustomizedapprox}{mm} 
{
    \refstepcounter{ourcustomizedapprox}
    \noindent\textbf{Estimation \hyperref[#2]{\theourcustomizedapprox} \;(#1)}\
    \itshape
}
{\par\normalfont}
\newcounter{goodthrm}
\renewcommand{\thegoodthrm}{\arabic{goodthrm}}
\NewDocumentEnvironment{goodthrm}{m}
{
    \refstepcounter{goodthrm}
    \refstepcounter{theorem}
    \noindent\textbf{Theorem \thegoodthrm \;(#1).}\ 
    \itshape
}
{\par\normalfont}
\newcounter{approximation}
\renewcommand{\theapproximation}{\arabic{approximation}}
\NewDocumentEnvironment{approximation}{m} 
{
    \refstepcounter{approximation} 
    \noindent\textbf{Estimation \theapproximation \;(#1).}\  
    \itshape 
}
{\par\normalfont} 
\newenvironment{customproof}[1][Custom Proof]{\par\noindent{\textit{#1.} }}{\hfill\qedsymbol\par}
\newcounter{statement}
\renewcommand{\thestatement}{\arabic{statement}}
\NewDocumentEnvironment{statement}{m}
{
    \refstepcounter{statement}
    \noindent\textbf{Statement \thestatement \;(#1).}\  
    \itshape
}
{\par\normalfont} 
\newcounter{ourcustomizedstatement}
\NewDocumentEnvironment{ourcustomizedstatement}{mm}
{
    \refstepcounter{ourcustomizedstatement}
    \noindent\textbf{Statement \hyperref[#2]{\theourcustomizedstatement} \;(#1).}\
    \itshape
}
{\par\normalfont}
\crefname{ourcustomizedthrm}{Theorem}{Theorems}
\crefname{approximation}{Statement}{Statements}
\crefname{statement}{Statement}{Statements}
\crefname{goodthrm}{Theorem}{Theorems}
\newcommand{\tablestyle}[2]{\setlength{\tabcolsep}{#1}\renewcommand{\arraystretch}{#2}\centering\footnotesize}
\theoremstyle{plain}
\begin{document}

\twocolumn[
\icmltitle{Is Noise Conditioning Necessary for Denoising Generative Models?}

\icmlsetsymbol{equal}{*}

\begin{icmlauthorlist}
\icmlauthor{Qiao Sun}{equal,yyy}
\icmlauthor{Zhicheng Jiang}{equal,yyy}
\icmlauthor{Hanhong Zhao}{equal,yyy}
\icmlauthor{Kaiming He}{yyy}
\end{icmlauthorlist}

\icmlaffiliation{yyy}{MIT}

\icmlcorrespondingauthor{Qiao Sun}{sqa24@mit.edu}
\icmlcorrespondingauthor{Zhicheng Jiang}{jzc\_2007@mit.edu}
\icmlcorrespondingauthor{Hanhong Zhao}{zhh24@mit.edu}
\icmlcorrespondingauthor{Kaiming He}{kaiming@mit.edu}
\icmlkeywords{generative models, diffusion, flow matching, noise conditioning}

\vskip 0.2in
]

\printAffiliationsAndNotice{\icmlEqualContribution}

\begin{abstract}

It is widely believed that noise conditioning is indispensable for denoising diffusion models to work successfully. This work challenges this belief. Motivated by research on blind image denoising, we investigate a variety of denoising-based generative models in the absence of noise conditioning. To our surprise, most models exhibit graceful degradation, and in some cases, they even perform better without noise conditioning. We provide a theoretical analysis of the error caused by removing noise conditioning and demonstrate that our analysis aligns with empirical observations. We further introduce a noise-\textit{unconditional} model that achieves a competitive FID of 2.23 on CIFAR-10, significantly narrowing the gap to leading noise-conditional models. We hope our findings will inspire the community to revisit the foundations and formulations of denoising generative models.

\end{abstract}

\section{Introduction}\label{sec:intro}
\definecolor{model}{rgb}{0.7098,0.2235,0.2549}
\definecolor{ode}{rgb}{0.7216,0.8078,0.5569}

At the core of denoising diffusion models \cite{sohl2015diffusion} lies the idea of corrupting clean data with \textit{\mbox{various}} levels of noise and learning to reverse this process. The remarkable success of these models has been partially underpinned by the concept of ``\textit{noise conditioning}'' \cite{sohl2015diffusion,song2019ncsn,ho2020denoising}: a single neural network is trained to perform denoising across all noise levels, with the noise level provided as a conditioning input. The concept of noise conditioning has been predominantly incorporated in diffusion models and is widely regarded as a critical component.

In this work, we examine the necessity of noise conditioning in denoising-based generative models. Our intuition is that, in natural data such as images, the noise level can be reliably estimated from corrupted data, making \textit{blind} denoising (\ie, without knowing the noise level) a feasible task. Notably, noise-level estimation and blind image denoising have been active research topics for decades \cite{stahl2000quantile,salmeri2001noise,rabie2005robust}, with neural networks offering effective solutions \cite{chen2018image,guo2019toward,zhang2023blind}. This raises an intriguing question: can related research on image denoising be generalized to denoising-based generative models?

\begin{figure}
   \vspace{0.8cm}
    \centering
    \begin{subfigure}[b]{0.42\linewidth}
        \centering
        \includegraphics[width=0.5\linewidth]{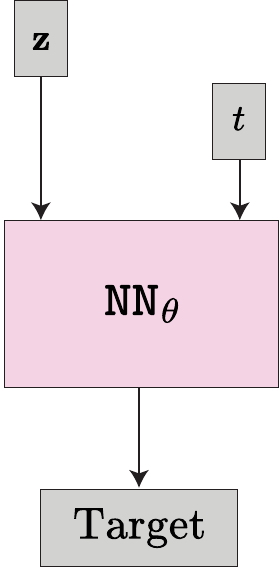}
        \caption{}
        \label{fig:teaser_left}
    \end{subfigure}
    \hfill
    \begin{subfigure}[b]{0.42\linewidth}
        \centering
        \includegraphics[width=0.5\linewidth]{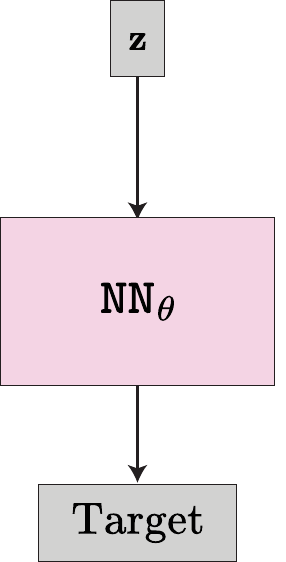}
        \caption{}
        \label{fig:teaser_right}
    \end{subfigure}
    \vspace{-.5em}
    \caption{
    \textbf{(a)} A denoising generative model takes a noisy data $\rvz$ and a noise level indexed by $t$ (such as $\sigma_t$) as the inputs to the neural network $\net_{\vtheta}$. \textbf{(b)} This work investigates the scenario of removing noise conditioning in the network.
 }
    \label{fig:main}
    \vspace{-1em}
\end{figure}

Motivated by this, in this work, we systematically compare a variety of denoising-based generative models --- \mbox{\textit{with and without}} noise conditioning. Contrary to common belief, we find that many denoising generative models perform robustly even in the absence of noise conditioning. In this scenario, most methods exhibit only a modest degradation in generation performance. More surprisingly, we find that some relevant methods---particularly flow-based ones \cite{lipman2023flow,liu2023flow}, which originated from different perspectives---can even produce \textit{improved} generation results \textit{without} noise conditioning. Among all the popular methods we studied, only one variant fails disastrously. Overall, our empirical results reveal that noise conditioning may \textit{not} be necessary for denoising generative models to function properly.

We present a theoretical analysis of the behavior of these models in the absence of noise conditioning. Specifically, we investigate the inherent uncertainty in the noise level distribution, the error caused by denoising without noise conditioning, and the accumulated error in the iterated sampler. Put together, we formulate an error bound that can be computed without involving any training, depending solely on the noise schedules and the dataset. Experiments show that this error bound correlates well with the noise-unconditional behaviors of the models we studied---particularly in cases where the model fails catastrophically, its error bound is orders of magnitudes higher.

Because noise-unconditional models have been rarely considered, it is worthwhile to design models specifically for this underexplored scenario. To this end, we present a simple alternative derived from the EDM model \cite{karras2022edm}. Without noise conditioning, our variant can achieve a strong performance, reaching an FID of 2.23 on the CIFAR-10 dataset. This result significantly narrows the gap between a noise-unconditional system and its noise-conditional counterpart (\eg, 1.97 FID of EDM).

Looking ahead, we hope that removing noise conditioning will pave the way for new advancements in denoising-based generative modeling. For example, only in the absence of noise conditioning can a score-based model learn a unique score function and enable the classical, physics-grounded Langevin dynamics.\footnotemark 
~Overall, we hope that our findings will motivate the community to re-examine the fundamental principles of related methods and explore new directions in the area of denoising generative models.

\footnotetext{Otherwise, it relies on the \textit{annealed} Langevin dynamics \cite{song2019ncsn}) that does not correspond to a unique underlying probability distribution independent of noise levels.}

\section{Related Work}\label{sec:related}

\paragraph{Noise Conditioning.} The seminal work of diffusion models \cite{sohl2015diffusion} proposes iteratively perturbing clean data and learning a model to reverse this process. In this pioneering work, the authors introduced a ``\textit{time dependent readout function}'', which is an early form of noise conditioning.

The modern implementation of noise conditioning is popularized by the introduction of \textit{Noise Conditional} Score Networks (NCSN) \cite{song2019ncsn}. NCSN is originally developed for score matching.
This architecture is adopted and improved in Denoising Diffusion Probabilistic Models (DDPM) \cite{ho2020denoising}, which explicitly formulate generation as an iterative denoising problem. The practice of noise conditioning has been inherited in iDDPM \cite{nichol2021iddpm}, ADM \cite{dhariwal2021diffusion}, and nearly all subsequent derivatives.

DDIM \cite{song2021ddim} and EDM \cite{karras2022edm} reformulate the reverse diffusion process into an ODE solver, enabling deterministic sampling from a single initial noise. Flow Matching (FM) models \cite{lipman2023flow,liu2023flow,albergo2023stochastic} reformulate and generalize the framework by learning flow fields that map one distribution to another. In all these methods, noise conditioning (also called time conditioning) is the \textit{de facto} choice.

Beyond diffusion models, Consistency Models \cite{song2023consistency} have emerged as a new family of generative models for non-iterative generation. It has been found \cite{song2024improved} that noise conditioning and its implementation details are critical for the success of consistency models, highlighting the central role of noise conditioning.

\paragraph{Blind Image Denoising.} In the field of image processing, blind image denoising has been studied for decades. It refers to the problem of denoising an image without any prior knowledge about the level, type, or other characteristics of the noise. Relevant studies include noise level estimation from noisy images \cite{stahl2000quantile,shin2005block,liu2013single,chen2015efficient}, as well as directly learning to perform blind denoising from data \cite{liu2007automatic,chen2018image,batson2019noise2self,zhang2023blind}. Modern neural networks, including the \mbox{U-Net} \cite{ronneberger2015u} commonly used in diffusion models, have been shown highly effective for these tasks.

Our research is closely related to classical work on blind denoising. However, the iterative nature of the generative process, where errors can accumulate, introduces new challenges. In addressing these challenges, our work opens up new research opportunities that extend classical approaches.

\section{Formulation}\label{sec:formulation}

In this section, we present a reformulation that can summarize the training and sampling processes of various denoising generative models. The core motivation of our reformulation is to \textit{isolate} the neural network $\net_{\vtheta}$, allowing us to focus on its behavior with respect to noise conditioning.

\subsection{Denoising Generative Models}\label{subsec:gs}

\paragraph{Training Objective.}
During training, a data point $\rvx$ is sampled from the data distribution $p(\rvx)$, and a noise $\rvepsilon$ is sampled from a noise distribution $p(\rvepsilon)$, such as a normal distribution $\mcal{N}(\bm 0,\mI)$. A noisy image $\rvz$ is given by:
\begin{align}
    \label{eq:z_cal}
    \rvz = a(t)\rvx + b(t)\rvepsilon.
\end{align} 
Here, $a(t)$ and $b(t)$ are schedule functions that are method-dependent.
The time step $t$, which can be a continuous or discrete scalar, is sampled from $p(t)$. Without loss of generality, we refer to $b(t)$, or simply $t$, as the \textit{noise level}.

In general, a denoising generative model involves minimizing a loss function that can be written as:
\begin{align}
    \label{eq:gs_loss}
    \mcal{L}(\vtheta) = \mbb{E}_{\rvx,\rvepsilon,t}\Big[w(t)\big\|\net_{\vtheta}(\rvz|t)-r(\rvx,\rvepsilon,t)\big\|^2\Big].
\end{align}
\vspace{.5em}
Here, $\net_{\vtheta}$ is a neural network (\eg, U-Net) to be learned, $r(\rvx,\rvepsilon,t)$ is a \textit{regression target}, and $w(t)$ is a weight.
The regression target $r$ can be written as:
\begin{align}
    \label{eq:T_cal}
 r(\rvx,\rvepsilon,t) = c(t)\rvx+d(t)\rvepsilon,
\end{align}
where $c(t)$ and $d(t)$ are also method-specific schedule functions. Common choices of $r$ include $\rvepsilon$-prediction \cite{ho2020denoising}, $\rvx$-prediction \cite{salimans2022progressive}, or $\rvv$-prediction \cite{salimans2022progressive,lipman2023flow}.

The specifics of the schedule functions of several existing methods are in \cref{tab:coefficients_train}. It is worth noting that, in our reformulation, we concern the regression target $r$ with respect to the neural network $\net_{\vtheta}$'s \textit{direct} output.\footnotemark

\footnotetext{
For methods like EDM  where the network output is wrapped with a precondition, we rewrite the schedules to expose the term of $\net_{\vtheta}$ (see \cref{app:edm_coeff}). This network $\net_{\vtheta}$ is called the ``\textit{raw network}'' in EDM (see Eq.~(8) in \citep{karras2022edm}).
}

\begin{table}[t]
    \centering
    \caption{
 Schedules used by different models in our reformulation. Notations and details are in \cref{app:coefficients}.
 }
    \vspace{-.5em}
    \label{tab:coefficients_train}
    \tablestyle{16pt}{1.5}
    \scriptsize
    \begin{tabular}{c|ccc}
        \hline
        & \textbf{iDDPM, DDIM} & \textbf{EDM} & \textbf{FM} \\
        \hline
        $a(t)$ & $\sqrt{\bar{\alpha}(t)}$ & $\frac{1}{\sqrt{t^2+\sigma_\text{d}^2}}$ & $1-t$ \\
        $b(t)$ & $\sqrt{1{-}\bar{\alpha}(t)}$ & $\frac{t}{\sqrt{t^2+\sigma_\text{d}^2}}$ & $t$ \\
        $c(t)$ & $0$ & $\frac{t}{\sigma_\text{d}\sqrt{t^2+\sigma_\text{d}^2}}$ & $-1$ \\
        $d(t)$ & $1$ & $-\frac{\sigma_\text{d}}{\sqrt{t^2+\sigma_\text{d}^2}}$ & $1$ \\
        \hline
    \end{tabular}
\end{table}

\paragraph{Sampling.} Given trained $\net_{\vtheta}$, the sampler performs iterative denoising. Specifically, with an initial noise $\rvx_0 \sim \mathcal{N}(0, b(t_{\mx})^2\mI)$, the sampler iteratively computes:
\begin{align}
    \label{eq:gs_sampler}
    \rvx_{i+1} := \kappa_i\rvx_i + \eta_i\net_\rvtheta(\rvx_i|t_i) + \zeta_i\tilde{\rvepsilon}_i.
\end{align}
Here, a discrete set of time steps $\{t_i\}$ is pre-specified and indexed by $0 \leq i < N$. The schedules, $\kappa_i$, $\eta_i$, and $\zeta_i$, can be computed from the training-time noise schedules in \cref{tab:coefficients_train} (see their specific forms in \cref{app:coefficients}).
In \cref{eq:gs_sampler},
$\tilde{\rvepsilon}_i\sim \mathcal{N}(\bm 0,\mI)$ is a sampling-time noise that only takes effect in SDE-based solvers; there is no noise added in ODE-based solvers, \ie, $\zeta_i=0$.

\cref{eq:gs_sampler} is a general formulation that can encapsulate many first-order samplers, such as (annealed) Langevin sampling and Euler-based ODE solver. Higher-order samplers (\eg, Heun) can be formulated similarly with extra schedules. In this paper, our theoretical analysis is based on \cref{eq:gs_sampler}, and higher-order cases are evaluated empirically.

\subsection{Noise Conditional Networks}\label{subsec:time_conditioning}

In existing methods, the neural network $\net_{\vtheta}(\rvz|t)$ is conditioned on the noise level specified by $t$. See \cref{fig:main} (left).
This is commonly implemented as $t$-embedding, such as Fourier \cite{tancik2020fourier} or positional embedding \cite{vaswani2017attention}. This $t$-embedding provides time-level information as an additional input to the network. Our study concerns the influence of this noise conditioning, that is, we consider $\net_{\vtheta}(\rvz)$ vs. $\net_{\vtheta}(\rvz | t)$. See \cref{fig:main} (right). Note that $\net_{\vtheta}(\rvz)$ or $\net_{\vtheta}(\rvz | t)$ involves all learnable parameters in the model, while the schedules ($a(t)$, $b(t)$, \etc) are pre-designed and not learned.

\section{Analysis of Noise-Unconditional Models}\label{sec:remove}

Based on the above formulation, we present a theoretical analysis of the influence of removing noise conditioning. Our analysis involves both the training objectives and the sampling process. We first analyze the effective target of regression at the training stage and its error in a single denoising step (\cref{subsec:target,subsec:ptz,subsec:error_effective}), and then give an upper bound on the accumulated error in the iterative sampler (\cref{subsec:final}). Overall, our analysis provides an error bound that is to be examined by experiments.

\begin{figure}[t]
    \centering
    \begin{overpic}[width=0.75\linewidth]{./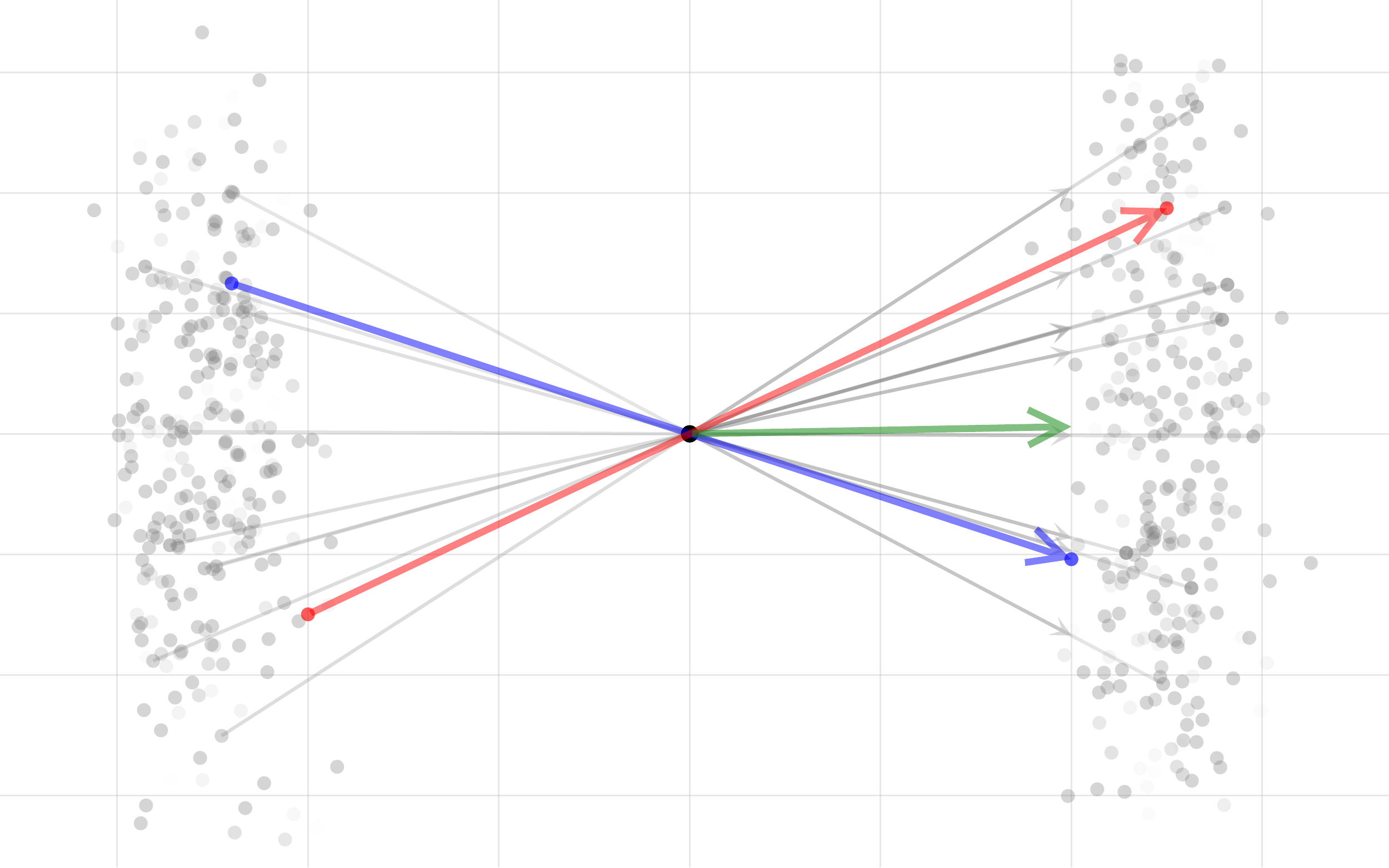}
        \centering
        \put(70,50){\scriptsize$r(\rvx_1,\rvepsilon_1,t_1)$}
        \put(78,31){\scriptsize$R(\rvz)$}
        \put(48,33){\scriptsize$\rvz$}
        \put(70,18){\scriptsize$r(\rvx_2,\rvepsilon_2,t_2)$}
    \end{overpic}
    \vspace{-0.8em}
    \caption{ \textbf{Illustration of the effective target $R(\rvz)$.} A given $z$ corresponds to multiple triplets $(\rvx, \rvepsilon, t)$.
 Here, we take Flow Matching \cite{lipman2023flow} as an example.
 On the left are the samples of $\rvepsilon$, and on the right are samples of $\rvx$.
 For a noisy sample $\rvz = (1-t)\rvx + t\rvepsilon$, it can be produced by different triplets. Each triplet gives a different regression target $r$. The effective target $R(\rvz)$ is the expectation of all possible $r$. 
 }
    \label{fig:illu}
\end{figure}

\subsection{Effective Targets}\label{subsec:target}

While the loss function is often written in a form like \cref{eq:gs_loss}, the underlying \textit{unique} regression target for $\net_{\vtheta}(\rvz|t)$ is \textbf{not} $r(\rvx,\rvepsilon,t)$. The function $\net_{\vtheta}(\rvz|t)$, which is \wrt $\rvz$ and $t$, is regressed onto \textit{multiple} $r$ values corresponding to different possible triplets $(\rvx,\rvepsilon,t)$ that produce the same $\rvz$ (see \cref{fig:illu}). Intuitively, the unique effective target, denoted as $R(\rvz|t)$ to emphasize its dependence on $\rvz$ and $t$, is the expectation of $r$ over all possible triplets.

Formally, optimizing the loss in \cref{eq:gs_loss} is equivalent to optimizing the following loss, where each term inside the expectation $\mbb{E}[\cdot]$ has a unique effective target:
\begin{align}\label{eq:eff_loss_wt}
    \mcal{L}(\vtheta) = \mbb{E}_{\rvz \sim p(\rvz), t \sim p(t|\rvz) }\Big[\big\|\net_{\vtheta}(\rvz|t)-R(\rvz|t)\big\|^2\Big].
\end{align}
Here, $p(\rvz)$ is the marginalized distribution of $\rvz{:=}a(t)\rvx + b(t)\rvepsilon$ in \cref{eq:z_cal}, under the joint distribution $p(\rvx, \rvepsilon, t):=p(\rvx)p(\rvepsilon)p(t)$.\footnotemark~It is easy to show that:
\begin{align}
\label{eq:gs_loss2}
R(\rvz|t) = \mbb{E}_{(\rvx, \rvepsilon) \sim p(\rvx, \rvepsilon | \rvz, t)} \big[ r(\rvx,\rvepsilon,t) \big],
\end{align}
that is, the expectation over all $(\rvx, \rvepsilon)$ subject to the conditional distribution. One can show (\cref{app:effective}) that minimizing \cref{eq:eff_loss_wt} is equivalent to minimizing \cref{eq:gs_loss}, and similar analysis has also been done in previous work\cite{lehtinen2018noise2noise}.

\footnotetext{For simplicity, we consider $w(t){=}1$, which happens to be the case for all methods in \cref{tab:coefficients_train} when we expose $\net_{\vtheta}$ explicitly.}

\paragraph{Effective Targets without Noise Conditioning.}

Similarly, if the network $\net_{\vtheta}(\rvz)$ does not accept $t$ as the condition, its unique effective target $R(\rvz)$ should depend on $z$ only. In this case, the loss is:
\begin{align}\label{eq:eff_loss_wot}
    \mcal{L}(\vtheta) = \mbb{E}_{\rvz \sim p(\rvz) }\Big[\big\|\net_{\vtheta}(\rvz)-R(\rvz)\big\|^2\Big],
\end{align}
where the unique effective target is:
\begin{align}
\label{eq:gs_loss3}
R(\rvz) = \mathbb{E}_{t\sim p(t|\rvz)} \big[R(\rvz| t)\big].
\end{align}
\cref{eq:gs_loss3} suggests that if the conditional distribution $p(t | \rvz)$ is close to a Dirac delta function, the \textit{effective target} would be the same with and without conditioning on $t$. If so, assuming the network is capable enough to fit the target, the noise-unconditional variant would produce the same output as the conditional one.

\subsection{Concentration of Posterior $p(t|\rvz)$}\label{subsec:ptz}

Next, we investigate how similar $p(t|\rvz)$ is to a Dirac delta function.
For \textit{high-dimensional} data such as images, it has been long realized that the noise level can be reliably estimated \cite{stahl2000quantile,salmeri2001noise,shin2005block}, implying a concentrated $p(t|\rvz)$. We note that the concentration of $p(t|\rvz)$ depends on data dimensionality:

 \begin{figure}
    \centering
    \includegraphics[width=1.0\linewidth]{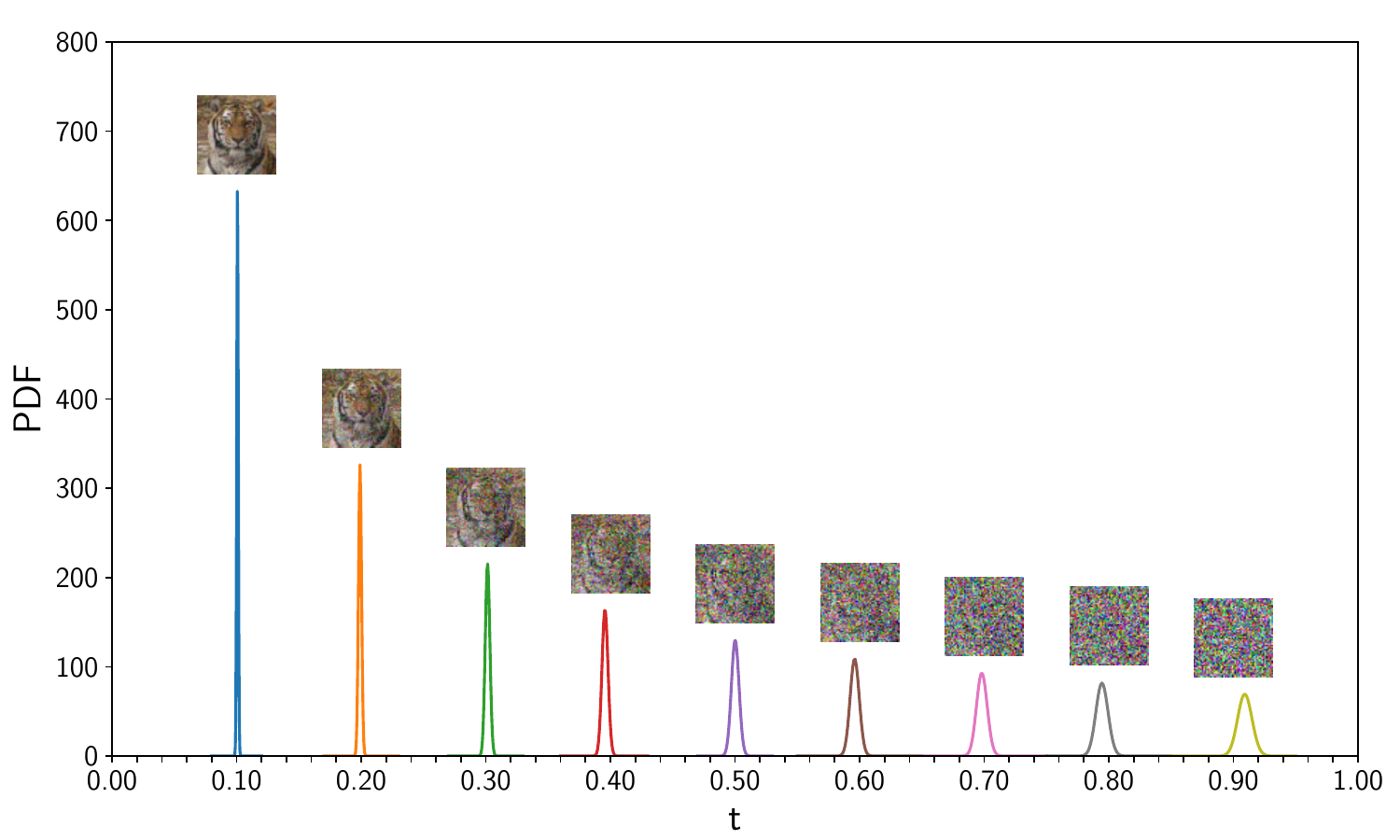}
    \vspace{-1em}
    \caption{\textbf{The Posterior distribution $p(t|\rvz)$ is concentrated.} We picked $\rvz = (1-t_*)\rvx + t_*\rvepsilon$ with $t_*$ from 0.1 to 0.9 for illustration. This plot is empirically simulated from 15,000 images in the AFHQ-v2 dataset with a size $64{\times} 64$ (see \cref{app:numerical}).}
    \label{fig:ptz}
\end{figure}

\begin{statement}{{Concentration of $p(t|\rvz)$}}\label{approx:var}
Consider a single datapoint $\rvx\in[-1,1]^d$, $\rvepsilon{\sim}\mathcal{N}(\bm0,\mI)$, $t{\sim} \mathcal{U}[0,1]$, and $\rvz = (1-t)\rvx + t\rvepsilon$ (the Flow Matching case). Given a noisy image $\rvz = (1-t_*)\rvx + t_*\rvepsilon$ produced by a given $t_*$, the variance of $t$ under the conditional distribution $p(t|\rvz)$, is:
    \begin{align}\label{eq:approx1}
        \mathrm{Var}_{t\sim p(t|\rvz)} [t] \approx \frac{t_*^2}{2d},
    \end{align}
when the data dimension $d$ satisfies $\frac{1}{d} \ll t_*$ and $\frac{1}{d} \ll 1-t_*$. (Derivation in \cref{app:delta_t})
\end{statement}
\vspace{.5em}
Intuitively, this statement suggests that \textit{high-dimensional} data induces a \textit{sharply peaked} $p(t\mid\rvz)$. In \cref{app:delta_t}, we derive a rigorous upper bound on this variance and extend the analysis to the multi–data–point setting. To corroborate these theoretical findings, we empirically run a simulation on a real dataset and plot $p(t|\rvz)$ (see \cref{fig:ptz}). The empirical distribution of $p(t|\rvz)$ is well concentrated. Moreover, a smaller $t^*$ leads to a more concentrated $p(t|\rvz)$, as also indicated by \cref{eq:approx1}.

\subsection{Error of Effective Regression Targets}\label{subsec:error_effective}

With $p(t|\rvz)$, we investigate the error between the effective regression targets $R(\rvz)$ and $R(\rvz|t)$. Formally, we consider:

\vskip -15pt

\begin{align}\label{eq:error}
 E(\rvz):=\mathbb{E}_{t\sim p(t|\rvz)}\Big[\|R(\rvz|t)-R(\rvz)\|^2\Big].
\end{align}
We show that this error $E(\rvz)$ is substantially smaller than the norm of $R(\rvz)$:

\begin{statement}{Error of effective regression targets}\label{approx:error}
Consider the scenario in ~\cref{approx:var} and the Flow Matching case. The error defined in \cref{eq:error} satisfies:
\begin{align}\label{eq:approx2}
 E(\rvz) \approx \frac{1}{2}(1+\sigma_{\ud}^2)
\end{align}
when the data dimension $d$ satisfies $\frac{1}{d} \ll t_*$ and $\frac{1}{d} \ll 1-t_*$. Here, $\sigma_{\ud}$ denotes the per-pixel standard deviation of the dataset.
(Derivation in \cref{app:R_z})
\end{statement}

Intuitively, \cref{approx:error} suggests that for sufficiently high-dimension $d$, the error $E(\rvz)$ is substantially smaller (${\approx}1$) than the L2 norm of the target $R(\rvz)$ (${\approx}d$). In our real-data verification, we find that $E(\rvz)$ is at the order of $1/10^3$ of $R(\rvz)$ (see \cref{app:numerical}). In this case, regressing to $R(\rvz|t)$ can be reliably approximated by regressing to $R(\rvz)$.

\subsection{Accumulated Error in Sampling}\label{subsec:final}

Thus far, we have been concerned with the error of a single regression step. In a denoising generative model, the sampler at inference time is iterative. We investigate the accumulated error in the iterative sampler.

To facilitate our analysis, we assume the network $\net_{\vtheta}$ is sufficiently capable of fitting the effective regression target $R(\rvz|t)$ or $R(\rvz)$. Under this assumption, we replace $\net_{\vtheta}$ in \cref{eq:gs_sampler} with $R$. This leads to the following statement:

\begin{statement}{Bound of accumulated error}\label{thrm:bound}
Consider a sampling process, \cref{eq:gs_sampler}, of $N$ steps, starting from the same initial noise $\rvx_0=\rvx_0'$. With noise conditioning, the sampler computes:
   \begin{align*}
    \rvx_{i+1} = \kappa_i \rvx_i + \eta_i R(\rvx_i| t_i) + \zeta_i \tilde{\rvepsilon}_i,
   \end{align*}
   and without noise conditioning, it computes:
   \begin{align*}
    \rvx_{i+1}' = \kappa_i \rvx_i' + \eta_i R(\rvx_i') + \zeta_i \tilde{\rvepsilon}_i.
   \end{align*}
   
Assuming ${\|R(\rvx_i'|t_i)-R(\rvx_i|t_i)\|}~/~{\|\rvx_i'-\rvx_i\|}\le L_i$ and $\|R(\rvx_i')-R(\rvx_i'| t_i)\|\le \delta_i$, it can be shown that the error between the sampler outputs $\rvx_N$ and $\rvx_N'$ is bounded: 
   \begin{align}\label{eq:bound}
    \|\rvx_N{-}\rvx_N'\| &\le A_0B_0+A_1B_1+\ldots+A_{N{-}1}B_{N{-}1},
   \end{align}
 \vspace{-1em} where:
   \begin{align*}
 \vspace{-1em}
 A_i = \prod_{j=i+1}^{N-1}(\kappa_i+|\eta_i|L_i) \quad\text{and}\quad B_i=|\eta_i|\delta_i.
   \end{align*}
   depend on the schedules and the dataset.
 (Derivation in \cref{app:proof_final_bound})
\end{statement}

Here, the assumption on $\delta_i$ can be approximately satisfied as per \cref{approx:error}.
The assumption on $L_i$ models the function $R(\cdot|t)$ as Lipschitz-continuous. Although it is unrealistic for this assumption to hold exactly in real data, we empirically find that an appropriate choice of $L_i$ can ensure the Lipchitz condition holds with high probability (\cref{app:AB}).

\cref{thrm:bound} suggests that the \textit{schedules} $\kappa_i$ and $\eta_i$ are influential to the estimation of the error bound. With different schedules across methods, their behavior in the absence of noise conditioning can be dramatically different.

\paragraph{Discussions.}
Remarkably, the bound estimation can be computed \textit{without} training the neural networks: it can be evaluated solely based on the schedules and the dataset. 

Furthermore, our analysis of the ``error'' bound implies that the noise-conditional variant is more accurate, with the noise-\textit{unconditional} variant striving to approximate it. In fact, there is no reason to assume that the former should be a more accurate generative model. Nonetheless, in experiments, we find that the noise-\textit{unconditional} case can outperform its noise-conditional counterpart in some cases.

\definecolor{lightgreen}{HTML}{B6DEC2}
\definecolor{lightyellow}{HTML}{FFFAC0}
\definecolor{lightred}{HTML}{FCCAC5}

\section{A Noise Unconditional Diffusion Model}\label{p:edmv1}

In addition to investigating existing models, we also design a diffusion model specifically tailored for noise \textit{unconditioning}.
Our motivation is to find schedule functions that are more robust in the absence of noise conditioning, while still maintaining competitive performance. To this end, we build upon the highly effective EDM framework \cite{karras2022edm} and modify its schedules.

A core component of EDM is a ``preconditioned'' denoiser:
\begin{align*}
 c_{\text{skip}}(t)\hat{\rvz}+c_{\text{out}}(t)\net_{\vtheta}\big(c_{\text{in}}(t)\hat{\rvz} \mid t\big)
\vspace{-1em}
\end{align*}
Here, $\hat{\rvz} := \rvx + t \rvepsilon $ is the noisy input before the normalization performed by $c_{\text{in}}(t)$,\footnotemark~which we simply set as $c_{\text{in}}(t) = \frac{1}{\sqrt{1+t^2}}$.
The main modification we adopt for the noise \textit{unconditioning} scenario is to set:
\begin{align*}
c_{\text{out}}(t) = 1.
\end{align*}
As a reference, EDM set $c_{\text{out}}(t) = \frac{\sigma_\text{d}t}{\sqrt{\sigma^2_\text{d}+t^2}}$ where $\sigma_\text{d}$ is the data std.
As $c_{\text{out}}(t)$ is the coefficient applied to $\net_{\vtheta}$, we expect setting it to a constant will free the network from modeling a $t$-dependent scale.
In experiments (\cref{subsec:analysis}), this simple design exhibits a lower error bound (\cref{thrm:bound}) than EDM. 
We name this model as \textbf{uEDM}, which is short for \textit{\mbox{(noise-)}unconditional EDM}.
For completeness, the resulting schedules of uEDM are provided in \cref{app:v1}.

\footnotetext{To make notations consistent with our reformulation in \cref{eq:gs_loss}, we denote $\rvz=c_{\text{in}}(t)\hat{\rvz}$. See details in \cref{app:edm_coeff}.}

\section{Experiments}\label{method}

\paragraph{Experimental Settings.}

We empirically evaluate the impact of noise conditioning across a variety of models: 
\begin{itemize}[topsep=.5em,itemsep=0pt]
    \item \textbf{Diffusion}: iDDPM \cite{nichol2021iddpm}, DDIM \cite{song2021ddim}, ADM \cite{dhariwal2021diffusion}, EDM \cite{karras2022edm}, and uEDM (Sec.~\ref{p:edmv1})
    \item \textbf{Flow-based Models}: we adopt the implementation of Rectified Flow (1-RF) \cite{liu2023flow}, which is a form of Flow Matching \cite{lipman2023flow} (FM).
    \item \textbf{Consistency Models}: iCT \cite{song2024improved} and ECM \cite{geng2025consistency}.
\end{itemize}
\vspace{-.5em}
Our main experiments are on class-unconditional generation on CIFAR-10 \cite{krizhevsky2009CIFAR}, with extra results on ImageNet 32${\times}$32 \cite{deng2009imagenet}, and FFHQ 64${\times}$64 \cite{karras2019style}.
We evaluate Fr\'echet Inception Distance (FID) \cite{heusel2017FID} and report Number of Function Evaluations (NFE).
For a fair comparison, all methods are based on our re-implementation as faithful as possible (see \cref{app:faithfulness}): with and without noise conditioning are run in the same implementation for each method.

\setlength{\fboxsep}{1pt}
\begin{table}[t]
    \caption{
    \textbf{Changes of FID scores in the absence of noise conditioning}, for different methods on CIFAR-10. Here `w/o $t$' means without noise conditioning.
A color of {\fcolorbox{lightyellow}{lightyellow}{yellow}} denotes a non-disastrous (and often decent) degradation; {\fcolorbox{lightgreen}{lightgreen}{green}} denotes improvement; {\fcolorbox{lightred}{lightred}{red}} denotes failure. 
}
\label{tab:exp}
    \centering
 {{\setlength{\extrarowheight}{1.5pt}}
    \vskip -.5em
    \begin{adjustbox}{width=0.9\linewidth}
    \scriptsize
    \begin{tabular}{lcrc}
        \toprule
        \multirow{2}{*}{model} & \multirow{2}{*}{sampler} & \multirow{2}{*}{NFE} & FID \\
         & & & ~ w/ $t$ \quad $\rightarrow$ \quad w/o $t$\\
        \midrule
 iDDPM  & SDE & 500 & \cellcolor{lightyellow} 3.13 \quad $\rightarrow$ \quad 5.51 \\
 iDDPM ($\rvx$-pred)& SDE & 500 & \cellcolor{lightyellow} 5.64 \quad $\rightarrow$ \quad 6.33\\
        \midrule
        \multirow{3}{*}{DDIM} & ODE & 100 & \cellcolor{lightred}  3.99 \quad $\rightarrow$ \;\ 40.90\\
         & SDE & 100 & \cellcolor{lightyellow} 8.07 \quad $\rightarrow$ \;\ 10.85 \\
         & SDE & 1000 & \cellcolor{lightyellow} 3.18 \quad $\rightarrow$ \quad 5.41 \\
        \midrule
 ADM  & SDE & 250 & \cellcolor{lightyellow} 2.70 \quad $\rightarrow$ \quad 5.27 \\
        \midrule
        \multirow{2}{*}{EDM } & Heun & 35 & \cellcolor{lightyellow} \textbf{1.99} \quad $\rightarrow$ \quad 3.36 \\
         & Euler & 50 & \cellcolor{lightyellow} 2.98 \quad $\rightarrow$ \quad 4.55 \\
        \midrule
        \multirow{3}{*}{FM (1-RF)}  & Euler & 100 & \cellcolor{lightgreen} 3.01 \quad $\rightarrow$ \quad 2.61 \\
         & Heun & 99 & \cellcolor{lightgreen} 2.87 \quad $\rightarrow$ \quad 2.63 \\
         & RK45 & $\sim$127 & \cellcolor{lightyellow} 2.53 \quad $\rightarrow$ \quad 2.63 \\
        \midrule
 iCT & - & 2 & \cellcolor{lightyellow} 2.59 \quad $\rightarrow$ \quad 3.57 \\
 ECM & - & 2 & \cellcolor{lightyellow} 2.57 \quad $\rightarrow$ \quad 3.27 \\
        \midrule
 {uEDM} (Sec.~\ref{p:edmv1}) & Heun & 35 & \cellcolor{lightyellow} 2.04 \quad $\rightarrow$ \quad \textbf{2.23}\\
        \bottomrule
    \end{tabular}
    \end{adjustbox}
 }
\end{table}

\subsection{Main Observations}

\cref{tab:exp} summarizes the FID changes in different generative models, with and without noise conditioning, denoted as ``w/ $t$'' and ``w/o $t$''.
\cref{fig:samples} shows some qualitative results.
We draw the following observations:

    \textbf{(i)} Contrary to common belief, noise conditioning is \textit{not} an enabling factor for most denoising-based models to function properly. Most variants can work gracefully, exhibiting small but decent degradation ({\fcolorbox{lightyellow}{lightyellow}{yellow}}). 
    
    \textbf{(ii)} More surprisingly, some flow-based variants can achieve \textit{improved} FID ({\fcolorbox{lightgreen}{lightgreen}{green}}) after removing noise conditioning. 
In general, flow-based methods investigated in this paper are insensitive to whether we use noise conditioning or not.
We hypothesize that this is partially because FM's regression target is independent on $t$ (see \cref{tab:coefficients_train}: $c=-1$, $d=1$)

    \textbf{(iii)} The uEDM variant (Sec.~\ref{p:edmv1}) achieves a competitive FID of 2.23 without noise conditioning, narrowing the gap to the strong baseline of the noise-conditional methods (here, 1.99 of EDM, or 1.97 reported in \citet{karras2022edm}).

    \textbf{(iv)} Consistency Models (here, iCT and ECM), which are related to diffusion models but present a substantially different objective function, can also perform gracefully. While iCT was found highly sensitive to the subtleties of $t$-conditioning (see \citet{song2024improved}), we find that removing it does not lead to disastrous failure.
    
    \textbf{(v)} Among all variants we investigate, only ``DDIM w/ ODE sampler" results in a catastrophic failure ({\fcolorbox{lightred}{lightred}{red}}), with FID significantly worsened to 40.90. \cref{fig:samples} (a) demonstrates its qualitative behavior: the model \textit{is still able} to make sense of shapes and structures; it is ``overshoot" or ``undershoot", producing over-saturated or noisy results.
     
\begin{figure}[t]
    \centering
    \begin{minipage}{0.95\linewidth}
        \centering
        \includegraphics[width=\linewidth]{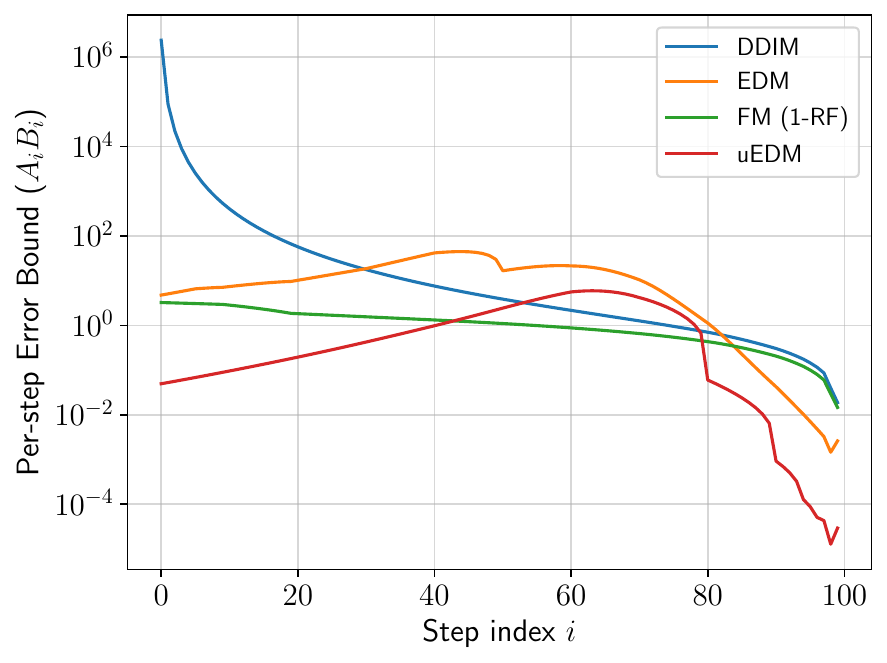}
    \end{minipage}
    \hfill
    \begin{minipage}{0.8\linewidth}
        \centering
        \renewcommand{\arraystretch}{1.1}
 {\setlength{\extrarowheight}{1.5pt}}
        \scriptsize
        \begin{tabular}{lcc}
            \toprule
        \multirow{1}{*}{Model}  & \multirow{1}{*}{accum. bound} & FID w/ $t$ $\rightarrow$ w/o $t$ \\
        \midrule
 DDIM  & 3e6  & 3.99 \quad $\rightarrow$ \quad 40.90 \\
 EDM   & 1e3  & 2.34 \quad $\rightarrow$ \quad \;\  3.80            \\
 FM (1-RF)    & 1e2  & 3.01 \quad $\rightarrow$ \quad \;\  2.61            \\
 uEDM (Sec.~\ref{p:edmv1}) & 1e2  & 2.62 \quad $\rightarrow$ \quad \;\  2.66 \\
        \bottomrule
        \end{tabular}
    \end{minipage}
    \caption{\textbf{Error bound and the influence of noise conditioning.}
ODE with $N=100$ steps is applied for each variant. The plot shows the per-step error bound $A_i B_i$ in \cref{eq:bound}, and the table shows the accumulated error bound. The y-axis is log-scale.
}\label{fig:AB}
\vspace{-1em}
\end{figure}

\begin{figure*}[!ht]
    \centering
    \begin{subfigure}[b]{0.47\textwidth}
        \includegraphics[width=\textwidth]{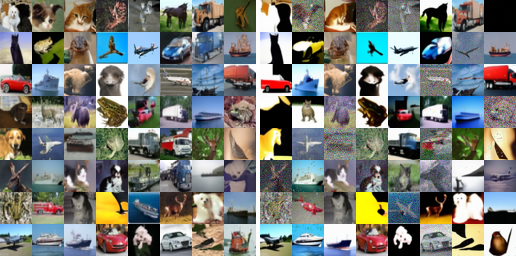}
        \caption{DDIM (FID: $3.99\to 40.90$)\\[1.4ex]}
    \end{subfigure}
    \hfill
    \begin{subfigure}[b]{0.47\textwidth}
        \includegraphics[width=\textwidth]{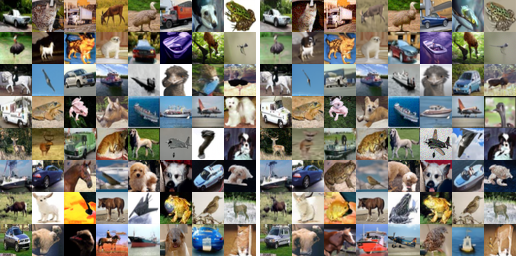}
        \caption{EDM (FID: $1.99\to 3.36$)\\[1.4ex]}
    \end{subfigure}
    ~\\
    \begin{subfigure}[b]{0.47\textwidth}
        \includegraphics[width=\textwidth]{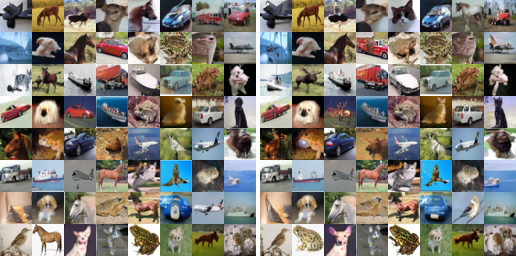}
        \caption{FM (1-RF) (FID: $3.01\to 2.61$)}
    \end{subfigure}
    \hfill
    \begin{subfigure}[b]{0.47\textwidth}
        \includegraphics[width=\textwidth]{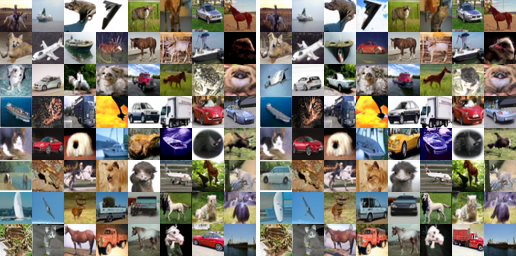}
        \caption{uEDM (FID: $2.04\to 2.23$)}
    \end{subfigure}
    \vspace{-.5em}
    \caption{\textbf{Samples of noise-conditional vs. noise-unconditional models.}
    Samples are generated by (a) DDIM, (b) EDM, (c) FM (1-RF), and (d) uEDM, on the CIFAR-10 class-unconditional case. For each subfigure, the left panel is the noise-conditional case, and the right panel is the noise-unconditional counterpart, with the same random seeds. The change of FID is from ``w/ $t$'' to ``w/o $t$''. See also \cref{tab:exp} for more quantitative results.}
    \label{fig:samples}
    \vspace{-.5em}
\end{figure*}

\paragraph{Summary.}
Our experimental findings highlight that \emph{noise conditioning, though often helpful for improving quality, is not essential for the fundamental functionality of denoising generative models}.

\subsection{Analysis}\label{subsec:analysis}

\paragraph{Error Bound.}\label{subsec:apply}
In \cref{fig:AB}, we empirically evaluate the error bound in \cref{thrm:bound} for different methods under a 100-step ODE sampler. The computation of the bound depends only on the schedules for each methods, as well as the dataset (detailed in \cref{app:AB}).

\cref{fig:AB} shows a strong correlation between the theoretical bound and the empirical behavior. Specifically, DDIM's catastrophic failure can be explained by its error bound that is orders of magnitudes higher. On the other hand, EDM, FM, and uEDM all have small error bounds throughout. This is consistent with their graceful behavior in the lack of noise conditioning.

These findings suggest that the error bound derived in our analysis serves as a reliable predictor of a model's robustness to the removal of noise conditioning. Importantly, the bound can be computed solely based on the model's formulation and dataset statistics, \textit{without} training the neural network. Consequently, it can provide a valuable tool for estimating whether a given denoising generative model can function effectively without noise conditioning, prior to model training.

\begin{figure}[t]
    \centering
    \vspace{-1em}
    \includegraphics[width=1.0\linewidth]{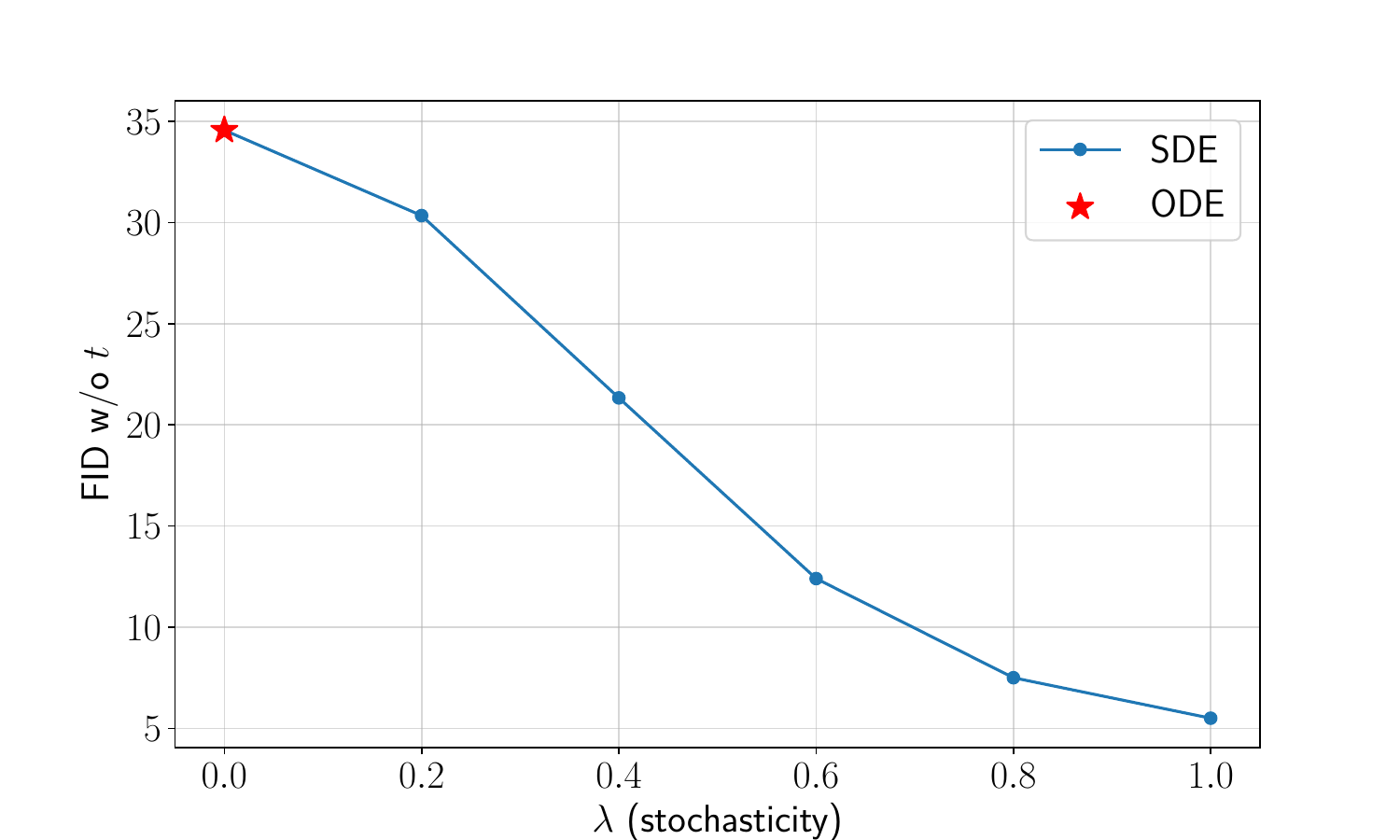}
    \vspace{-1em}
    \caption{\textbf{Influence of Stochasticity on DDIM}, in the lack of noise conditioning. The level of stochasticity is specified by $\lambda$, with $\lambda=0$ denoting the ODE case. Here, the number of sampling steps is fixed as 500.
   }
    \label{fig:interpolate}
\end{figure}

\paragraph{Level of Stochasticity.}

In \cref{tab:exp}, DDIM only fails with the deterministic ODE sampler (the default sampler in \citep{song2021ddim}); it still performs decently with the SDE sampler (\ie, the DDPM sampler). With this observation, we further investigate the level of stochasticity in \cref{fig:interpolate}. 

Specifically, with the flexibility of DDIM \cite{song2021ddim}, one can introduce a parameter $\lambda$ that interpolates between the ODE and SDE samplers by adjusting $\eta_i$ and $\zeta_i$ in \cref{eq:gs_sampler} (see \cref{eq:ddim_coeff_lmd} in \cref{app:ddim_coeff}).
As shown in \cref{fig:interpolate}, increasing $\lambda$ (more stochasticity) consistently improves FID scores. When $\lambda=1$, DDIM behaves similarly to iDDPM. 

We hypothesize that this phenomenon can be explained by error propagation dynamics. Our theoretical bound in \cref{thrm:bound} assumes worst-case error accumulation, but in practice, stochastic sampling enables error cancellation. The ODE sampler's consistent noise patterns lead to correlated errors, while the SDE sampler's independent noise injections at each step promote error cancellation. This error cancellation mechanism can improve performance with increasing stochasticity, as further evidenced by iDDPM and ADM's results (\cref{tab:exp}) produced by SDE.

\paragraph{Alternative Noise-conditioning Scenarios.}

Thus far, we have focused on removing noise conditioning from existing models. This is analogous to blind image denoising in the field of image process. Following the research topic on noise level estimation, we can also let the network explicitly or implicitly predict the noise level. Specifically, we consider the following four cases (\cref{fig:four_variants}):

\begin{enumerate}[label=\textbf{(\alph*)},topsep=.5em,itemsep=0pt]
	\item The standard noise-conditioning baseline, which is what we have been comparing with. See \cref{fig:four_variants}(a).
	\item A noise-conditioning variant, in which the noise level is predicted by another network. In this variant, the noise predictor $P$ is a small network pre-trained to regress $t$. This predictor is then frozen when training $\net_{\vtheta}$, and $\net_{\vtheta}$ is conditioned on the predicted $t'$, rather than the true $t$. See \cref{fig:four_variants}(b).
	\item An ``unsupervised'' noise-conditioning variant. This architecture is exactly the same as the variant (b), except that the noise predictor $P$ is trained from scratch without any ground-truth $t$. If we consider $P$ and $\net_{\vtheta}$ jointly as a larger network, this also represents a design for noise-unconditional modeling. See \cref{fig:four_variants}(c).
	\item The standard noise-unconditional baseline, which is what we have been investigating. See \cref{fig:four_variants}(d).
\end{enumerate}

\begin{figure}[t]
    \begin{minipage}{1.0\linewidth}
        \centering
        \hfill
        \begin{subfigure}[b]{0.248\linewidth}
            \centering
            \includegraphics[width=0.695\linewidth]{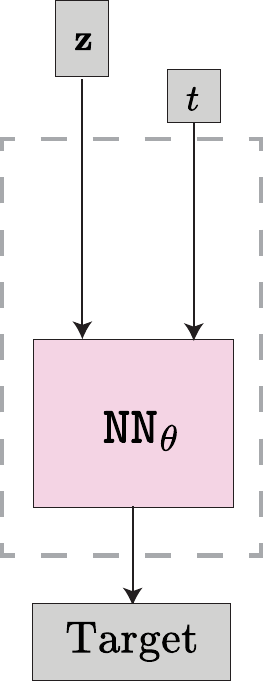}
            \caption{ }\label{fig:wt_ref_explain}
        \end{subfigure}
        \hspace*{-0.1cm}
        \begin{subfigure}[b]{0.248\linewidth}
            \centering
            \includegraphics[width=0.88\linewidth]{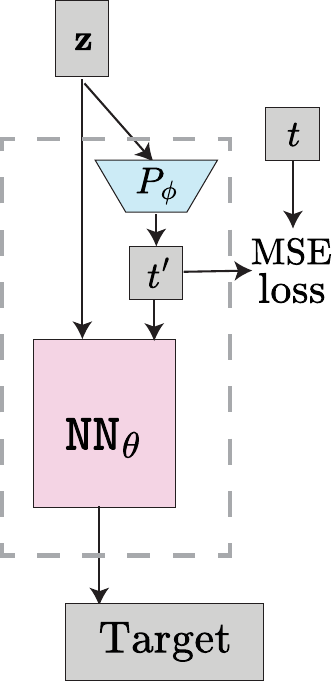}
            \caption{ }\label{fig:pred_explain}
        \end{subfigure}
        \hspace{-0.2cm}
        \begin{subfigure}[b]{0.248\linewidth}
            \centering
            \includegraphics[width=0.695\linewidth]{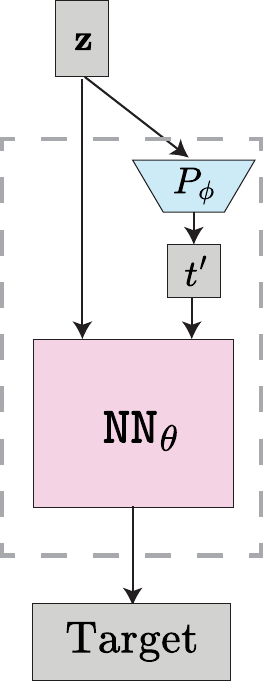}
            \caption{ }\label{fig:joint_explain}
        \end{subfigure}
        \hspace*{-0.25cm}
        \begin{subfigure}[b]{0.248\linewidth}
            \centering
            \includegraphics[width=0.695\linewidth]{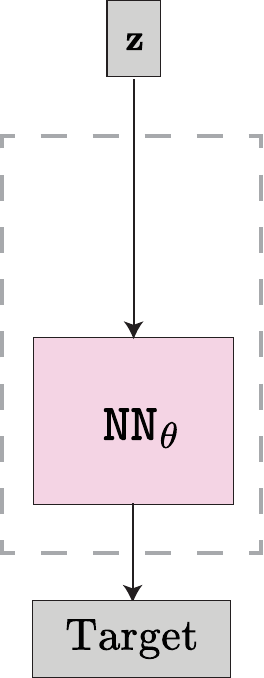}
            \caption{ }\label{fig:wot_ref_explain}
        \end{subfigure}
        \hfill
    \end{minipage}
    \vspace{1em} \par 
    \begin{minipage}{1.0\linewidth}
 {{\setlength{\extrarowheight}{1.5pt}}
        \centering
        \begin{adjustbox}{max width=\linewidth}
        \scriptsize
        \hspace*{1.6cm}
        \renewcommand{\arraystretch}{1.3}
        \begin{tabular}{l|c|ccc}
            \Xhline{3\arrayrulewidth}
 Model & (a) & (b) & (c) & (d)  \\
            \hline
 iDDPM         & 2.69 & 4.91  & 4.95 & 4.68 \\
 EDM           & 1.99 & 3.27  & 3.39 & 3.36 \\
 FM (1-RF)     & 3.01 & 2.58  & 2.65 & 2.61 \\
            \Xhline{3\arrayrulewidth}
        \end{tabular}
        \end{adjustbox}
 }
    \end{minipage}
    \caption{
\textbf{Alternative Noise-conditional Scenarios.} \textbf{(a)} Noise-conditional baseline. \textbf{(b)} Noise-conditional, but on $t'$ predicted by a noise level predictor $P$. \textbf{(c)} Similar to (b), but the noise level predictor is not supervised and is trained jointly. \textbf{(d)} Noise un-conditional baseline. For iDDPM, EDM, and FM, all of (b), (c), and (d) perform similarly.
    }\label{fig:four_variants}
\end{figure}

\cref{fig:four_variants} compares all four variants.
Notably, consistent behavior is observed for all models (iDDPM, EDM, and FM) studied here: the results of (b), (c), and (d) are similar. This suggests that (b), (c), and (d) could be \textit{subject to the same type of error}, that is, the uncertainty of $t$ estimation. Note that even in the case of (b) where the noise predictor is pre-trained with the true $t$ given, its prediction cannot be perfect due to the small yet inevitable uncertainty in $p(t|\rvz)$ (see \cref{subsec:ptz}). As a result, the supervised pre-trained noise predictor (b) does not behave much different with the unsupervised counterpart (c).

\subsection{Extra Datasets and Tasks.}

Thus far, our experiments have been on the CIFAR-10 class-unconditional task.
To show the generalizability of our findings, we further evaluate class-unconditional generation on ImageNet 32${\times}$32, FFHQ 64${\times}$64, and class-conditional generation on CIFAR-10. See \cref{tab:otherds}.

The behavior is in general similar to that in our previous experiments. Specifically, removing noise conditioning can also be effective for other datasets or the class-conditional generation task. FM can exhibit \textit{improvement} in the absence of noise conditioning; EDM has a decent degradation, but experience no catastrophic failure. 

\begin{table}[t]
    \caption{Changes of FID scores in the absence of noise conditioning, on class-unconditional ImageNet 32${\times}$32 and FFHQ 64${\times}$64, and class-conditional CIFAR-10.}\label{tab:otherds}
    \centering{\setlength{\extrarowheight}{1.5pt}
    \vspace{-.5em}
    \begin{adjustbox}{max width=\linewidth}
    \scriptsize
    \begin{tabular}{lccc}
        \Xhline{3\arrayrulewidth}
        \multirow{2}{*}{Model} & \multirow{2}{*}{Sampler} & \multirow{2}{*}{NFE} & FID \\
         & & & w/ $t$ $\rightarrow$ w/o $t$\\
        \multicolumn{4}{l}{\textbf{ImageNet 32$\times$32}}\\\Xhline{3\arrayrulewidth}
 FM (1-RF) & Euler & 100 & 5.15 $\rightarrow$ 4.85\\

        \multicolumn{4}{l}{\textbf{FFHQ 64$\times$64}}\\\Xhline{3\arrayrulewidth}
 EDM & Heun & 79 & 2.64 $\rightarrow$ 3.59\\
        \multicolumn{4}{l}{\textbf{CIFAR-10 Class-conditional}}\\\Xhline{3\arrayrulewidth}
 EDM & Heun & 35 & 1.76 $\rightarrow$ 3.11\\
 FM (1-RF) & Euler & 100 & 2.72 $\rightarrow$ 2.55\\
    \end{tabular}
    \end{adjustbox}
}
\end{table}

\subsection{Classifier-Free Guidance}

We further examine the impact of omitting noise conditioning when using classifier-free guidance (CFG) \cite{dhariwal2021diffusion}, a standard technique for significantly improving sample quality in class-conditional diffusion models.

Our experiments with CFG are conducted on the ImageNet $256{\times}256$ dataset on SiT \cite{sit2024}, which is a flow-matching variant of DiT \cite{dit2023}. For comparison, we train SiT-B/2 under the original paper's configuration for both noise-conditional and unconditional model. At inference, we employ an Euler sampler with 250 steps and vary the CFG scale. See more details in \cref{app:exp}.

The results in \cref{tab:cfg} indicate that removing the noise conditioning incurs almost no degradation at different guidance scales, corroborating our analysis.


\begin{figure}[t]
    \centering
    \includegraphics[width=1.0\linewidth]{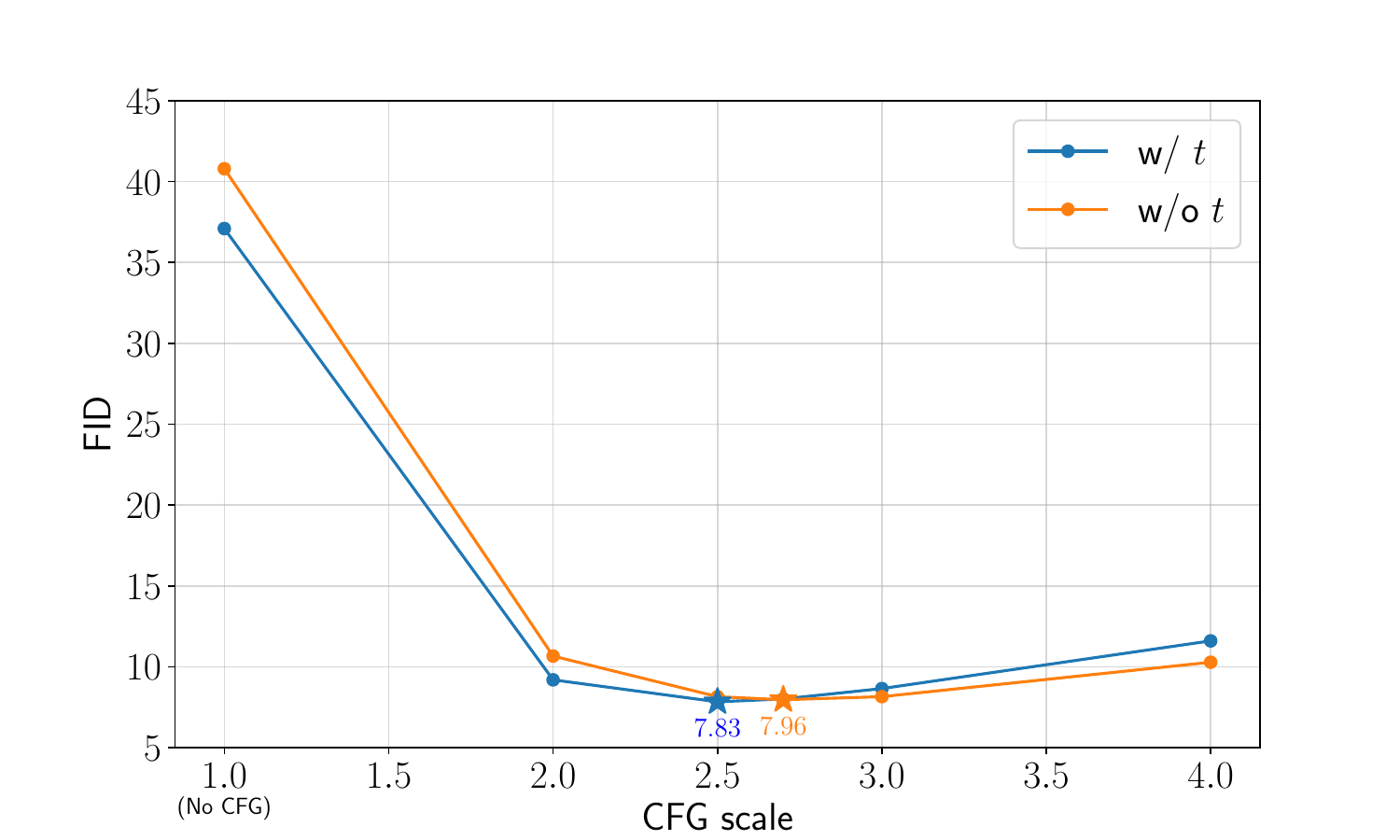}
    \caption{\textbf{Classifier-free guidance results for SiT-B/2 on ImageNet $256{\times}256$.}
    We use an Euler sampler with 250 steps. The best guidance scale in each setting is marked with a star.
    All other experimental details follow the original paper. Removing noise conditioning leads to almost no degradation regardless of guidance scale.
    }
    \label{tab:cfg}
\end{figure}

\section{Discussion and Conclusion}\label{sec:conclusion}

We hope that rethinking the role of noise conditioning will open up new opportunities. 
Modern diffusion models are closely related to Score Matching \cite{hyvarinen2005estimation,song2019ncsn,song2021scorebased}, which provides an effective solution to Energy-Based Models (EBM) \cite{hopfield1982neural,ackley1985learning,lecun2006tutorial,song2021train}. The key idea of EBM is to represent a probability distribution $p(x)$ by $p(x) = e^{-E(x)} / Z$, where $E(x)$ is the energy function. With the score function of $p(x)$ (that is, $\nabla_x E(x)$), one can sample from the underlying $p(x)$ by Langevin dynamics. This classical formulation models the data distribution $p(x)$ by a \textit{single} energy function $E(x)$ that is solely dependent on $x$. Therefore, a classical EBM is inherently $t$-unconditional. However, with the presence of $t$-conditioning, the sampler becomes \textit{annealed} Langevin dynamics \cite{song2019ncsn}, which implies \textit{a sequence} of energy functions $\{E(x, t)\}_t$ indexed by $t$, with one sampling step performed on each energy. Our study suggests that it is possible to pursue a \textit{single} energy function $E(x)$, aligning with the goal of classical EBM.

Our study also reveals that certain families of models, \eg, Flow Matching \cite{lipman2023flow,liu2023flow,albergo2023stochastic}, can be more robust to the removal of $t$-conditioning. Although these models are closely related to diffusion, they can be formulated from a substantially different perspective---estimating a flow field between two distributions. While these models can inherit the $t$-conditioning design of diffusion models, their native formulation does not require the flow field to be dependent on $t$. Our study suggests that there exists a \textit{single} flow field for these methods to work effectively.

In summary, noise conditioning has been predominant in modern denoising-based generative models and related approaches. We encourage the community to explore new models that are not constrained by this design.

\section*{Acknowledgements}

We greatly thank Google TPU Research
Cloud (TRC) for granting us access to TPUs.
Q. Sun, Z. Jiang, and H. Zhao are supported by the MIT Undergraduate Research Opportunities Program (UROP). 
We thank Cheng Jiang, Yiyang Lu, Mingyang Deng, Xingjian Bai, and Xianbang Wang for their comments and discussions.

\section*{Impact Statement}
This paper presents work whose goal is to advance the field of Machine Learning. There are many potential societal consequences of our work, none which we feel must be specifically highlighted here.

\bibliography{bibliography}
\bibliographystyle{icml2025}

\newpage
\appendix
\onecolumn
\setcounter{tocdepth}{3}
\tableofcontents
\allowdisplaybreaks
\begin{appendices}

\section{Details of Numerical Experiments}

In this section, we provide additional details on all our real dataset numerical experiments. By first computing the value of some relevant quantities (\eg the underlying time distribution $p(t|\rvz)$, effective target $R(\rvz|t), R(\rvz)$), we are able to evaluate $E(\rvz)$, which is average error introduced by removing noise conditioning. See \cref{app:compute_quant}.

As we introduce single data point assumption in our theoretical framework, we verify the accuracy of this assumption by comparing the empirical values of $p(t|\rvz)$ and $E(\rvz)$ with the theoretical values derived from our estimations. See \cref{app:numerical}.

Finally, in \cref{app:AB}, we show how we derive the numbers in \cref{fig:AB} by detailing on our estimation of bound values $A_i$ and $B_i$ in \cref{thrm:bound}.

\subsection{Computation of Relavent Quantities}\label{app:compute_quant}

We consider the data distribution $p_{\text{data}}$ constituted solely from the data points in the dataset: $p_{\text{data}}(\rvx) = \frac{1}{N}\sum_{i=1}^{N}\delta(\rvx-\rvx_i)$, where $\rvx_i\in \mbb{R}^{d}$ are the images in the dataset, and $\delta(\cdot)$ is the delta distribution. We denote $N$ as the number of data points in the dataset, and $d$ as the dimension of the image. 

\paragraph{Calculation of $p(t|\rvz)$ (\cref{subsec:ptz}).} 

First, we calculate $p(\rvz|t)$ by marginalizing over all the data points:
\begin{align}
    p(\rvz|t) = \int p(\rvz|t,\rvx)p(\rvx)\ud \rvx \notag = \frac{1}{N}\sum_{i=1}^N p(\rvz|t,\rvx_i).
\end{align}
The random variable $\rvz$ is given by \cref{eq:z_cal}, which implies
\begin{align}\label{eq:pztx_important}
    p(\rvz|t,\rvx) = \mcal{N}\left(\rvz;a(t)\rvx,b(t)^2\mI_d\right),
\end{align}
where $\mcal{N}(\rvz;\bm \mu,\mSigma)$ denotes the probability density function of the Gaussian distribution with mean $\bm \mu$ and covariance $\mSigma$. This leads to
\begin{align}
    p(\rvz|t) = \frac{1}{N}\sum_{i=1}^N \mcal{N}\left(\rvz;a(t)\rvx_i,b(t)^2\mI_d\right),
\end{align}
and we can finally obtain:
\begin{align}\label{eq:ptz_important}
    p(t|\rvz) =\frac{p(t)}{p(\rvz)}p(\rvz|t)= \frac{p(t)\sum\limits_{i=1}^N \mcal{N}\left(\rvz;a(t)\rvx_i,b(t)^2\mI_d\right)}{\displaystyle\int_{0}^1 p(t) \sum\limits_{i=1}^N \mcal{N}\left(\rvz;a(t)\rvx_i,b(t)^2\mI_d\right)\ud t}.
\end{align}
Note that there is an integral to evaluate in \cref{eq:ptz_important}. In practice, the calculation is performed in a two-step manner for a fixed $\rvz$. In the first step, we use a uniform grid of 100 $t$ values in $[0,1]$ (\ie $t=0.00, 0.01,\ldots, 0.99$). We calculate the value of $p(t)p(\rvz|t)$ for each $t$ value. 

Typically, we observe that within an interval $[l,r]$ (where $0\le l<r\le 1$), the value of $p(t)p(\rvz|t)$ is significantly larger than for other $t\in [0,1]$ \footnotemark. We then approximate the integral as:

\footnotetext{Actually, this exactly matches our observation that $p(t|\rvz)$ is concentrated, since $p(t|\rvz)\propto p(t)p(\rvz|t)$ for a fixed $\rvz$.}
\begin{align}
    \int_{0}^1 p(t) \sum\limits_{i=1}^N \mcal{N}\left(\rvz;a(t)\rvx_i,b(t)^2\mI_d\right)\ud t \approx \int_{l}^r p(t) \sum\limits_{i=1}^N \mcal{N}\left(\rvz;a(t)\rvx_i,b(t)^2\mI_d\right)\ud t.
\end{align}
In the second step, we evaluate the integral by using a uniform grid of 100 $t$ values in $[l,r]$. This two-step procedure effectively reduces computational costs while maintaining low numerical error.

\paragraph{Calculation of $R(\rvz|t)$ and $R(\rvz)$ (\cref{subsec:target}).} 
By definition,
\begin{align}
    R(\rvz|t) := \mathbb{E}_{\rvx,\rvepsilon \sim p(\rvx,\rvepsilon|\rvz,t)} [r(\rvx, \rvepsilon, t)] \notag = \mathbb{E}_{\rvx\sim p(\rvx|\rvz,t)} \left[c(t)\rvx + d(t)\frac{\rvz-a(t)\rvx}{b(t)}\right].
\end{align}
Notice that $p(\rvx|\rvz,t)=\dfrac{p(\rvx)}{p(\rvz|t)}p(\rvz|\rvx,t)$, and $p(\rvz|\rvx,t)$ is given in \cref{eq:pztx_important}. Consequently, we have
\begin{align}\label{eq:Rzt_important}
    R(\rvz|t) = \frac{\frac{1}{N}\sum\limits_{i=1}^{n}p(\rvz|\rvx_i,t)\left[c(t)\rvx_i + d(t)\frac{\rvz-a(t)\rvx_i}{b(t)}\right]}{\frac{1}{N}\sum\limits_{i=1}^{n}p(\rvz|\rvx_i,t)}=\frac{d(t)}{b(t)}\rvz + \left(c(t)-\frac{a(t)d(t)}{b(t)}\right) \frac{\sum\limits_{i=1}^{n}\mcal{N}\left(\rvz;a(t)\rvx_i,b(t)^2\mI_d\right)\rvx_i}{\sum\limits_{i=1}^{n}\mcal{N}\left(\rvz;a(t)\rvx_i,b(t)^2\mI_d\right)},
\end{align}
which can be then explicitly calculated by scanning over all the data points $\rvx_i$.

Once we obtain $R(\rvz|t)$, using \cref{theorem:effective2} and $p(t|\rvz)$, $R(\rvz)$ can be calculated by
\begin{align}\label{eq:Rz_important}
    R(\rvz)= \mathbb{E}_{t\sim p(t|\rvz)}[R(\rvz|t)] = \int_0^1 p(t|\rvz)R(\rvz|t)\ud t.
\end{align}

For the integration, we utilize the selected time steps in $[l,r]$ that were used when computing $p(t|\rvz)$. On another word, we ignore the parts where $p(t|\rvz)$ is negligible.

\paragraph{Calculation of $E(\rvz)$ (\cref{subsec:error_effective}).}

$E(\rvz)$ can be computed simply utilizing $p(t|\rvz), R(\rvz|t)$ and $R(\rvz)$:
\begin{align}\label{eq:Ez_important}
    E(\rvz) := \mbb{E}_{t\sim p(t|\rvz)}\|R(\rvz,t)-R(\rvz)\|^2 = \int_0^1 p(t|\rvz)\|R(\rvz|t)-R(\rvz)\|^2\ud t.
\end{align}

We again use the same time steps for estimating the integral term and reuse the terms of $p(\rvz|\rvx_i,t)$. This ensures computational efficiency while maintaining accuracy.

\subsection{Numerical Experiments}\label{app:numerical}

\paragraph{Verification of the Single Data Point Assumption.} 

Recall that \cref{approx:var,approx:error} assume that the dataset contains a single data point. In this section, we conduct numerical experiments on CIFAR-10 dataset to demonstrate that this assumption provides a reasonable approximation of the variance of $p(t|\rvz)$ and the error between the effective targets in practice. 

For both $p(t|\rvz)$ and $E(\rvz)$, we calculate their values by scanning the entire dataset as shown in the previous section, and compare them with our estimated theoretical values. 

For the CIFAR-10 dataset, we have $N=50000$ and $d=3\times 32^2 = 3072$, from which we can derive the estimated values of $p(t|\rvz)$ and $E(\rvz)$ in \cref{tab:error_comparison}.

As for empirical calculation, we compute the desired values via Monte Carlo sampling. Specifically, we select 5 time levels $t_*=0.1, 0.3, 0.5, 0.7, 0.9$. For each $t_*$, we sample $M=25$ noisy images $z_j$ by $\rvz_j = a(t_{*})\rvx_{I_j} + b(t_{*})\rvepsilon_j, j=1,2,\ldots, M$. Here, $\rvepsilon_j$ are independent samples from $\mcal{N}(\bm 0, \mI_d)$, and $I_j$ are independent random integers from $[1,N]$. We then compute $\mathrm{Var}_{t\sim p(t|\cdot)}[t]$, $\|R(\cdot)\|^2$ and $E(\cdot)$ for each $\rvz_j$ as we specified in \cref{app:compute_quant}. Finally, we average the $M$ values to obtain the empirical values along with their statistical uncertainties. Results are shown in \cref{tab:error_comparison}.

\begin{table}[t]
    \caption{The variance and $E$ values on the CIFAR-10 dataset. The empirical values are calculated by scanning the entire dataset, while the (theoretical) estimated values are derived from \cref{approx:var,approx:error}. For reference, the values for $\|R(\rvz)\|^2$ are also included to illustrate that $E(\rvz)$ is significant smaller than $\|R(\rvz)\|^2$. The results show that our approximation is generally accurate, except for the $E$ value in the very noisy case, where the single data point approximation becomes less accurate.}
    \label{tab:error_comparison}
    \centering
    \footnotesize

    \renewcommand{\arraystretch}{1.5}
    \begin{tabular}{|c|c|c|c|c|c|}
            \hline
            \rule{0pt}{2ex}
            $t_*$ & \multicolumn{2}{c|}{$\mathrm{Var}_{t\sim p(t|\rvz)}[t]$} & \multicolumn{2}{c|}{$E(\rvz)$} & $\|R(\rvz)\|^2$ \\
            \hline
                & Empirical ($\times 10^{-4}$) & Estimation ($\times 10^{-4}$) & Empirical & Estimation & Empirical  \\
            \hline
            0.1 & $0.0143\pm 0.0002$ & $0.0163$ & $0.558\pm0.005$ & $0.628$ & $3894\pm 87$ \\
            0.3 & $0.1280\pm 0.0002$ & $0.1465$ & $0.561\pm0.006$ & $0.628$ & $3953\pm 102$ \\
            0.5 & $0.3695\pm 0.0004$ & $0.4069$ & $0.556\pm0.006$ & $0.628$ & $3878\pm 108$ \\
            0.7 & $0.7008\pm 0.0010$ & $0.7975$ & $0.564\pm0.005$ & $0.628$ & $3968\pm 88$ \\
            0.9 & $1.3085\pm 0.0007$ & $1.3184$ & $1.822\pm0.245$ & $0.628$ & $3310\pm 71$ \\
            \hline
    \end{tabular}
\end{table}

\cref{tab:error_comparison} shows that our estimations closely align with the observed values, except when $t$ gets very close to 1 (\ie in highly noisy images), where the single data point approximation becomes less precise. However, even in these cases, the estimated values remain within the same order of magnitude, providing acceptable explanations for the concentration of $p(t|\rvz)$ and the small error between the two effective targets.

\paragraph{Visualization of $p(t|\rvz)$.} 

We plot the value of $p(t|\rvz)$ in \cref{fig:ptz} for one $\rvz$ and $t_*$ from 0.1 to 0.9. This is carried out exactly in the same manner as the variance calculation for $t$, but with AFHQ-v2 dataset at $64{\times}64$ resolution for a better visualization quality. \cref{fig:ptz} functions as a reliable visual verification of the concentration of $p(t|\rvz)$.

\subsection{Evaluation of the Bound Values}\label{app:AB}

In this section, we provide additional experiment details on how we compute the bound values and present the plot of the bound terms $A_iB_i$ in \cref{fig:AB}. For reference, we also include separate plots for $A_i$ and $B_i$ in \cref{fig:A,fig:B} \footnotemark.

\footnotetext{An interesting fact in \cref{fig:B} is that, for EDM, there is a ``phase change'' at around $i=50$, which is caused by a non-smooth $\delta$ value. We hypothesize that this transition occurs at a noise level high enough that the data distribution can no longer be approximated as a point distribution, leading to a noticeable shift in behavior.}

\begin{figure}[h]
    \centering
    \begin{subfigure}[b]{0.45\linewidth}
        \centering
        \includegraphics[width=\linewidth]{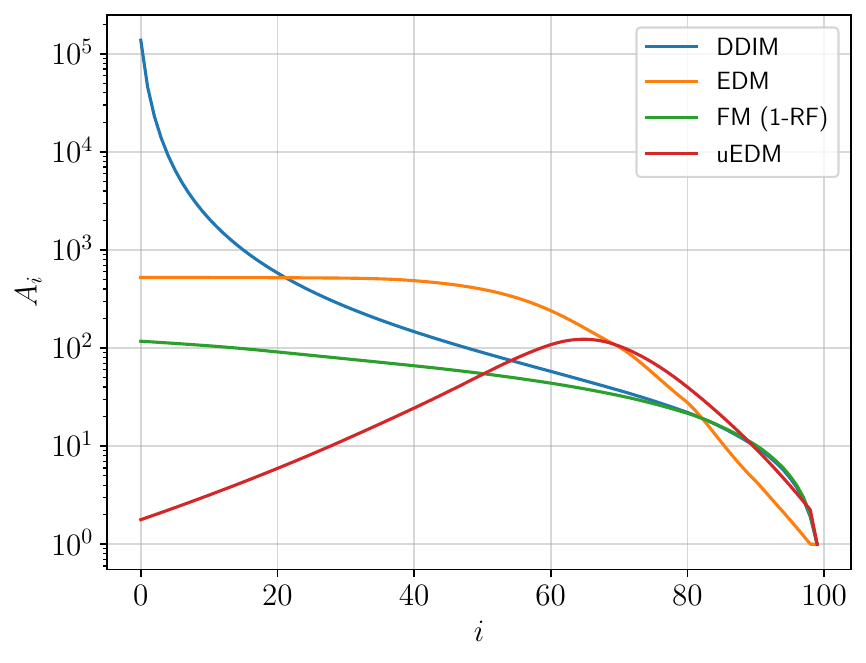}
        \caption{The values of $A_i$ for different denoising generative models.}
        \label{fig:A}
    \end{subfigure}
    \hfill
    \begin{subfigure}[b]{0.45\linewidth}
        \centering
        \includegraphics[width=\linewidth]{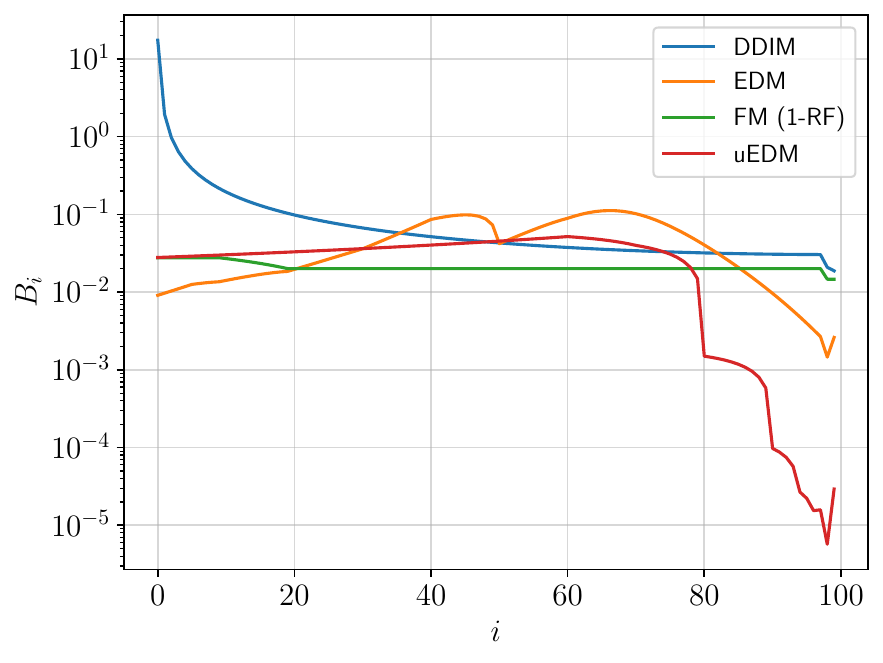}
        \caption{The values of $B_i$ for different denoising generative models.}
        \label{fig:B}
    \end{subfigure}
    \caption{The bound applied on DDIM, EDM, FM and our uEDM model with a first-order ODE sampling process of $N=100$ steps. The figures visualize the different terms $A_i$ and $B_i$ in the bound.}
\end{figure}

Recall that in \cref{thrm:bound}, we define $A_i$ and $B_i$ as
\begin{align}
    A_i = \prod_{j=i+1}^{N-1}(\kappa_i+|\eta_i|L_i), B_i=|\eta_i|\delta_i.
\end{align}
Since $\kappa_i$ and $\eta_i$ are already given by the configurations for each model (see \cref{tab:coefficients}), we only have to evaluate $L_i$ and $\delta_i$. \cref{thrm:bound} requires the following condition to hold:
\begin{align}\label{eq:def_of_delta_L}
    \begin{cases*}
        \|R(\rvx_i')-R(\rvx_i'| t_i)\|\le \delta_i \\
        \dfrac{\|R(\rvx_i'| t_i)-R(\rvx_i| t_i)\|}{\|\rvx_i'-\rvx_i\|}\le L_i
    \end{cases*}
\end{align}

Now we are going to pick reasonable values of $\delta_i$ and $L_i$. As mentioned in \cref{subsec:final}, it is unrealistic for this assumption to hold exactly in real data due to bad-behaviors of the effective target $R$ when the noisy image is close to pure noise or pure data. However, we aim to make the conditions hold with high probability instead of considering worst case. As a result, our choice of $\delta_i, L_i$ corresponds to \textit{high probability case}.

\paragraph{Estimation of $\delta_i$.} 

We estimate $\delta_i$ using a maximum among different samples:
\begin{align}
    \delta_i = \max_{j} \|R(\rvz_{i,j}|t_i)-R(\rvz_{i,j})\|.
\end{align}
where we sample 10 different $\rvz$ from $p(\rvz|t)$. $R(\rvz|t)$ and $R(\rvz)$ values are computed as specified in \cref{app:compute_quant}. We use a \textit{maximum} value across different samples to ensure that the condition holds with high probability.

\paragraph{Estimation of $L_i$.} 

The condition of $L_i$ is similar to ``Lipchitz constant'' of $R(\cdot|t_i)$. Inspired by this, we evaluate the value of $\dfrac{\|R(\rvx_i'| t_i)-R(\rvx_i| t_i)\|}{\|\rvx_i'-\rvx_i\|}$ for $\rvx_i$ and $\rvx_i'$ that are close to each other. 

To model this, we sample $\rvx_i$ from $p(\rvz|t_i)$, and let $\rvx_i'=\rvx_i+\delta\tilde\rvepsilon$. Here, we pick $\delta=0.01$, and $\tilde\rvepsilon\sim \mcal N(\bm0, \mI)$ represents a random direction, which serves as a \textit{first-order estimation}.

Based on this, we sample 10 different pairs of $\rvx_i$ and $\rvx_i'$ for each $t_i$, and evaluate the max value of the ``Lipchitz constant''. In another word, we are calculating
\begin{align}
    L_i = \max_{j} \frac{\left\|R(\rvz_{i,j} + \delta\tilde{\rvepsilon}_{j}|t_i)-R(\rvz_{i,j}|t_i)\right\|}{\delta\|\tilde{\rvepsilon}_j\|}
\end{align}
for $j=1, 2, \ldots, 10$. Again, here we use the \textit{maximum} value across different samples to ensure that the condition holds with high probability.

\paragraph{Bound Values of the uEDM Model.} It is worth noticing that our uEDM model (see \cref{p:edmv1}) has a non-constant weighting $w(t)\ne 1$, which doesn't match our assumption when deriving the effective target $R(\rvz|t)$ and $R(\rvz)$ (as we did in \cref{subsec:target}). However, we choose not to introduce more mathematical complexities, and instead use the formulas above to calculate the bound value for uEDM. This implies that the bound for uEDM is no longer mathematically strict, but it can still serve as a reasonable intuition for the choice of our uEDM configuration.

\paragraph{Absolute Magnitude of the Bound Values.} As shown in \cref{fig:AB}, one might observe that the magnitude of the bound (around $10^2$ to $10^6$) is actually significantly larger compared to the typical magnitude of $\|\rvx_N\|$ (which is around $\sigma_{\ud}\sqrt{d}\sim 10^1$). We hypothesize that this is because of the following two reasons: 

(1) We are assuming the error accumulating on each step, which might not be the case in practice, as studied in our discussion of SDE samplers in \cref{subsec:analysis}. 

(2) When the noisy image approaches clean data, some properties of the effective target $R$ become bad (\eg the Lipchitz constant $L_i$ will be very big). This leads to a large error estimation, but in real cases, as the neural network will smooth the learned function, the error is typically smaller. If we consider ignoring the last 10 steps in our bound value, the bound will be in a reasonable range (approximately 10 for FM and uEDM, 140 for EDM, and $>10^5$ for DDIM).

\section{Additional Experimental Details}\label{app:exp}

\subsection{General Experiment Configurations}\label{app:general_exp}

We implement our main code base using Google TPU and the JAX \cite{jax2018github} platform, and run most of our experiments on TPU v2 and v3 cores. As the official codes are mostly provided as GPU code, we re-implemented most of the previous work in JAX. For the faithfulness of our re-implementation, please refer to \cref{app:faithfulness}.

\paragraph{FID Evaluation.} For evaluation of the generative models, we calculate FID \cite{heusel2017FID} between 50,000 generated images and all available real images without any augmentation. We used the pre-trained Inception-v3 model provided by StyleGAN3 \cite{karras2021stylegan3} and converted it into a model class compatible with JAX. As we have reproduced most of the results in \cref{app:faithfulness}, we believe that our FID calculation is reliable.

\paragraph{Noise Conditioning Removal.} When we refer to ``removing noise conditioning'', technically we set the scalar before passing into the time-embedding to zero. Alternatively, we can also set the embedded time vector to zero. The results turn out to have negligible differences.

\paragraph{iDDPM ($\rvx$-pred).}

We design a $\rvx$-prediction version of iDDPM to show the generalizability of our theoretical framework. During training time, we simply change the target $r(\rvx,\rvepsilon,t)$ or $r(\rvx,\rvepsilon)$ to be $\rvx$. The sampling algorithm has to be modified accordingly, and we directly translate the $\rvx$-prediction to $\rvepsilon$-prediction by \cref{eq:z_cal}.

\paragraph{ADM.} In the original work of ADM, \citet{dhariwal2021diffusion} don't provide result on the CIFAR-10 dataset in class-unconditional settings. In our implementation of ADM, we keep the main method of learning $\rvepsilon$-prediction and the variance $\mSigma$ simultaneously, but employ it on the class-unconditional CIFAR-10 task. Notice that this ADM formulation is also \textit{not} included in our theoretical framework, but it still gives a reasonable result after removing noise conditioning (see \cref{tab:exp}).

\paragraph{Hyperparameters.}

A table of selected important hyperparameters can be found in \cref{tab:hyper}. For ICM and ECM we use the RAdam \cite{Liu2020On} optimizer, while for all other models we use the Adam \cite{kingma2014adam} optimizer. Also, we set the parameter $\beta_2$ to $0.95$ to stabilize the training process.

For all CIFAR-10 experiments, we used the architecture of NCSN++ in \citet{song2021scorebased}, with 56M parameters. For the ImageNet 32${\times}$32 experiment, we used the same architecture but a larger scale, with a total of 210M parameters. For experiments on classifier-free guidance on ImageNet 256${\times}$256, we strictly follow the SiT-B/2 model with 400k training steps in \citet{sit2024}.

We highlight that for all experiments, we only tune hyperparameters on the noise-conditional model, and then \textit{directly use exactly the same hyperparameters} for the noise-unconditional model and don't perform any further hyperparameter tuning. Thus, we expect that tuning these hyperparameters may further improve the performance of the noise-unconditional model.

\paragraph{Class-Conditional Generation on CIFAR-10.}

For the class-conditional CIFAR-10 experiments, we use exactly the same configurations and hyperparameters of EDM and FM with the unconditional generation case, except that we train the network with labels. For the conditioning on labels, we use the architecture in \citet{karras2022edm}. We do not apply any kind of guidance at inference time.

\paragraph{FFHQ $64{\times}64$ Experiments.}

For FFHQ $64{\times}64$ experiments, we directly use the code provided by \citet{karras2022edm} and run it on 8 H100 GPUs. We keep all hyperparameters the same as the original code in the experiments. For the removal of noise-conditioning, we simply set the $c_{\text{noise}}$ variable in the code to zero.

\begin{table}[ht] 
    \caption{Selected important hyperparameters in our main experiments. }\label{tab:hyper} 
    \centering 
    \begin{adjustbox}{max width=\linewidth}
    \renewcommand{\arraystretch}{1.5}
    \scriptsize
    \begin{tabular}{l|ccccccc} 
        \Xhline{3\arrayrulewidth} 
        Experiment & Duration & Warmup Epochs & Batch Size & Learning Rate & EMA Schedule & EMA Half-life Images & Dropout \\ 
        \hline 
        iDDPM \& ADM & 100M & 200 & 2048 & $8{\times} 10^{-4}$ & EDM & 50M & 0.15 \\ 
        iDDPM($\rvx$-pred) & 200M & 200 & 2048 & $1.2{\times} 10^{-3}$ & EDM & 50M & 0.15 \\ 
        DDIM & 100M & 200 & 512 & $4{\times}10^{-4}$ & EDM & 50M & 0.1 \\ 
        FM & 100M & 200 & 2048 & $8{\times}10^{-4}$ & EDM & 50M & 0.15 \\ 
        FM ImageNet32 & 256M& 64 & 2048 & $2{\times}10^{-4}$ & EDM & 50M & 0 \\ 
        EDM & 200M & 200 & 512 & $1{\times}10^{-3}$ & EDM & 0.5M & 0.13 \\ 
        uEDM (ours) & 200M & 200 & 512 & $4{\times}10^{-4}$ & EDM & 0.5M & 0.2 \\ 
        ICM \& ECM & 400M & 0 & 1024 & $1{\times}10^{-4}$ & Const(0.99993)& - & 0.3 \\ 
        SiT-B/2 & 100M & 0 & 256 & $1{\times}10^{-4}$ & Const(0.9999)& - & 0 \\ 
        \Xhline{3\arrayrulewidth} 
    \end{tabular} 
    \end{adjustbox} 
\end{table}
    
\subsection{Special Experiments}\label{app:general_config}

This section covers the specific experiment details in \cref{subsec:analysis}.

\paragraph{DDIM-iDDPM Interpolate Sampler.}

In the analysis of stochasticity, we examine VP diffusion models with cosine and linear $\bar{\alpha}(t)$ schedule with and without noise conditioning, using a customized interpolate sampler featured by $\lambda$, which is given by \cref{eq:ddim_coeff_lmd}. For the cosine schedule, we use 500 sampling steps, while for the linear schedule, we use 100 sampling steps. As discussed in \cref{app:ddim_coeff}, when $\lambda=1$, the model with the cosine schedule has the same setting as ``iDDPM'' in \cref{tab:exp}; when $\lambda=0$, the model with the linear schedule has the same setting as ``DDIM ODE 100'' in \cref{tab:exp}.

Results for the experiment are shown in \cref{tab:lambda_ablation}, and the result for the cosine schedule model is visualized in \cref{fig:interpolate}. From \cref{tab:lambda_ablation} one can also find a consistent trend that as $\lambda$ increase, the degradation of FID becomes smaller for the noise-unconditional model, regardless of the specific schedule of $\bar{\alpha}(t)$.

\begin{table}[t]
    \caption{Comparison of inference performance between noise-conditional and noise-unconditional models. The \emph{left} panel uses a cosine noise schedule, while the \emph{right} panel uses a linear noise schedule. Both panels compare the performance of noise-conditional and noise-unconditional settings across different values of the ablation sampler coefficient $\lambda$, ranging from 0.0 to 1.0.}\label{tab:lambda_ablation}
    {\setlength{\extrarowheight}{1.5pt}}

    \scriptsize
        \centering
        \begin{tabular}{l|cr|cr}
            \hline
            \toprule    
                & \multicolumn{2}{c|}{cosine schedule} & \multicolumn{2}{c}{linear schedule}\\
            $\lambda$ & FID & Change & FID & Change\\
            \midrule
            0.0  & 2.98 $\to$ 34.56 & +31.58 & \textbf{3.99} $\to$ 40.90 & +36.91\\
            0.2  &  2.60 $\to$ 30.34 & +27.74 &  4.09 $\to$ 36.04 & +31.95\\
            0.4  & \textbf{2.52} $\to$ 21.33 & +18.81 & 4.45 $\to$ 28.08 & +23.63\\
            0.6  & 2.59 $\to$ 12.47 & +$\ \ $9.88 & 4.95 $\to$ 18.32 & +13.37\\
            0.8  & 2.77 $\to$ $\ \ $7.53 & +$\ \ $4.76 & 5.90 $\to$ 10.36 & +$\ \ $4.46\\
            1.0 & 3.13 $\to$ $\ \ $\textbf{5.51} & \textbf{+$\ \ $2.38}& 8.07 $\to$ \textbf{10.85} & \textbf{+$\ \ $2.78} \\
            \bottomrule
        \end{tabular}
\end{table}

\paragraph{Alternative Architectures: Noise Level Predictor and Noise-like Condition.}

In our experiment of Noise Level Predictor, we train a very lightweight network to predict the noise level $t$ given the input $\rvz$. To be specific, our predictor network only contains two convolutional layers with relu activation, followed by a global average pooling layer and a linear layer. The network has no more than 30K parameters, so it hardly affects the expressiveness of the whole model.

It's worth noticing that directly predicting the input $t$ is usually not desirable, as the value of $t$ may have different ranges for different models. Instead, for each specific model we choose a customized target for the prediction.

Training objectives for different models are shown below ($P_{\vphi}$ represents the predictor network):

\begin{itemize}

\item FM:
$
    \mcal{L} = \mbb{E}_{\rvx\sim p_{\text{data}}, \rvepsilon\sim\mcal N(\bm 0, \mI)}\mbb E_{t\sim \mcal U[0, 1]}\left[P_{\vphi}((1-t)\rvx+t\rvepsilon)-t\right]^2.
$ 

\item VP models:
$
    \mcal{L} = \mbb{E}_{\rvx\sim p_{\text{data}}, \rvepsilon\sim\mcal N(\bm 0, \mI)}\mbb E_{t\sim \mcal U[0, 1]}\left[P_{\vphi}(\sqrt{1-t}\rvx+\sqrt t\rvepsilon)-t\right]^2.
$

\item EDM models:
$
    \mcal{L} = \mbb{E}_{\rvx\sim p_{\text{data}}, \rvepsilon\sim\mcal N(\bm 0, \mI)}\mbb E_{t\sim \mcal \exp\mcal N(-1.2, 1.2^2)}\left[\frac{1}{P_{\vphi}(c_{\text{in}}(t)(\rvx+t\rvepsilon))+1}-\frac{1}{1+t}\right]^2.
$

\end{itemize}
Here, for EDM, we apply a transformation $y\to \frac{1}{1+y}$, mapping the original noise level $t\in (0,+\infty)$ in EDM to $[0, 1]$.

Experimentally, the MSE loss for all these settings all have a magnitude on the order of $10^{-4}$, which means that even our very lightweight predictor can predict the noise level with a mean error of about $0.01$.

In our Noise-like Conditioning experiment, the same lightweight network as mentioned above is connected to a noise-conditional U-Net, as visualized in \cref{fig:joint_explain}. This joint training architecture is noise-unconditional. The main difference between this experiment and the ``Noise Level Predictor'' experiment is that there is \textit{no supervision} on the intermediate output, so it may not be the noise level $t$ itself. 

For the experiments in \cref{fig:four_variants} column (b) and (c), we again use the same set of hyperparameters and configurations as the noise-conditional and noise-unconditional experiments (in columns (a) and (d)), to ensure a fair comparison. However, a subtle detail is that in the implementation of iDDPM in \cref{fig:four_variants}, the results for noise-conditional and noise-unconditional experiments are not the same as the results in \cref{tab:exp}. This is due to that we use $T=500$ instead of $T=4000$ for the training process.

\subsection{Our Reimplementation Faithfulness}\label{app:faithfulness}

As we have mentioned, we reimplement most of the models in the platform of JAX and TPU. 
\cref{tab:faithfulness} shows the comparison of our reimplementation and the original reported results.

For some models, the reproduction doesn't meet our expectations. For example, since the official code for iCT is currently not open-sourced, we can only follow all configurations mentioned in their work, and get a best FID score of 2.59 (compared with the originally reported score of 2.46). For EDM on FFHQ-64, we directly run the given official code with the VP configuration on 8 H100 GPUs, but still can't reproduce the result. We suspect that the difference may come from random variance or different package versions used in the experiments. 

Note that iDDPM \cite{nichol2021iddpm} only reports results with 1000 and 4000 training steps. Due to computational constraints, we only evaluate the model with 500 sampling steps. Since our 500-step result has already been better than the 1000-step result reported in their work, we believe that we successfully reproduced the model.

After all, our goal is to compare the performance of noise-conditional and noise-unconditional models, and the absolute performance is not the focus of our work. Thus, even though we haven't fully reproduced some of the results, we believe that our comparison is still meaningful.

\begin{table}[t]
    \caption{Comparison of our reimplementation and the original reported results.}\label{tab:faithfulness}
    \centering
    \scriptsize
    \begin{tabular}{lcccc}
        \toprule
        \multirow{2}{*}{Model} & \multirow{2}{*}{Sampler} & \multirow{2}{*}{NFE} & \multicolumn{2}{c}{FID}  \\
            & & &  Reproduced & Original \\
        \midrule
        \multicolumn{4}{l}{\textbf{CIFAR-10}}\\\Xhline{3\arrayrulewidth}
        \\[-1ex]
        \multirow{3}{*}{iDDPM \cite{nichol2021iddpm}} & \multirow{3}{*}{-} & 500 & {3.13} &  - \\
        & & 1000 & - & 3.29 \\
        & & 4000 & - & 2.90 \\
        \midrule
        \multirow{2}{*}{DDIM \cite{song2021ddim}} & ODE & 100 & 3.99 & 4.16 \\
            & ODE & 1000 & 2.85 & 4.04 \\
        \midrule
        EDM \cite{karras2022edm} & Heun & 35 & 1.99 & 1.97 \\
        \midrule
        {1-RF \cite{liu2023flow}}  & RK45 & $\sim$127 & {2.53} & 2.58 \\
        \midrule
        iCT \cite{song2024improved} & - & 2 & 2.59 & 2.46  \\
        \\[-0.5ex]
        \multicolumn{4}{l}{\textbf{CIFAR-10 Class-conditional}}\\\Xhline{3\arrayrulewidth}
        \\[-1ex]
        EDM \cite{karras2022edm} & Heun & 35 & 1.79 & 1.76 \\
        \\[-0.5ex]
        \multicolumn{4}{l}{\textbf{ImageNet 32$\times$32}}\\\Xhline{3\arrayrulewidth}
        \\[-1ex]
        FM \cite{lipman2023flow} & Euler & 100 & 5.15 & -  \\
        FM \cite{lipman2023flow} & RK45 & $\sim$125 & 4.30 & 5.02  \\
        \\[-0.5ex]
        \multicolumn{4}{l}{\textbf{FFHQ 64$\times$64}}\\\Xhline{3\arrayrulewidth}
        \\[-1ex]
        EDM \cite{karras2022edm} & Heun & 79 & 2.64 & 2.39\\
        \bottomrule
    \end{tabular}
\end{table}

\section{Supplementary Theoretical Details}\label{app:proofs}

\subsection{Proof of the Effective Target}\label{app:effective}

\begin{theorem}\label{theorem:effective1}
The original regression loss function with $t$ condition shown in \cref{eq:gs_loss} with $w(t)=1$
\begin{align*}
    \mcal{L}(\vtheta) = \mbb{E}_{\rvx,\rvepsilon,t}\Big[\big\|\net_{\vtheta}(\rvz|t)-r(\rvx,\rvepsilon,t)\big\|^2\Big]
\end{align*}
is equivalent to the loss function with the effective target shown in \cref{eq:eff_loss_wt}
\begin{align*}
    \mcal{L}'(\vtheta) = \mbb{E}_{\rvz \sim p(\rvz), t \sim p(t|\rvz) }\Big[\big\|\net_{\vtheta}(\rvz|t)-R(\rvz|t)\big\|^2\Big]
\end{align*}
only up to a constant term that is independent of $\vtheta$, where 
\begin{align*}
    R(\rvz|t) = \mbb{E}_{(\rvx, \rvepsilon) \sim p(\rvx, \rvepsilon | \rvz, t)} \big[ r(\rvx,\rvepsilon,t) \big].
\end{align*}
Here, $p(\rvz)$ is the marginalized distribution of $\rvz{:=}a(t)\rvx + b(t)\rvepsilon$ in \cref{eq:z_cal}, under the joint distribution $p(\rvx, \rvepsilon, t):=p(\rvx)p(\rvepsilon)p(t)$.
\end{theorem}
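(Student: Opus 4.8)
The plan is to prove the identity by the standard bias–variance (orthogonality) decomposition, conditioning on the pair $(\rvz, t)$. The structural fact that makes this work is that the network output $\net_{\vtheta}(\rvz|t)$ depends on the training triplet $(\rvx,\rvepsilon,t)$ \emph{only} through $(\rvz,t)$, because $\rvz = a(t)\rvx + b(t)\rvepsilon$ is a deterministic function of the triplet. Consequently, inside any expectation that conditions on $(\rvz,t)$, the vector $\net_{\vtheta}(\rvz|t)$ is constant and may be treated as fixed, which is exactly what lets the decomposition go through.

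First I would rewrite the original objective as an iterated expectation using the tower property, grouping the randomness into an outer variable $(\rvz,t)$ and an inner variable $(\rvx,\rvepsilon)\sim p(\rvx,\rvepsilon\mid\rvz,t)$:
\begin{align*}
    \mcal{L}(\vtheta) = \mbb{E}_{(\rvz,t)}\Big[\, \mbb{E}_{(\rvx,\rvepsilon)\sim p(\rvx,\rvepsilon\mid\rvz,t)}\big[\,\|\net_{\vtheta}(\rvz|t)-r(\rvx,\rvepsilon,t)\|^2\,\big]\Big].
\end{align*}
Next, for fixed $(\rvz,t)$, I would split $r$ about its conditional mean $R(\rvz|t)=\mbb{E}[r\mid\rvz,t]$ and expand the square,
\begin{align*}
    \|\net_{\vtheta}(\rvz|t)-r\|^2 = \|\net_{\vtheta}(\rvz|t)-R(\rvz|t)\|^2 + 2\big\langle \net_{\vtheta}(\rvz|t)-R(\rvz|t),\, R(\rvz|t)-r\big\rangle + \|R(\rvz|t)-r\|^2 .
\end{align*}
Taking the inner expectation, the cross term vanishes because $\mbb{E}[\,R(\rvz|t)-r \mid \rvz,t\,]=\vzero$ by the very definition of $R(\rvz|t)$ as the conditional mean, while the first term is unchanged since both $\net_{\vtheta}(\rvz|t)$ and $R(\rvz|t)$ are constant given $(\rvz,t)$.

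Finally, I would reassemble the outer expectation and use the factorization $p(\rvz,t)=p(\rvz)\,p(t\mid\rvz)$. The first surviving term reproduces exactly $\mcal{L}'(\vtheta)$, while the second term, $\mbb{E}_{(\rvz,t)}\,\mbb{E}_{(\rvx,\rvepsilon)\mid\rvz,t}\,\|R(\rvz|t)-r\|^2$, involves only $r$ and its conditional mean and hence does not depend on $\vtheta$; this is the claimed additive constant. The main point requiring care is the measure-theoretic justification of the conditioning step: the joint law of $(\rvx,\rvepsilon,t,\rvz)$ is degenerate, supported on the graph $\rvz=a(t)\rvx+b(t)\rvepsilon$, so I would either invoke the disintegration theorem to define the regular conditional distribution $p(\rvx,\rvepsilon\mid\rvz,t)$, or, more concretely, appeal to the explicit Gaussian form of \cref{eq:pztx_important} to exhibit the conditional density directly. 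Once the conditioning is legitimized, everything reduces to the elementary orthogonality identity above.
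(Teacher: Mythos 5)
Your proposal is correct and follows essentially the same route as the paper's proof: both rewrite $\mcal{L}(\vtheta)$ as an iterated expectation over $(\rvz,t)$ and then $(\rvx,\rvepsilon)\sim p(\rvx,\rvepsilon\mid\rvz,t)$, and apply the conditional bias--variance (orthogonality) decomposition, with the conditional variance of $r$ supplying the $\vtheta$-independent constant. Your explicit expansion of the cross term and the remark on disintegration are just more detailed renderings of the same argument.
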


\begin{proof}

The original regression loss function with $t$ condition can be rewritten as
\begin{align}
    \mcal{L}(\vtheta) &= \mbb{E}_{\rvz\sim p(\rvz), t\sim p(t|\rvz)}\mbb E_{(\rvx, \rvepsilon) \sim p(\rvx, \rvepsilon | \rvz, t)}\Big[\big\|\net_{\vtheta}(\rvz|t)-r(\rvx,\rvepsilon,t)\big\|^2\Big] \notag \\
    &=\mbb{E}_{\rvz\sim p(\rvz), t\sim p(t|\rvz)}\Big[\|\net_{\vtheta}(\rvz|t)-\mbb{E}_{(\rvx, \rvepsilon) \sim p(\rvx, \rvepsilon | \rvz, t)} \big[ r(\rvx,\rvepsilon,t) \big]\big\|^2+\mbb V_{(\rvx, \rvepsilon) \sim p(\rvx, \rvepsilon | \rvz, t)}\big[ r(\rvx,\rvepsilon,t) \big]\Big] \notag \\
    &=\mbb{E}_{\rvz\sim p(\rvz), t\sim p(t|\rvz)}\Big[\|\net_{\vtheta}(\rvz|t)-R(\rvz|t)\big\|^2+\textit{const}\Big] \notag \\
    &=\mbb{E}_{\rvz\sim p(\rvz), t\sim p(t|\rvz)}\Big[\|\net_{\vtheta}(\rvz|t)-R(\rvz|t)\big\|^2\Big]+\textit{const}=\mcal L'(\vtheta)+\textit{const}.
\end{align}
This finishes the proof.
\end{proof}

\begin{theorem}\label{theorem:effective2}
The original regression loss function without noise conditioning
\begin{align*}
    \mcal{L}(\vtheta) = \mbb{E}_{\rvx,\rvepsilon,t}\Big[\big\|\net_{\vtheta}(\rvz)-r(\rvx,\rvepsilon,t)\big\|^2\Big]
\end{align*}
is equivalent to the loss function with the effective target shown in \cref{eq:eff_loss_wot}
\begin{align*}
    \mcal{L}'(\vtheta) = \mbb{E}_{\rvz \sim p(\rvz)}\Big[\big\|\net_{\vtheta}(\rvz)-R(\rvz)\big\|^2\Big]
\end{align*}
only up to a constant term that is independent of $\vtheta$, where 
\begin{align*}
    R(\rvz) = \mathbb{E}_{t\sim p(t|\rvz)} \big[R(\rvz| t)\big].
\end{align*}
Defintions on $p(\rvz)$ and $p(t|\rvz)$ are the same as in \cref{theorem:effective1}.
\end{theorem}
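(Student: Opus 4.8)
The plan is to mirror the argument of \cref{theorem:effective1}, adjusting only the conditioning variable: since $\net_{\vtheta}(\rvz)$ now depends on $\rvz$ alone, I would condition the outer expectation on $\rvz$ and absorb all remaining randomness in $(\rvx,\rvepsilon,t)$ into an inner expectation. First I would rewrite the sampling of $(\rvx,\rvepsilon,t)$ as drawing $\rvz\sim p(\rvz)$ followed by $(\rvx,\rvepsilon,t)\sim p(\rvx,\rvepsilon,t|\rvz)$. This reindexing is legitimate because $\rvz=a(t)\rvx+b(t)\rvepsilon$ is a deterministic function of the triplet and $p(\rvz)$ is, by definition, the pushforward marginal of the joint $p(\rvx)p(\rvepsilon)p(t)$. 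This yields
\[
\mcal{L}(\vtheta)=\mbb{E}_{\rvz\sim p(\rvz)}\,\mbb{E}_{(\rvx,\rvepsilon,t)\sim p(\rvx,\rvepsilon,t|\rvz)}\big[\|\net_{\vtheta}(\rvz)-r(\rvx,\rvepsilon,t)\|^2\big].
\]

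Next I would apply the standard bias--variance identity $\mbb{E}\|v-Y\|^2=\|v-\mbb{E}[Y]\|^2+\mbb{V}[Y]$ to the inner expectation, taking the fixed vector $v=\net_{\vtheta}(\rvz)$ (constant once $\rvz$ is fixed) and the random target $Y=r(\rvx,\rvepsilon,t)$ drawn from $p(\rvx,\rvepsilon,t|\rvz)$. The variance term $\mbb{V}_{(\rvx,\rvepsilon,t)\sim p(\rvx,\rvepsilon,t|\rvz)}[r]$ does not involve $\vtheta$, so after taking the outer expectation over $\rvz$ it contributes only a $\vtheta$-independent constant, exactly as in \cref{theorem:effective1}.

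The one genuinely content-bearing step is to identify the single-stage conditional mean $\mbb{E}_{(\rvx,\rvepsilon,t)\sim p(\rvx,\rvepsilon,t|\rvz)}[r]$ with $R(\rvz)$. For this I would invoke the tower property after factoring the conditional as $p(\rvx,\rvepsilon,t|\rvz)=p(t|\rvz)\,p(\rvx,\rvepsilon|\rvz,t)$, giving
\[
\mbb{E}_{(\rvx,\rvepsilon,t)\sim p(\rvx,\rvepsilon,t|\rvz)}[r]=\mbb{E}_{t\sim p(t|\rvz)}\big[\mbb{E}_{(\rvx,\rvepsilon)\sim p(\rvx,\rvepsilon|\rvz,t)}[r]\big]=\mbb{E}_{t\sim p(t|\rvz)}\big[R(\rvz|t)\big]=R(\rvz),
\]
where the middle equality is the definition of $R(\rvz|t)$ from \cref{eq:gs_loss2} and the last is the definition of $R(\rvz)$. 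Substituting this conditional mean back into the bias term gives $\mcal{L}(\vtheta)=\mbb{E}_{\rvz\sim p(\rvz)}\|\net_{\vtheta}(\rvz)-R(\rvz)\|^2+\textit{const}=\mcal{L}'(\vtheta)+\textit{const}$, which is the claim.

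I do not anticipate a genuine obstacle: the skeleton is identical to \cref{theorem:effective1}, and the only place demanding care is the tower-property factorization that collapses the joint conditional mean of $r$ into the nested expectation defining $R(\rvz)$. One should merely be careful that the inner expectation is taken against $p(\rvx,\rvepsilon|\rvz,t)$ (not $p(\rvx,\rvepsilon|t)$), so that it matches the definition of $R(\rvz|t)$ verbatim.
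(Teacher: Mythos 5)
Your proposal is correct and follows essentially the same route as the paper's own proof: conditioning the outer expectation on $\rvz$, applying the bias--variance decomposition with the conditional variance absorbed into a $\vtheta$-independent constant, and using the tower property to identify $\mbb{E}_{(\rvx,\rvepsilon,t)\sim p(\rvx,\rvepsilon,t|\rvz)}[r]$ with $\mbb{E}_{t\sim p(t|\rvz)}[R(\rvz|t)] = R(\rvz)$. No gaps.
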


\begin{proof}

The original regression loss function without noise conditioning can be rewritten as
\begin{align}
    \mcal{L}(\vtheta) &= \mbb{E}_{\rvz\sim p(\rvz)}\mbb E_{(\rvx, \rvepsilon, t) \sim p(\rvx, \rvepsilon, t | \rvz)}\Big[\big\|\net_{\vtheta}(\rvz)-r(\rvx,\rvepsilon,t)\big\|^2\Big] \notag \\
    &=\mbb{E}_{\rvz\sim p(\rvz)}\Big[\|\net_{\vtheta}(\rvz)-\mbb{E}_{(\rvx, \rvepsilon, t) \sim p(\rvx, \rvepsilon, t | \rvz)} \big[ r(\rvx,\rvepsilon,t) \big]\big\|^2+\mbb V_{(\rvx, \rvepsilon, t) \sim p(\rvx, \rvepsilon, t | \rvz)}\big[ r(\rvx,\rvepsilon,t) \big]\Big] \notag \\
    &=\mbb{E}_{\rvz\sim p(\rvz), t\sim p(t|\rvz)}\Big[\|\net_{\vtheta}(\rvz)-\mbb{E}_{(\rvx, \rvepsilon, t) \sim p(\rvx, \rvepsilon, t | \rvz)} \big[ r(\rvx,\rvepsilon,t) \big]\big\|^2+\textit{const}\Big] \notag \\
    &=\mbb{E}_{\rvz\sim p(\rvz), t\sim p(t|\rvz)}\Big[\|\net_{\vtheta}(\rvz)-\mbb{E}_{(\rvx, \rvepsilon, t) \sim p(\rvx, \rvepsilon, t | \rvz)} \big[ r(\rvx,\rvepsilon,t) \big]\big\|^2\Big]+\textit{const}
\end{align}
And notice that
\begin{align}
    \mbb{E}_{(\rvx, \rvepsilon, t) \sim p(\rvx, \rvepsilon, t | \rvz)} \big[ r(\rvx,\rvepsilon,t) \big] = \mbb{E}_{t\sim p(t|\rvz)} \mbb{E}_{(\rvx, \rvepsilon) \sim p(\rvx, \rvepsilon | \rvz, t)} \big[ r(\rvx,\rvepsilon,t) \big]=\mbb E_{t\sim p(t|\rvz)} R(\rvz| t).
\end{align}
So
\begin{align*}
    \mcal{L}(\vtheta) &=\mbb{E}_{\rvz\sim p(\rvz), t\sim p(t|\rvz)}\Big[\|\net_{\vtheta}(\rvz)-\mbb{E}_{(\rvx, \rvepsilon, t) \sim p(\rvx, \rvepsilon, t | \rvz)} \big[ r(\rvx,\rvepsilon,t) \big]\big\|^2\Big]+\textit{const} \\
    &=\mbb{E}_{\rvz\sim p(\rvz), t\sim p(t|\rvz)}\Big[\|\net_{\vtheta}(\rvz)-R(\rvz)\big\|^2\Big]+\textit{const}= \mcal{L}'(\vtheta)+\textit{const}.
\end{align*}
This finishes the proof.
\end{proof}

\subsection{Approximation of the Variance of $p(t|\rvz)$}\label{app:delta_t}

We claim in \cref{approx:var} that, for a fixed noisy image $\rvz$ whose true noise level is $t_*$, the posterior variance of $p(t|\rvz)$ scales like $t_*^2/2d$. In this section, we first derive a $\mcal{O}(t_*^2/d)$ upper bound on the variance under minimal technical assumptions. While obtaining the exact constant requires delicate optimizations, our Big-O presentation keeps the proof accessible. We then present a concise, intuitive argument to recover the $t_*^2/2d$ scaling, which— as confirmed by our numerical results in \cref{app:numerical}—serves as an accurate and practical estimate of $\text{Var}_{p(t|\rvz)}[t]$.

\subsubsection{Rigorous Upper Bound on Variance.}

\paragraph{Single Data Point Case.}

We first consider the case where $N=1$. For brevity, write $\rvx=\rvx_1$. Recall that our goal is to bound, \textit{with high probability} over the randomness in $z\sim p(z)$, the posterior variance $\text{Var}_{t|\rvz}[t]$.

Note that some ill-behaved $\rvz$ can lead to strange distribution of $t$. Thus we work under the high-probability regime in which 
$\rvz$ concentrates around its typical behavior.

Because $\rvepsilon\sim\mcal N(\bm0, \mI_d)$ is rotationally invariant, we may, without loss of generality, assume $\rvx$ to only have $1$ nonzero coordinate.

\begin{theorem}

Assume that \begin{align*}
    \rvx =(x,0,0,\cdots,0),x\in \left[-\sqrt d,\sqrt d\right]\\
    \rvz =(1-t_0) \rvx + t_0 \rvepsilon, \rvepsilon =(\epsilon_0,\rvepsilon')
\end{align*}

Here, $\epsilon_0$ is a scalar that is the first coordinate of $\rvepsilon$, and $\rvepsilon'$ is its other coordinates. 

We further assume that
\begin{align}\label{eq:assumption}
  t_0 \in \left[\frac {1}{d},1-\frac {1}{d}\right], \quad ||\rvepsilon'^2|-d|=\mcal O(\sqrt d\log d), \quad |\epsilon_0|^2 \le \log d
\end{align}

Then, we have $\Var_{t| \rvz}(t)= \mcal O(t_0^2/d)$.
\end{theorem}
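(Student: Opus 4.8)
The plan is to run a Laplace-type analysis of the scalar posterior $p(t\mid\rvz)$ and read the variance off the curvature of its log-density at the mode. For Flow Matching we have $a(t)=1-t$, $b(t)=t$ and $t\sim\mcal{U}[0,1]$, so the single-point formula for $p(\rvz\mid t)$ gives, up to a $t$-independent additive constant,
\[ \ell(t):=\log p(t\mid\rvz)=-d\log t-\frac{g(t)}{2t^2},\qquad g(t):=\norm{\rvz-(1-t)\rvx}^2, \]
and the coordinate split $\rvx=(x,\vzero)$, $\rvepsilon=(\epsilon_0,\rvepsilon')$ yields the exact identity
\[ g(t)=\big((t-t_0)x+t_0\epsilon_0\big)^2+t_0^2R,\qquad R:=\norm{\rvepsilon'}^2. \]
The leading part $-d\log t-t_0^2R/(2t^2)$ is, after the substitution $w=1/t^2$, precisely a Gamma log-density of shape $\Theta(d)$ in $w$; this is the mechanism that forces concentration. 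The remaining term $\big((t-t_0)x+t_0\epsilon_0\big)^2/(2t^2)\ge0$ is a nonnegative perturbation which, as I will argue, only sharpens the posterior.

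First I would differentiate twice and use the hypotheses \eqref{eq:assumption} to pin down the curvature. One gets
\[ -\ell''(t)=-\frac{d}{t^2}+\frac{x^2}{t^2}-\frac{2g'(t)}{t^3}+\frac{3g(t)}{t^4}, \]
whose dominant contribution near $t_0$ is $3g/t^4\approx 3d/t_0^2$ against $d/t^2\approx d/t_0^2$; inserting $S:=\epsilon_0^2+R=d+\mcal{O}(\sqrt{d}\log d)$, $|x|\le\sqrt d$ and $\epsilon_0^2\le\log d$ gives $-\ell''(t_0)=\tfrac1{t_0^2}\big(3S-d+x^2-4x\epsilon_0\big)=\Theta(d/t_0^2)$. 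The key quantitative step is to fix a small constant $\gamma$ and show that on $I:=[(1-\gamma)t_0,(1+\gamma)t_0]$ one still has $c\,d/t_0^2\le-\ell''(t)\le C\,d/t_0^2$: the only terms that can rival the leading $\Theta(d/t_0^2)$ are $x^2/t^2$ and $g'(t)/t^3$ (since $|x|$ may be as large as $\sqrt d$), and a routine estimate bounds their joint contribution by $(\mathrm{const}\cdot\gamma)\,d/t_0^2$, so a sufficiently small $\gamma$ keeps $-\ell''$ bounded away from both $0$ and $\infty$. Finally $\ell'(t_0)=(S-d-x\epsilon_0)/t_0=\mcal{O}(\sqrt d\log d/t_0)$ together with $-\ell''\asymp d/t_0^2$ places the mode $\hat t$ within $\mcal{O}(t_0\log d/\sqrt d)$ of $t_0$, hence in the interior of $I$ for large $d$; this displacement exceeds the eventual standard deviation $\Theta(t_0/\sqrt d)$ only by a $\log d$ factor, which is immaterial for a variance bound.

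With the curvature controlled I would finish by a standard Laplace sandwich. Using $\Var_{t\mid\rvz}[t]\le\E[(t-\hat t)^2]=\big(\int(t-\hat t)^2e^{\ell}\big)\big/\big(\int e^{\ell}\big)$, the concavity bound $\ell(t)\le\ell(\hat t)-\tfrac{c d}{2t_0^2}(t-\hat t)^2$ on $I$ controls the numerator by $\mcal{O}\big(e^{\ell(\hat t)}(t_0^2/d)^{3/2}\big)$ after extending the Gaussian integral to all of $\mbb{R}$, while integrating over $|t-\hat t|\le t_0/\sqrt d$ and using the upper curvature bound gives $\int e^{\ell}\gtrsim e^{\ell(\hat t)}\,t_0/\sqrt d$. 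Their ratio is $\mcal{O}(t_0^2/d)$ with no stray polynomial factors, precisely because $e^{\ell(\hat t)}$ cancels and $-\ell''$ is pinned both above and below. It then remains to discard the complement of $I$: for $t\notin I$ the Gamma tail of $w=1/t^2$ (shape $\Theta(d)$) yields $\ell(\hat t)-\ell(t)=\Omega(d)$ on both sides — the nonnegativity of the perturbation term is what secures the large-$t$ side, where a naive comparison would be too weak — so the tail mass is $e^{-\Omega(d)}$, and multiplied by the bounded range $t\in[0,1]$ it is dominated by $t_0^2/d$ since $t_0\ge1/d$.

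The main obstacle is that, unlike a clean inverse-Gamma posterior, here both the conditional mean $(1-t)\rvx$ and the conditional variance $t^2$ vary with $t$, so neither the curvature bound nor the tail bound comes for free. I must show, using only the high-probability hypotheses on $R$ and $\epsilon_0$ together with the deterministic bound $|x|\le\sqrt d$, that the cross and quadratic perturbation terms — which are genuinely of order $d/t_0^2$ when $|x|\sim\sqrt d$ — are beaten by the leading curvature on a window $I$ of fixed relative width; this is exactly what dictates the admissible constant $\gamma$. The complementary delicate point is that the nonnegativity of $\big((t-t_0)x+t_0\epsilon_0\big)^2/(2t^2)$ is what rescues the upper-$t$ tail, where the clean Gamma comparison alone would be insufficient.
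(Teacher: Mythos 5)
Your proposal is correct in substance, and it reaches the bound by a genuinely different route from the paper. The paper's proof changes variables to $X=t_0/t$, writes the exponent as an exact quadratic $a(X-\mu)^2+C$, and factors the score as $\tfrac{\partial}{\partial t}\log p(t\mid\rvz)=\tfrac{a}{t}\bigl(\tfrac{t_0}{t}-\lambda_1\bigr)\bigl(\tfrac{t_0}{t}-\lambda_2\bigr)$ via the roots of $aX^2-a\mu X-d$; this yields a pointwise gradient-domination $(t^*-t)\,\Omega(d/t_0^2)$ valid on all of $(0,3t^*]$, after which a ``sharper than Gaussian'' comparison of densities (not a Laplace sandwich) bounds the restricted second moment, and a ratio bound kills the tail $t>3t^*$. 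You instead run the textbook Laplace method in the original variable: split $g(t)=\norm{\rvz-(1-t)\rvx}^2$ into the isotropic Gamma backbone $t_0^2R$ plus a nonnegative rank-one perturbation, prove two-sided curvature bounds $c\,d/t_0^2\le-\ell''\le C\,d/t_0^2$ on a fixed relative window $I$, localize the mode via $\ell'(t_0)=(S-d-x\epsilon_0)/t_0=\mcal{O}(\sqrt d\log d/t_0)$, and close with explicit numerator/denominator estimates plus a Gamma tail bound outside $I$. Your computations at $t_0$ (in particular $-\ell''(t_0)=(3S-d+x^2-4x\epsilon_0)/t_0^2$) check out exactly. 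The paper's factorization buys a one-line gradient bound that covers the entire region below $3t^*$ with no window bookkeeping; your decomposition makes the mechanism (Gamma concentration, only sharpened by the signal term) more transparent and more modular.

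Two points need repair. First, your claim that the joint contribution of $x^2/t^2$ and $-2g'(t)/t^3$ on $I$ is at most $(\mathrm{const}\cdot\gamma)\,d/t_0^2$ is false as stated: writing $t=t_0(1+u)$, that combination equals $\frac{x^2(1-3u)}{t_0^2(1+u)^3}-\frac{4x\epsilon_0}{t_0^2(1+u)^3}$, which is of order $x^2/t_0^2$, i.e.\ up to $d/t_0^2$ when $|x|\approx\sqrt d$. The conclusion survives for a different reason: up to errors of order $(\gamma d+\sqrt{d\log d})/t_0^2$ this contribution is \emph{nonnegative}, so it can only increase curvature; the lower bound then comes from the clean part $3g/t^4-d/t^2\approx 2d/t_0^2$, and the upper bound from $x^2\le d$. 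Second, the boundary case: since only $t_0\le 1-1/d$ is assumed, $t_0$ may lie within $\mcal{O}(t_0\log d/\sqrt d)$ of $1$, in which case the unconstrained critical point $\hat t$ can exceed $1$ (e.g.\ $x=0$, $\norm{\rvepsilon'}^2=d+\sqrt d\log d$ gives $\hat t\approx t_0(1+\tfrac{\log d}{2\sqrt d})$). Then your denominator window $|t-\hat t|\le t_0/\sqrt d$ need not lie in $[0,1]$, and $\E[(t-\hat t)^2]\ge(\hat t-1)^2$ can be of order $t_0^2\log^2 d/d$, so centering at $\hat t$ loses a $\log^2 d$ factor; the displacement is \emph{not} immaterial once truncation bites. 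A standard fix: condition on $I\cap[0,1]$, where $\ell$ is strongly concave with parameter $cd/t_0^2$, and invoke the one-dimensional Brascamp--Lieb inequality (the variance of a strongly log-concave density is at most the inverse curvature, truncation to a subinterval included), then add your $e^{-\Omega(d)}$ tail term. To be fair, the paper's own proof also centers at $t^*$ and glosses over this same truncation issue, so this is a shared blemish rather than a defect unique to your argument.
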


Note that \cref{eq:assumption} occur with probability $1-\mcal O(\frac 1d)$, due to the norm-concentration properties of the Gaussian distribution.

\begin{proof}
We directly compute the probability density functions as follows. 

Firstly, we have
\begin{align*}
  p(t| \rvz)\ \propto\ p(\rvz | t ) p(t)\ \propto\ \frac{1}{t^d}\exp\left(-\frac 12 \left(\frac {|\rvz-(1-t)\rvx|}t\right)^2\right)
\end{align*}

since the distribution $p(\rvz | t)$ is simply a Gaussian with mean $(1-t)\rvx$ and variance $t^2$ per dimension.

Looking into the exponent, it can be simplified as
\begin{align*}
  &\left(\frac {|\rvz-(1-t)\rvx|}t\right)^2=\left(\frac{|(t-t_0)\rvx +t_0 \rvepsilon|}{t}\right)^2=\left|\left(1-\frac{t_0}t\right) \rvx + \frac{t_0}t \rvz\right|^2\\
  =&\left(\left(1-\frac{t_0}t\right)x+\frac{t_0}t\epsilon_0\right)^2 +\left(\frac{t_0}t\right)^2 |\rvepsilon'|^2
\end{align*}
which is a quadratic function on $\dfrac{t_0}t$. We write it as $a\left(\dfrac{t_0}t -\mu\right)^2+C$ for some constants $a,\mu,C$ independent of $t$, where 
\begin{align}\label{eq:bound_a}
  a&=(x-\epsilon_0)^2 + |\rvepsilon'|^2= x^2 - x \mcal O(\sqrt{\log d})+ d + \mcal O(\sqrt d \log d )\notag\\
  &= x^2 + d + \mcal O(\sqrt d \log d )
\end{align}
and
\begin{align}\label{eq:bound_mu}
  a\mu = x(x-\epsilon_0) = x^2 - \mcal O(x\sqrt {\log d })
\end{align}
by the assumption \cref{eq:assumption}.

Substituting back, we get
\[
p(t|\rvz)\ \propto\ \frac{1}{t^d}\exp\left(-\frac 12 a\left(\frac{t_0}t-\mu\right)^2\right)
\]
We find its maximum by differentiating its log-density:
\begin{align*}
\frac{\partial \log p\left(t| \rvz\right)}{\partial t}= -\frac{d}{t}+ a\left(\frac{t_0}t -\mu\right)\frac{t_0}{t^2}=\frac 1t \left(a\left(\frac{t_0}t\right)^2 -a\mu \frac{t_0}t-d\right)
\end{align*}

Let $\lambda_1>0,\lambda_2<0$ be the two roots of the equation $f(X):=aX^2-a\mu X-d=0$ (since $a$ and $d$ are both positive).

Notice that $\lambda_1\lambda_2 = - \dfrac da$. We claim that 
\begin{align}\label{eq:roots}
  \lambda_1 = 1+\mcal O\left(\dfrac{\log d}{\sqrt d}\right), \lambda_2 =-\dfrac da \left(1+\mcal O\left(\dfrac{\log d}d\right)\right)=-\Theta(1)
\end{align}
This is because 
\[
f(1)=\mcal O(\sqrt d \log d), f'(1)=x^2+2d +\mcal O(\sqrt d \log d )
\]
by \cref{eq:bound_a} and \cref{eq:roots}, from which we derive the desired root bounds.

Using the factorization into roots, we have
\begin{align*}
  \frac{\partial \log p(t| \rvz)}{\partial t}&=\frac 1t \left(a\left(\frac{t_0}t\right)^2 -a\mu \frac{t_0}t-d\right)=\frac 1t a\left(\frac{t_0}t-\lambda_1\right)\left(\frac{t_0}t-\lambda_2\right)\\
  &=\frac 1t\left(\frac{t_0}{t}-\lambda_1\right)\Omega(d)
\end{align*}
by \cref{eq:bound_a} and \cref{eq:roots}.

Denote $t^*=\dfrac{t_0}{\lambda_1}$ as the unique maximizer of $\log p(t| \rvz)$. As a first step, we show that in range $(0, 3t^*]$, the probability density of $p(t|\rvz)$ is more concentrated than a Gaussian with variance $\mcal O(t_0^2/d)$. Intuitively, it is due to the fact that in this range,

\[
\frac{\partial \log p(t| \rvz)}{\partial t}=\frac 1t\left(\frac{t_0}{t}-\frac{t_0}{t^*}\right)\Omega(d)=\frac{t_0(t^*-t)}{t^2t^*}\Omega(d)=(t^*-t)\Omega\left(\frac d {t_0^2}\right)
\]
utilizing \cref{eq:roots}. This inequality implies that in this range, the gradient of this probability density is sharper than $\mcal N(t^*, \mcal O(t_0^2/d))$.

To be specific, when $0<t<t'\le t^*$ or $t^*\le t'<t\le 3t^*$, we have 
\begin{align}\label{eq:concentration}
  \log p(t'| \rvz)- \log p(t| \rvz)=\int \limits_{t}^{t'}\frac{\partial \log p(t| \rvz)}{\partial t} \mathrm{d} t=\Omega\left(\frac{d}{t_0^2}\right)\cdot\left((t-t^*)^2-{(t'-t^*)}^2\right)
\end{align}

where the RHS is exactly the difference in log-probability density of $\mcal N(t^*, \mcal O(t_0^2/d))$ at $t$ and $t'$. This fact supports that in this range, $p(t|\rvz)$ is sharper than $\mcal N(t^*, \mcal O(t_0^2/d))$, or 

\[
\mbb E_{t\sim p'}[(t-t^*)^2]\le \mbb E_{t\sim q}[(t-t^*)^2]=\mcal O(t_0^2/d)
\]

where $p', q$ denotes the distribution of $p(t|\rvz)$ and $\mcal N(t^*, \mcal O(t_0^2/d))$ restricted on $(0, 3t^*]$, respectively.

Finally, we only need to consider the part when $3t^*<t\le1$, where we are going to prove that 
\[
\Pr_{t\sim p(t|\rvz)}[t>3t^*]
\]
is small. In this case, according to \cref{eq:concentration}, for any $t'\in[t^*, 2t^*]$:

\[
\frac{p(t|\rvz)}{p(t'|\rvz)}\le \exp\left(-\Omega\left(\frac d{t_0^2}\right)\cdot (4{t^*}^2-{t^*}^2)\right)=\exp(-\Omega(d))
\]

which is actually exponentially small. This implies that

\begin{align*}
    \Pr_{t\sim p(t|\rvz)}[t>3t^*]\le \exp(-\Omega(d))\cdot \min_{t'\in [t^*, 2t^*]} p(t'|\rvz) \le \exp(-\Omega(d))\cdot \frac{1}{t^*}=\mcal O\left(\frac{1}{d^3}\right)
\end{align*}

Consequently, the contribution of this tail to the variance is $\mcal O(1/d^3)$, which is negligible compared to $\mcal O(t_0^2/d)$. Combining the two regimes, we conclude that

\[
\text{Var}_{p(t|\rvz)}[t]\le \mbb E_{t\sim p(t|\rvz)}[(t-t^*)^2]=\mcal O(t_0^2/d)
\]

\end{proof}

\paragraph{Multiple Data Points Case.}

We now turn to the setting of $N>1$ data points. Intuitively, one can identify the ground-truth clean image that generates the given noisy image, as long as the noise is not too large to swamp all signal. This can be done simply by comparing inner product: the noisy observation 
$\rvz$ will correlate most strongly with its corresponding clean sample $\rvx_i$, and only weakly with all others.

\begin{lemma}\label{lemma:multidata}
    Suppose that $\rvx_1, \rvx_2, \dots, \rvx_N$ are i.i.d Gaussian samples from $\mcal N(\bm0, \mI_d)$, and $\rvz=(1-t_0)\rvx_i+t\rvepsilon$ for some $i$. 
    
    Then, we have the following two properties hold with probability $1-\frac 1d$:
    \[
    \begin{cases}
        |\left<\rvx_j,\rvz\right>|&=\mcal {O}(\sqrt {d}(\log N+\log d)),\qquad\forall j\neq i\\
        |\left<\rvx_i,\rvz\right>-(1-t_0)|\rvx_i|^2| &= \mcal {O}(\sqrt {d\log d})
    \end{cases}
    \]
\end{lemma}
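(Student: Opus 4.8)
The plan is to exploit the independence structure of the model: for $j \neq i$ the clean sample $\rvx_j$ is independent of the observation $\rvz = (1-t_0)\rvx_i + t\rvepsilon$, whereas $\rvx_i$ enters $\rvz$ through the signal term $(1-t_0)|\rvx_i|^2$. First I would record the Gaussian norm-concentration facts I will reuse. Since each $\rvx_k \sim \mathcal{N}(\bm 0, \mI_d)$ and $\rvepsilon \sim \mathcal{N}(\bm 0, \mI_d)$, standard chi-squared (Laurent--Massart-type) tail bounds give, for a tail parameter $\propto \sqrt{d(\log d + \log N)}$, that $|\rvx_k|^2 = d + \mathcal{O}(\sqrt{d(\log d + \log N)})$ and $|\rvepsilon|^2 = d + \mathcal{O}(\sqrt{d(\log d + \log N)})$, each with failure probability at most $\tfrac{1}{4d(N+1)}$. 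Intersecting these $\mathcal{O}(N)$ events (total failure $\le \tfrac{1}{2d}$) defines a \emph{good event} on which $|\rvx_k|^2, |\rvepsilon|^2 = \Theta(d)$ and hence, using $t, t_0 \in [0,1]$, $|\rvz|^2 = \big((1-t_0)^2 + t^2\big)d + \mathcal{O}(\sqrt{d(\log d + \log N)}) = \mathcal{O}(d)$.

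For the off-diagonal terms ($j \neq i$), I would condition on $\rvz$. Because $\rvx_j$ is independent of $(\rvx_i, \rvepsilon)$, it remains an independent $\mathcal{N}(\bm 0, \mI_d)$ given $\rvz$, so $\langle \rvx_j, \rvz\rangle \mid \rvz \sim \mathcal{N}(0, |\rvz|^2)$. On the good event $|\rvz|^2 = \mathcal{O}(d)$, the Gaussian tail $\Pr[|\mathcal{N}(0,\sigma^2)| > \tau] \le 2\exp(-\tau^2/2\sigma^2)$ with $\tau = c\sqrt{d(\log d + \log N)}$ makes each failure probability at most $\tfrac{1}{2dN}$; a union bound over the $N-1$ indices $j$ then gives $\max_{j\neq i}|\langle \rvx_j, \rvz\rangle| = \mathcal{O}(\sqrt{d(\log d + \log N)})$. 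This is sharper than the claimed $\mathcal{O}(\sqrt{d}(\log N + \log d))$, which follows at once since $\sqrt{x} \le x$ for $x \ge 1$; I would simply remark that the looser stated form is all that is needed downstream.

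For the diagonal term ($j = i$), I would expand $\langle \rvx_i, \rvz\rangle = (1-t_0)|\rvx_i|^2 + t\langle \rvx_i, \rvepsilon\rangle$, so that $\langle \rvx_i, \rvz\rangle - (1-t_0)|\rvx_i|^2 = t\langle \rvx_i, \rvepsilon\rangle$. Since $\rvx_i \perp \rvepsilon$, conditioning on $\rvx_i$ gives $\langle \rvx_i, \rvepsilon\rangle \mid \rvx_i \sim \mathcal{N}(0, |\rvx_i|^2)$ with $|\rvx_i|^2 = \mathcal{O}(d)$ on the good event, and the Gaussian tail with $\tau = c\sqrt{d\log d}$ yields $|t\langle \rvx_i, \rvepsilon\rangle| = \mathcal{O}(\sqrt{d\log d})$ (using $t \le 1$) with failure probability $\le \tfrac{1}{2d}$, matching the second claim exactly.

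Finally I would collect the $\mathcal{O}(N)$ norm-concentration events and the $\mathcal{O}(N)$ inner-product tail events under a single union bound. The one point to watch is the \emph{bookkeeping of this union bound}: the number of off-diagonal events grows with $N$, so each per-event tail probability must absorb a $\log N$ factor in its exponent, which is precisely the reason the bound carries $\log N + \log d$ rather than $\log d$ alone. The generous (linear, rather than square-root) form stated in the lemma leaves ample slack, so the aggregate failure probability is comfortably $\le 1/d$. There is no deep obstacle beyond this accounting and the routine sub-Gaussian and chi-squared estimates; the only mild subtlety is to establish the good event controlling $|\rvz|^2, |\rvx_i|^2 = \Theta(d)$ \emph{first} and then reuse it when conditioning, so that all conditional Gaussian variances are uniformly $\mathcal{O}(d)$.
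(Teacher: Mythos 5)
Your proof is correct, and it takes a genuinely different route from the paper's on the off-diagonal terms. The paper expands $\langle\rvx_j,\rvz\rangle=(1-t_0)\langle\rvx_j,\rvx_i\rangle+t_0\langle\rvx_j,\rvepsilon\rangle$ and bounds each summand as the dot product of two \emph{independent} standard Gaussian vectors, i.e.\ a sum of $d$ i.i.d.\ sub-exponential terms controlled by Bernstein's inequality; the linear-in-$(\log N+\log d)$ form of the stated bound is exactly what the linear (large-deviation) regime of Bernstein produces, which is why the paper's form remains valid even when $N$ is exponential in $d$ --- the regime invoked in Theorem~\ref{thm:recover}. You instead condition on $\rvz$: since $\rvx_j$ is independent of $(\rvx_i,\rvepsilon)$, you get $\langle\rvx_j,\rvz\rangle\mid\rvz\sim\mcal N(0,|\rvz|^2)$ \emph{exactly}, and a plain Gaussian tail with $|\rvz|^2=\mcal O(d)$ yields the sharper $\mcal O(\sqrt{d(\log N+\log d)})$ with no sub-exponential machinery; the diagonal term you treat identically to the paper ($t_0\langle\rvx_i,\rvepsilon\rangle$, Gaussian given $\rvx_i$). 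Two bookkeeping remarks. First, your good event needlessly controls all $N$ norms $|\rvx_k|^2$: the chi-squared tail at level $\sqrt{d(\log d+\log N)}$ lies in Bernstein's quadratic regime only when $\log N\lesssim d$, so those per-event bounds would degrade for the exponentially-many-points regime; but the only norms your argument actually uses are $|\rvx_i|^2$ and $|\rvepsilon|^2$ (hence $|\rvz|^2$), each $\mcal O(d)$ with failure probability $e^{-\Omega(d)}$, and after trimming the good event to these two your conditioning argument covers all $N$ uniformly. Second, your three failure probabilities sum to $3/(2d)$ rather than $1/d$, a constant-factor adjustment absorbed by the constants in the $\mcal O(\cdot)$'s. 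Incidentally, the paper's in-proof claim that each summand is ``$\mcal O(d(\log N+\log d))$'' appears to be a typo for $\mcal O(\sqrt d(\log N+\log d))$; your route delivers the lemma as stated (indeed slightly stronger) without that wrinkle.
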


\begin{proof}

    We first deal with the second inequality. We have 
    \begin{align*}
        &\left<\rvx_i,\rvz\right>-(1-t_0)|\rvx_i|^2=\left<\rvx_i,(1-t_0)\rvx_i +t \rvepsilon\right> -(1-t_0)|\rvx_i|^2=t_0\left<\rvx_i,\rvepsilon\right>\sim \mcal N(\bm0,|\rvx_i|I_d)
    \end{align*}

    Using standard Gaussian tail bounds, this probability that $|\left<\rvx_i,\rvz\right>-(1-t_0)|\rvx_i|^2| \ge k\sqrt {d\log d}$ is at most 
    
    $$\exp\left(-\frac{k^2d\log d}{|\rvx_i|^2}\right)=\exp(-k^2\log d)=d^{-\Omega(k^2)}$$

    Therefore, there exists a constant $k$ such that the probability above is at most $\dfrac 1{dN}$.

    For the first inequality, notice that
    \begin{align*}
        \left<\rvx_j,\rvz\right>=(1-t)\left<\rvx_i,\rvx_j\right>+t\left<\rvx_j,\rvepsilon\right>
    \end{align*}
    each term is the dot product of two normal Gaussian variables, which is still sub-exponential. 
    
    In this way, we derive that each term is at most $\mcal O(d(\log N+\log d))$ with probability $1-\dfrac 1{dN}$, so union bound over all the $i$ gives the desired result. 
\end{proof}

\begin{theorem}\label{thm:recover}
    There exists a constant $C$ such that when $1-t_0 \ge C\sqrt{\dfrac{\log N+\log d}d}$, we can recover the correct $i$ with probability at least $1-\dfrac 1d$.
\end{theorem}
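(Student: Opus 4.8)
The natural estimator is maximum correlation: set $\hat\imath=\argmax_{j}\langle\rvx_j,\rvz\rangle$, and I would show that this returns the true index $i$ whenever $1-t_0$ exceeds the stated threshold. The whole argument is a signal-versus-noise-floor comparison. For the true index, the inner product is dominated by the signal term, $\langle\rvx_i,\rvz\rangle\approx(1-t_0)\lvert\rvx_i\rvert^2\approx(1-t_0)d$, which is large and positive. For every competitor $j\neq i$ there is no signal component, and $\langle\rvx_j,\rvz\rangle$ is only of order $\sqrt{d(\log N+\log d)}$ in magnitude. Recovery succeeds precisely when the signal $(1-t_0)d$ clears this noise floor for all $N-1$ competitors simultaneously.

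First I would quantify the true-index correlation. By the second estimate of \cref{lemma:multidata}, $\langle\rvx_i,\rvz\rangle\ge(1-t_0)\lvert\rvx_i\rvert^2-\mcal{O}(\sqrt{d\log d})$, and by Gaussian norm concentration $\lvert\rvx_i\rvert^2\ge d-\mcal{O}(\sqrt{d\log d})$ with high probability, so $\langle\rvx_i,\rvz\rangle\ge(1-t_0)d-\mcal{O}(\sqrt{d\log d})$. Next I would bound the competitors uniformly. Writing $\langle\rvx_j,\rvz\rangle=(1-t_0)\langle\rvx_i,\rvx_j\rangle+t_0\langle\rvx_j,\rvepsilon\rangle$, each term is a sum of $d$ independent sub-exponential products of standard Gaussians; following the sub-exponential (Bernstein-type) estimate used in the proof of \cref{lemma:multidata}, $\lvert\langle\rvx_j,\rvz\rangle\rvert\le C'\sqrt{d(\log N+\log d)}$ for a fixed $j$ with failure probability $\mcal{O}(1/(Nd))$. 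A union bound over the $N-1$ competitors then gives $\max_{j\neq i}\lvert\langle\rvx_j,\rvz\rangle\rvert=\mcal{O}(\sqrt{d(\log N+\log d)})$ with aggregate failure probability $\mcal{O}(1/d)$.

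Combining the two estimates, the estimator is correct as soon as
\[
(1-t_0)d-\mcal{O}(\sqrt{d\log d})\ >\ \mcal{O}\!\left(\sqrt{d(\log N+\log d)}\right).
\]
Since the $\sqrt{d\log d}$ slack is dominated by the right-hand side, this reduces to the requirement that $(1-t_0)\sqrt{d}$ exceed a constant multiple of $\sqrt{\log N+\log d}$, i.e. $1-t_0\ge C\sqrt{(\log N+\log d)/d}$ for a suitable absolute constant $C$. Tracking all the high-probability events (the norm concentration of $\rvx_i$, the true-index estimate, and the competitor bound) through a final union bound keeps the total failure probability at most $1/d$, which is exactly the claim of \cref{thm:recover}.

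The main obstacle is the competitor bound: obtaining the $\sqrt{\log N}$ dependence rather than a crude $\log N$ is what makes the threshold as small as $\sqrt{(\log N+\log d)/d}$, and it forces one to treat the cross terms $\langle\rvx_i,\rvx_j\rangle$ and $\langle\rvx_j,\rvepsilon\rangle$ with the correct sub-exponential tail (products of Gaussians are only sub-exponential, not sub-Gaussian), switching to the linear tail regime only at scales beyond $\sqrt{d}$. The remaining delicate point is the bookkeeping: simultaneously controlling $\lvert\rvx_i\rvert^2$, all $N-1$ competitor correlations, and the signal term while keeping the combined failure probability at $\mcal{O}(1/d)$, which is what constrains how large $N$ may be relative to $d$.
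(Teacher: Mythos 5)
Your proposal is correct and follows essentially the same route as the paper: the paper's proof of \cref{thm:recover} is a one-liner---apply \cref{lemma:multidata} and take the index maximizing $\langle \rvx_j,\rvz\rangle$---and your signal-versus-noise-floor comparison, with the union bound over the $N-1$ competitors and the chi-squared concentration of $|\rvx_i|^2$, is precisely the bookkeeping the paper leaves implicit. One point in your favor: your competitor bound $\mcal{O}\bigl(\sqrt{d(\log N+\log d)}\bigr)$ is the correct Bernstein-type sub-exponential rate and is sharper than the $\mcal{O}\bigl(\sqrt{d}\,(\log N+\log d)\bigr)$ literally stated in \cref{lemma:multidata}; this sharper form is actually \emph{needed} to reach the claimed threshold $1-t_0\ge C\sqrt{(\log N+\log d)/d}$, since the lemma's stated bound would only yield recovery under the stronger requirement $1-t_0\gtrsim(\log N+\log d)/\sqrt{d}$.
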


\begin{proof}
    Using \cref{lemma:multidata}, we can compute the dot products of all $\rvx_i$ with $\rvz$ and take the largest one.
\end{proof}

This theorem implies that the concentration property of $p(t|\rvz)$ can be extended to multiple data points case, even when there are \textit{exponentially-many} data points. By first recovering the correct clean image, the multi–data–point setting reduces immediately to the single–point analysis, and hence the posterior variance bound $\mcal O(t_0^2/d)$ extends to the case of exponentially many candidates.

\subsubsection{Intuitive derivation of the approximate constant}

In this section, we show the reason why we estimate the variance as $t_0^2/2d$.

Firstly, as we have seen in \cref{thm:recover}, the ground-truth clean image can be recovered with high confidence, even with a prior of exponentially-many data points. As a result, we revert to the single data point setting.

For simplicity, we restrict attention to the bulk regime  $t\in\left[\dfrac 1d, 1-\dfrac 1d\right]$, postponing edge–case analysis to future work.

Now, we can still view the data point $\rvx$ as $(x, 0, \ldots, 0)$ due to symmetry, where $x$ stands for the norm of $\rvx$. Because the dominant contribution to the likelihood comes from the $d-1$ noise dimensions when $d$ is large, we further approximate by neglecting the signal component in the first coordinate (\ie, assume $\rvx=\bm0$).
 
On this assumption, we have
\begin{align*}
    \rvz' = (1-t)\rvx' + t\rvepsilon' = t\rvepsilon.
\end{align*}
and denote $|\rvz|=t_0\sqrt d$. We have
\begin{align*}
p(t|\rvz)\ \propto\ \frac1{t^d}\exp\left(-\frac{dt_0^2}{2t^2}\right)
\end{align*}

The variance of this distribution can be calculated exactly from integration, as long as we extend the distribution from $[0, 1]$ to the whole $\mbb R$:

\[
\mbb E_{p(t| \rvz)}[t]=\frac{\displaystyle\int_{-\infty}^{\infty} \frac1{t^{d-1}}\exp\left(-\frac{dt_0^2}{2t^2}\right)}{\displaystyle\int_{-\infty}^{\infty} \frac1{t^d}\exp\left(-\frac{dt_0^2}{2t^2}\right)}=\frac{\displaystyle\frac 12\left(\frac{2}{dt_0^2}\right)^{\frac{d-2}{2}}\Gamma\left(\frac{d-2}{2}\right)}{\displaystyle\frac 12\left(\frac{2}{dt_0^2}\right)^{\frac{d-1}{2}}\Gamma\left(\frac{d-1}{2}\right)}=\sqrt \frac{d}{2}t_0\frac{\displaystyle\Gamma\left(\frac{d-2}{2}\right)}{\displaystyle\Gamma\left(\frac{d-1}{2}\right)}
\]

and similarly

\[
\mbb E_{p(t| \rvz)}[t^2]=\frac{dt_0^2}2\frac{\Gamma\left(\frac{d-3}{2}\right)}{\Gamma\left(\frac{d-1}{2}\right)}=dt_0^2\cdot\frac{1}{d-3}
\]

which yields

\begin{align}\label{eq:varptz}
\text{Var}_{p(t| \rvz)}[t] = \frac{dt_0^2}2\cdot\left(\frac{2}{d-3}-\left(\frac{\Gamma(\frac{d-2}{2})}{\Gamma(\frac{d-1}{2})}\right)^2\right).
\end{align}

Again, here we extend the distribution of $p(t|\rvz)$ from $[0, 1]$ to $\mbb R$, as this relaxation will only increase the variance.

Finally, we use the Stirling's expansion for the gamma function to get
\begin{align}\label{eq:app5}
    \frac{\Gamma\left(\frac{d-2}{2}\right)}{\Gamma\left(\frac{d-1}{2}\right)} &= \frac{e^{-\frac{d-2}{2}}\left(\frac{d-2}{2}\right)^{\frac{d-3}{2}}\left(1+\frac{1}{12}\cdot \frac{2}{d}+o(\frac{1}{d})\right)}{e^{-\frac{d-1}{2}}\left(\frac{d-1}{2}\right)^{\frac{d-2}{2}}\left(1+\frac{1}{12}\cdot \frac{2}{d}+o\left(\frac{1}{d}\right)\right)} = \sqrt{\frac{2}{d-1}}\left(1+\frac{3}{4(d-1)}+o\left(\frac{1}{d}\right)\right).
\end{align}
Plugging \cref{eq:app5} into \cref{eq:varptz}, we derive our final estimation:
\begin{align*}
    \text{Var}_{p(t| \rvz)}[t] &= \frac{dt_0^2}2\cdot \left(\frac{2}{d-3}-\frac{2}{d-1}\left(1+\frac{3}{2(d-1)}+o\left(\frac{1}{d}\right)\right)\right)\notag \\
    &= \frac{t_0^2}{d}\left(\frac12+o\left(\frac{1}{d}\right)\right),
\end{align*}

which aligns with our \cref{approx:var}.

\subsection{Approximation of $E(\rvz)$}\label{app:R_z}

\begin{ourcustomizedstatement}{Error of effective regression targets}{approx:error}
Consider a single datapoint $\rvx\in[-1,1]^d$, $\rvepsilon \sim \mathcal{N}(\bm0,\mI)$, $t\sim \mathcal{U}[0,1]$, and $\rvz = (1-t)\rvx + t\rvepsilon$ (as in Flow Matching). Define $R(\rvz)$ and $R(\rvz|t)$ with the Flow Matching configuration in \cref{tab:coefficients}. Given a noisy image $\rvz = (1-t_*)\rvx + t_*\rvepsilon$ produced by a given $t_*$, the mean squared error $E(\rvz)$ in \cref{eq:error} can be approximated by
\begin{align}\label{eq:approx2_derv}
    E(\rvz) \approx \frac{1}{2}(1+\sigma_{\ud}^2)
\end{align}
under the situation that the data dimension $d$ satisfies $\frac{1}{d} \ll t_*$ and $\frac{1}{d} \ll 1-t_*$. Here, $\sigma_{\ud}^2$ denotes the mean of squared pixel values of the dataset. 
\end{ourcustomizedstatement}
    
\begin{customproof}[Derivation]
We start by the definition of $E(\rvz)$:
\begin{align}
    E(\rvz)&:=\mathbb{E}_{t\sim p(t|\rvz)}\|R(\rvz|t)-R(\rvz)\|^2 \notag \\
    &= \mathbb{E}_{t\sim p(t|\rvz)}\left\|R(\rvz|t)-\mbb{E}_{t'\sim p(t'|\rvz)}[R(\rvz|t')]\right\|^2
\end{align}

Next, we compute $R(\rvz, t)$ using its definition under the Flow Matching configuration:
\begin{align}\label{equation002}
    R(\rvz| t) := \mbb{E}_{(\rvx, \rvepsilon) \sim p(\rvx, \rvepsilon | \rvz, t)} \big[ r(\rvx,\rvepsilon,t) \big]= \mbb{E}_{(\rvx, \rvepsilon) \sim p(\rvx, \rvepsilon | \rvz, t)} \big[&\rvepsilon - \rvx\big]= \frac{\rvz-\rvx}{t}.
\end{align}

Using \cref{equation002}, we obtain
\begin{align}
    E(\rvz) = \|\rvz-\rvx\|^2 \cdot \mathrm{Var}_{t\sim p(t|\rvz)}\left[\frac{1}{t}\right].
\end{align}

We now compute the two terms separately. For the first term, we can rewrite it as
\begin{align}\label{equation003}
    \|\rvz-\rvx\|^2 &= t_*^2 \|\rvx - \rvepsilon_*\|^2 \notag \\
    &\approx t_*^2 \left(\|\rvx\|^2 + \|\rvepsilon_*\|^2\right) \notag \\
    &\approx t_*^2 \left(d \sigma_{\ud}^2 + d\right) = t_*^2 d(1+\sigma_{\ud}^2).
\end{align}
Here, we employ the fact that $\rvx\cdot \rvepsilon_* \ll \|\rvx\|\|\rvepsilon_*\|$, and that $\|\rvepsilon_*\|\approx \sqrt{d}$ with high probability. Also, $\sigma_{\ud}^2=\|\rvx\|^2/d$, since we assume that the dataset contains only a single data point.

For the second term, note that the variance of $p(t|\rvz)$, given in \cref{approx:var}, is significantly smaller than the concentrated mean $t_*$ of $p(t|\rvz)$. Thus, we approximate the variance using a first-order expansion:
\begin{align}\label{equation004}
    \mathrm{Var}_{t\sim p(t|\rvz)}\left[\frac{1}{t}\right] &\approx \mathrm{Var}_{t\sim p(t|\rvz)}\left[\frac{1}{t_*} - \frac{(t-t_*)}{t_*^2}\right]= \frac{1}{t_*^4}\mathrm{Var}_{t\sim p(t|\rvz)}[t] \approx \frac{1}{t_*^2d}.
\end{align}
Combining \cref{equation003,equation004}, we get the estimation in \cref{eq:approx2_derv}.\end{customproof}

\subsection{Bound of Accumulated Error}\label{app:proof_final_bound}
\begin{ourcustomizedstatement}{Bound of accumulated error}{thrm:bound}
    \label{thm:bound}
    Starting from the same noise $\rvx_0=\rvx_0'$, consider a sampling process (\cref{eq:gs_sampler}) of $N$ steps, with noise conditioning:
    \begin{align*}
     \rvx_{i+1} = \kappa_i \rvx_i + \eta_i R(\rvx_i| t_i) + \zeta_i \tilde{\rvepsilon}_i
    \end{align*} and without noise conditioning:
    \begin{align*}
     \rvx_{i+1}' = \kappa_i \rvx_i' + \eta_i R(\rvx_i') + \zeta_i \tilde{\rvepsilon}_i.
    \end{align*}
 If ${\|R(\rvx_i'|t_i)-R(\rvx_i|t_i)\|}~/~{\|\rvx_i'-\rvx_i\|}\le L_i$ and $\|R(\rvx_i')-R(\rvx_i'| t_i)\|\le \delta_i$,
 it can be shown that the error between the sampler outputs $\rvx_N$ and $\rvx_N'$ is bounded: 
    \begin{align}
     \|\rvx_N - \rvx_N'\| &\le A_0B_0+A_1B_1+\ldots+A_{N{-}1}B_{N{-}1},
    \end{align}
  where
    \begin{align*}
  A_i = \prod_{j=i+1}^{N-1}(\kappa_i+|\eta_i|L_i), B_i=|\eta_i|\delta_i.
    \end{align*}
\end{ourcustomizedstatement}

\begin{proof}
    Define \(a_i := \kappa_i + |\eta_i|L_i\) and \(b_i := |\eta_i|\delta_i\). Then, we have:
    \begin{align}
        \|\rvx_{i+1}' - \rvx_{i+1}\| = \Big\|\kappa_i (\rvx_i' - \rvx_i) + \eta_i \left(R(\rvx_i') - R(\rvx_i | t_i)\right)\Big\|
    \end{align}
    as we assume that the same noise $\tilde{\rvepsilon}_i$ is added in the sampling process with and without noise conditioning. 
    
    Using the triangle inequality, this can be bounded as:
    \begin{align}\label{eq:bound_key}
    \|\rvx_{i+1}' - \rvx_{i+1}\| \leq \kappa_i \|\rvx_i' - \rvx_i\| + |\eta_i|\|R(\rvx_i') - R(\rvx_i' | t_i)\| + |\eta_i| \|R(\rvx_i' | t_i) - R(\rvx_i | t_i)\| \le a_i \|\rvx_i' - \rvx_i\| + b_i. 
    \end{align}
    We now use induction on $n$ to establish the bound:
    \begin{align}
        \|\rvx_n' - \rvx_n\| \leq \sum_{j=0}^{n-1} \left( \prod_{k=j+1}^{n-1} a_k \right) b_j,
    \end{align}
    where \(\prod_{k=j+1}^{N-1} a_k\) is defined as \(1\) for \(j = N-1\).
    
    For the base case \(n = 1\), we need to show:
    \begin{align}
        \|\rvx_1' - \rvx_1\| \leq b_0,
    \end{align}

    which follows directly from \cref{eq:bound_key} with $i=0$. 
    
    Now, assume the bound holds for some $n$, \ie
    \begin{align}
    \|\rvx_n' - \rvx_n\| \leq \sum_{j=0}^{n-1} \left( \prod_{k=j+1}^{n-1} a_k \right) b_j + \left( \prod_{k=0}^{n-1} a_k \right) \|\rvx_0' - \rvx_0\|.
    \end{align}
    
    We prove it holds for $n+1$. Applying \cref{eq:bound_key}, we obtain:
    \begin{align}
        \|\rvx_{n+1}' - \rvx_{n+1}\| \leq a_n \|\rvx_n' - \rvx_n\| + b_n.
    \end{align}
    
    Substitute the inductive hypothesis for \(\|\rvx_n' - \rvx_n\|\):
    \begin{align}
        \|\rvx_{n+1}' - \rvx_{n+1}\| \leq a_n \sum_{j=0}^{n-1} \left( \prod_{k=j+1}^{n-1} a_k \right) b_j + b_n = \sum_{j=0}^{n} \left( \prod_{k=j+1}^{n} a_k \right) b_j.
    \end{align}
    
    Thus, the bound holds for $n+1$. By induction, the bound holds for all $n$. Taking $n=N$ yields the desired result.
\end{proof}

\section{Derivation of Coefficients for Different Denoising Generative Models}\label{app:coefficients}

In this section, we build upon our formulation in \cref{subsec:gs} to express common diffusion models—iDDPM \cite{nichol2021iddpm}, DDIM \cite{song2021ddim}, EDM \cite{karras2022edm}, Flow Matching (FM) \cite{lipman2023flow, liu2023flow}, and our uEDM Model—using a unified notation. The coefficients corresponding to each model are summarized in \cref{tab:coefficients} and \cref{tab:edmv1}, followed by a concise derivation of their formulations.

\subsection{iDDPM}

The loss function of iDDPM \cite{nichol2021iddpm} in DDPM \cite{ho2020denoising}'s notation is
\begin{align*}
    \mcal{L}_{\text{simple}} = \mbb{E}_{t,\rvx_0,\rvepsilon}\left[\left\|\rvepsilon-\rvepsilon_\rvtheta(\sqrt{\bar{\alpha}_t}\rvx_0+\sqrt{1-\bar{\alpha}_t}\rvepsilon,t)\right\|^2\right].
\end{align*}
This can be directly translated into our notation:
\begin{align*}
    \mcal{L}(\vtheta) = \mbb{E}_{\rvx,\rvepsilon,t}\Big[w(t)\|\net_{\vtheta}(\rvz|c_{\text{noise}}(t))-r(\rvx,\rvepsilon,t)\|^2\Big],
\end{align*}
where we have the coefficients
\begin{align}
    a(t) = \sqrt{\bar{\alpha}(t)}, b(t) = \sqrt{1-\bar{\alpha}(t)}, c(t) = 0, d(t) = 1
\end{align}
and with the training weighting and distribution of $t$ being
\begin{align}
    w(t)=1,\quad \text{and} \quad p(t) = \mcal{U}\{1, \ldots, T\}.
\end{align}

\begin{table*}[t]
    \centering
    \caption{The coefficients of different models. For iDDPM, we assume a cosine diffusion schedule $\bar{\alpha}(t)$. For both iDDPM and DDIM we follow the original notation of DDPM \cite{ho2020denoising}. Also note that for EDM, all coefficients are calculated according to first-order ODE solver, and in the final step we need to multiply the output by $\sigma_{\ud}$ to get the final image. See \cref{app:coefficients} for more details and derivations.}
    \label{tab:coefficients}
    \renewcommand{\arraystretch}{1}
    \small
    \begin{tabular}{lllll}
        \hline
        & \textbf{iDDPM} & \textbf{DDIM} & \textbf{EDM} & \textbf{FM} \\
        \hline
        \multicolumn{5}{l}{\textbf{Training}} \\
        $a(t)$ & $\sqrt{\bar{\alpha}(t)}$ & $\sqrt{\bar{\alpha}(t)}$ & $\frac{1}{\sqrt{t^2+\sigma_{\ud}^2}}$ & $1-t$ \\
        $b(t)$ & $\sqrt{1-\bar{\alpha}(t)}$ & $\sqrt{1-\bar{\alpha}(t)}$ & $\frac{t}{\sqrt{t^2+\sigma_{\ud}^2}}$ & $t$ \\
        $c(t)$ & $0$ & $0$ & $\frac{t}{\sigma_{\ud}\sqrt{t^2+\sigma_{\ud}^2}}$ & $-1$ \\
        $d(t)$ & $1$ & $1$ & $-\frac{\sigma_{\ud}}{\sqrt{t^2+\sigma_{\ud}^2}}$ & $1$ \\
        $w(t)$ & $1$ & $1$ & $1$ & $1$ \\
        \\[-1.3ex]
        $p_t$ & $\mcal U\{1, \ldots, T\}$ & $\mcal U\{1, \ldots, T\}$ & $\exp\mcal N(-1.2, 1.2^2)$ \footnotemark & $\mcal U[0, 1]$ \\
        \\[0ex]
        \hline
        \multicolumn{5}{l}{\textbf{Sampling}} \\
        $\kappa_i$ &
        $\sqrt{\frac{\bar{\alpha}_{i+1}}{\bar{\alpha}_i}}$ & 
        $\sqrt{\frac{\bar{\alpha}_{i+1}}{\bar{\alpha}_i}}$ & 
        $\sqrt{\frac{\sigma_{\ud}^2+t_{i}^2}{\sigma_{\ud}^2+t_{i+1}^2}}\left(1{-}\frac{t_i(t_i{-}t_{i+1})}{t_i^2+\sigma_{\ud}^2}\right)$ 
        & 0 \\
        $\eta_i$ & 
        $\frac{1}{\sqrt{1-\bar{\alpha}_i}}\left(\sqrt{\frac{\bar{\alpha}_i}{\bar{\alpha}_{i+1}}}{-}\sqrt{\frac{\bar{\alpha}_{i+1}}{\bar{\alpha}_i}}\right)$ & 
        $\sqrt{1-\bar{\alpha}_{i+1}}{-}\sqrt{\frac{\bar{\alpha}_{i+1}}{\bar{\alpha}_i}(1-\bar{\alpha}_i)}$ & 
        $\frac{\sigma_{\ud} (t_i-t_{i+1})}{\sqrt{(t_i^2+\sigma_{\ud}^2)(t_{i+1}^2+\sigma_{\ud}^2)}}$ & 
        $t_{i+1}-t_{i}$ \\
        $\zeta_i$ & 
        $\sqrt{\left(1{-}\frac{\bar{\alpha}_i}{\bar{\alpha}_{i+1}}\right)\frac{1-\bar{\alpha}_{i+1}}{1-\bar{\alpha}_i}}$ & 
        0 & 
        0 & 
        0 \\
        Schedule $t_{0\sim N}$ &
        $t_i=\frac{N-i}{N}\cdot T$ & 
        $t_i=\frac{N-i}{N}\cdot T$ & 
        $t_i=\left(t_{\mx}^{\frac{1}{\rho}}{+}\frac{i}{N}\left(t_{\mn}^{\frac{1}{\rho}}{-}t_{\mx}^{\frac{1}{\rho}}\right)\right)^{\rho}$ & 
        $t_i=1-\frac{i}{N}$  \\
        \\[-0.8ex]
        \hline
        \multicolumn{5}{l}{\textbf{Parameters}} \\
        & $\bar{\alpha}(t)=\frac{1}{2}\left(1+\cos\frac{\pi t}{T}\right)$ & 
        $\bar{\alpha}(t)=\prod\limits_{i=0}^{t-1}\left(1{-}k_1{-}k_2\frac{i}{T-1}\right)$ &
        $\sigma_{\ud}=0.5, \rho=7$ & \\
        \\[-0.8ex]
        & $\bar{\alpha}_i:=\bar\alpha(t_i)$&
        $\bar{\alpha}_i:=\bar\alpha(t_i)$ &
        $t_{\mx}=80, t_{\mn}=0.002$ & \\
        \\[-0.8ex]
        & $T=4000$ & $T=1000$ & &\\
        \\[-0.8ex]
        &  & $k_1=10^{-4}, k_2=2\times 10^{-2}$ & &\\
        \\[-0.8ex]
        \hline
    \end{tabular}
\end{table*}

Notice the presence of the diffusion schedule $\bar{\alpha}(t)$ inside the coefficients. We adapt a modified version of the cosine schedule in \citet{nichol2021iddpm}:
\begin{align}
    \bar{\alpha}(t) = \frac{1}{2}\left(1+\cos \frac{\pi t}{T}\right),
\end{align}

where $T=4000$ is the total number of diffusion steps during training.

Next, consider the sampling process, which in their notations is iteratively given by
\begin{align*}
    \rvx_{t-1} = \frac{1}{\sqrt{\alpha_t}}\left(\rvx_t-\frac{1-\alpha_t}{\sqrt{1-\bar{\alpha}_t}}\rvepsilon_\rvtheta(\rvx_t,t)\right)+\sqrt{\frac{1-\bar{\alpha}_{t-1}}{1-\bar{\alpha}_t}\beta_t}\rvz,
\end{align*}
and $\rvz\sim \mcal{N}(\bm 0, \mI)$ is a standard Gaussian random noise. It is also straightforward to translate this sampling equation into our notation:
\begin{align}\label{eq:iddpm_coeff}
    \kappa_i = \sqrt{\frac{\bar{\alpha}_{i+1}}{\bar{\alpha}_i}}, \eta_i = \frac{1}{\sqrt{1-\bar{\alpha}_i}}\left(\sqrt{\frac{\bar{\alpha}_i}{\bar{\alpha}_{i+1}}}-\sqrt{\frac{\bar{\alpha}_{i+1}}{\bar{\alpha}_i}}\right), \zeta_i = \sqrt{\left(1-\frac{\bar{\alpha}_i}{\bar{\alpha}_{i+1}}\right)\frac{1-\bar{\alpha}_{i+1}}{1-\bar{\alpha}_i}},
\end{align}
and 
\begin{align}
    t_i =  \frac{N-i}{N}\cdot T.
\end{align}

This will give the first column in \cref{tab:coefficients}.

\subsection{DDIM}\label{app:ddim_coeff}
DDIM \cite{song2021ddim} shares the training process with DDPM \cite{ho2020denoising}. However, we choose to use the linear schedule for $\bar{\alpha}(t)$, to demonstrate the generality of our scheme. This schedule has the form
\begin{align}
    \bar{\alpha}(t) = \prod\limits_{i=0}^{t-1}\left(1-k_1-k_2\frac{i}{T-1}\right),
\end{align}

where $k_1=10^{-4}$ and $k_2=2\times 10^{-2}$, and $T=1000$ is the total number of diffusion steps during training.

\footnotetext{Here, we use the notation $\exp\mcal N(\mu, \sigma^2)$ to denote the distribution of $\exp(u)$, where $u\sim \mcal N(\mu, \sigma^2)$.}

The sampling process is given by
\begin{align*}
    \rvx_{t-1} = \sqrt{\bar{\alpha}_{t-1}}\left(\frac{x_t-\sqrt{1-\bar{\alpha}_t}\rvepsilon_\rvtheta (\rvx_t,t)}{\sqrt{\bar{\alpha}_t}}\right)+\sqrt{1-\bar{\alpha}_{t-1}}\rvepsilon_\rvtheta (\rvx_t,t)
\end{align*}
which is obtained by substituting $\sigma_t=0$ in their notation. This is again straightforward to translate into our notation:
\begin{align}\label{eq:ddim_coeff}
    \kappa_i = \sqrt{\frac{\bar{\alpha}_{i+1}}{\bar{\alpha}_i}}, \eta_i = \sqrt{1-\bar{\alpha}_{i+1}}-\sqrt{\frac{\bar{\alpha}_{i+1}}{\bar{\alpha}_i}(1-\bar{\alpha}_i)}, \zeta_i = 0,
\end{align}
and
\begin{align}
    t_i = \frac{N-i}{N} \cdot T.
\end{align}

These give the second column in \cref{tab:coefficients}.

Moreover, we can consider the generalized sampler proposed by \citet{song2021ddim}, which contains an adjustable parameter $\lambda\in [0,1]$. In their original notation, the sampler can be written as
\begin{align*}
    \rvx_{t-1} = \sqrt{\bar{\alpha}_{t-1}}\left(\frac{x_t-\sqrt{1-\bar{\alpha}_t}\rvepsilon_\rvtheta (\rvx_t,t)}{\sqrt{\bar{\alpha}_t}}\right)+\sqrt{1-\bar{\alpha}_{t-1}-\lambda^2\sigma_t^2 }\rvepsilon_\rvtheta (\rvx_t,t)+\lambda \sigma_t \rvepsilon_t,
\end{align*}
where
\begin{align*}
    \sigma_t:=\sqrt{\frac{(\bar{\alpha}_{t-1}-\bar{\alpha}_t)(1-\bar{\alpha}_{t-1})}{\bar{\alpha}_{t-1}(1-\bar{\alpha}_t)}}
\end{align*}
and $\rvepsilon_t$ is an independent Gaussian random noise. In our formulation, it can be equivalently written as
\begin{align}\label{eq:ddim_coeff_lmd}
    \begin{cases}
    \kappa_i = \sqrt{\dfrac{\bar{\alpha}_{i+1}}{\bar{\alpha}_i}} \\
    \\[-0.8ex]
    \eta_i = \sqrt{1-\bar{\alpha}_{i+1}-\lambda^2 \dfrac{(\bar{\alpha}_{i+1}-\bar{\alpha}_{i})(1-\bar{\alpha}_{i+1})}{\bar{\alpha}_{i+1}(1-\bar{\alpha}_{i})}}-\sqrt{\dfrac{\bar{\alpha}_{i+1}}{\bar{\alpha}_i}(1-\bar{\alpha}_i)} \\
    \\[-0.8ex]
    \zeta_i = \lambda \sqrt{\dfrac{(\bar{\alpha}_{i+1}-\bar{\alpha}_{i})(1-\bar{\alpha}_{i+1})}{\bar{\alpha}_{i+1}(1-\bar{\alpha}_{i})}}
    \end{cases}.
\end{align}

These expressions are used in our experiment of the ``interpolate sampler'' in \cref{subsec:analysis}. One can verify that when $\lambda=1$, the coefficients $\kappa_i, \eta_i$ and $\zeta_i$ will be the same as iDDPM (\cref{eq:iddpm_coeff}); and when $\lambda=0$, the coefficients will be the same as DDIM (\cref{eq:ddim_coeff}).

\subsection{EDM}\label{app:edm_coeff}

The original EDM \cite{karras2022edm} training objective is given by
\begin{align}
    \label{eq:edm_loss}
    \mcal{L}(\vtheta) = \mbb{E}_{\rvx,\rvepsilon,t}\Big[ \lambda(t)\|\mD_{\vtheta}(\rvx + t\rvepsilon |t)- \rvx\|^2\Big],
\end{align}
where $\mD_{\vtheta}$ is formed by the \textit{raw network} $\net_{\rvtheta}$ wrapped with a precondition:
\begin{align*}
    \mD_{\vtheta}(\rvz_{\mD}|t)=c_{\text{skip}}(t)\rvz_{\mD}+c_{\text{out}}(t)\net_{\vtheta}(c_{\text{in}}(t)\rvz_{\mD}|t)
\end{align*}
where $\rvz_{\mD}=\rvx+t\rvepsilon$. Here we directly use $t$ instead of $c_{\text{noise}}(t)$ in the original notation. 

As mentioned in \cref{subsec:gs}, we will consider the regression target with respect to $\net_{\vtheta}$ instead of $\mD_{\vtheta}$ and absorb the coefficients $c_{\text{skip}}(t)$, $c_{\text{in}}(t)$ and $c_{\text{out}}(t)$ into the training process. To achieve that, we define $\rvz:= c_{\text{in}}(t)(\rvx + t \rvepsilon)$. Then, we can get an equivalent training objective for $\net_{\vtheta}$ given by

\begin{align*}
    \mcal{L}(\vtheta) = \mbb{E}_{\rvx,\rvepsilon,t}\Big[w(t)\|\net_{\vtheta}(\rvz|t)-r(\rvx,\rvepsilon,t)\|^2\Big],
\end{align*}
where 
\begin{align}\label{eq:edm_coeff}
\begin{cases*}
    z = c_{\text{in}}(t) \rvx + t c_{\text{in}}(t) \rvepsilon \\
    w(t) = \lambda(t) c_{\text{out}}(t)^2 \\
    r(\rvx,\rvepsilon,t) = \frac{1}{c_{\text{out}}(t)}\cdot \left(\rvx-c_{\text{skip}}(t)(\rvx+t\rvepsilon)\right) = \frac{1-c_{\text{skip}}(t)}{c_{\text{out}}(t)}\rvx - \frac{t c_{\text{skip}}(t)}{c_{\text{out}}(t)}\rvepsilon
\end{cases*}
\end{align}
Now, we can plug in the specific expressions
\begin{align*}
    c_{\text{in}}(t)=\frac{1}{\sqrt{\sigma_{\ud}^2+t^2}},c_{\text{out}}(t)=\frac{\sigma_{\ud} t}{\sqrt{\sigma_{\ud}^2+t^2}},c_{\text{skip}}(t)=\frac{\sigma_{\ud}^2}{\sigma_{\ud}^2+t^2}, \lambda(t) = \frac{\sigma_{\ud}^2 + t^2}{\sigma_{\ud}^2 t^2}
\end{align*}
and $\sigma_{\ud}=0.5$ to get the coefficients 
\begin{align}
    a(t) = \frac{1}{\sqrt{\sigma_{\ud}^2+t^2}}, b(t) = \frac{t}{\sqrt{\sigma_{\ud}^2+t^2}}, c(t) = \frac{t}{\sigma_{\ud}\sqrt{\sigma_{\ud}^2+t^2}}, d(t) = -\frac{\sigma_{\ud}}{\sqrt{\sigma_{\ud}^2+t^2}},
\end{align}
and $w(t)=1$. Also note that $p(t)$ is given explicitly by the log-norm schedule $\exp\mcal{N}(-1.2,1.2^2)$. This completes the discussion of the training process. 

The (first-order) sampling process is given by
\begin{align*}
    \rvx_{\mD,i+1}=\rvx_{\mD,i} + (t_{i+1}-t_i)\frac{\rvx_{\mD,i}-\left(c_{\text{skip}}(t_i)\rvx_{\mD,i}+c_{\text{out}}(t_i)\net_{\vtheta}(c_{\text{in}}(t_i)\rvx_{\mD,i}|t_i)\right)}{t_i}
\end{align*} 
Here we use the suffix $\mD$ to denote this is the sampling process corresponding to $\mD_\rvtheta$. Since we also have to remove the external conditioning in the sampling process, we should let $\rvx_i = c_{\text{in}}(t_i)\rvx_{\mD,i}$ and rewrite the sampling equation using $\rvx_i$:
\begin{align*}
    \rvx_{i+1} = \frac{t_{i+1}}{t_i}\cdot \frac{c_{\text{in}}(t_{i+1})}{c_{\text{in}}(t_i)}\left(1-\frac{t_{i+1}-t_i}{t_{i+1}}c_{\text{skip}}(t_i)\right)\rvx_i + \frac{t_i-t_{i+1}}{t_i}{c_{\text{out}}(t_i)}{c_{\text{in}}(t_{i+1})}\net_{\vtheta}(\rvx_i|t_i)
\end{align*}
This then gives the general sampling coefficients 
\begin{align}\label{eq:edm_sampling}
\kappa_i = \frac{t_{i+1}}{t_i}\cdot \frac{c_{\text{in}}(t_{i+1})}{c_{\text{in}}(t_i)}\left(1-\frac{t_{i+1}-t_i}{t_{i+1}}c_{\text{skip}}(t_i)\right), \eta_i = \frac{t_i-t_{i+1}}{t_i}{c_{\text{out}}(t_i)}{c_{\text{in}}(t_{i+1})}, \zeta_i = 0.
\end{align}
Then, we can plug in the explicit expressions of $c_{\text{in}}(t_i)$, $c_{\text{skip}}(t_i)$ and $c_{\text{out}}(t_i)$ to get the final coefficients
\begin{align}
    \kappa_i = \sqrt{\frac{\sigma_{\ud}^2+t_i^2}{\sigma_{\ud}^2+t_{i+1}^2}}\left(1+\frac{t_{i}(t_{i+1}-t_i)}{t_{i}^2+\sigma_{\ud}^2}\right), \eta_i = -\frac{\sigma_{\ud} (t_{i+1}-t_i)}{\sqrt{(t_{i+1}^2+\sigma_{\ud}^2)(t_i^2+\sigma_{\ud}^2)}}, \zeta_i = 0.
\end{align}
Moreover, notice that due to our change-of-variable during the removal of external conditioning, $\rvx_N$ is defined as $c_{\text{in}}(t_N)\rvx_{\mD, N}$. But the sampling algorithm ensures $\rvx_{\mD, N}$ to match the data distribution, instead of $\rvx_N$. Thus, we have to multiply the output by $\sigma_{\ud}$ to get the final image, as mentioned in the caption of \cref{tab:coefficients}.

Finally, the sampling time step is also explicitly given in \citet{karras2022edm}, so we can directly use it here:
\begin{align}
    t_i = \begin{cases*}
        \left(\frac{80^{\frac{1}{7}}\cdot (N-i-1) + 0.002^{\frac{1}{7}}\cdot i}{N-1}\right)^7 &if $i<N$ \\
        $0$ &if $i=N$
    \end{cases*}.
\end{align}

These together give the coefficients for EDM, which are shown in the third column of \cref{tab:coefficients}.

\subsection{Flow Matching}

The training process of FM \cite{lipman2023flow} is given by
\begin{align*}
    \mcal{L}(\vtheta) = \mbb{E}_{\rvx,\rvepsilon,t}\Big[\|\rvv_{\vtheta}(t\rvepsilon+(1-t)\rvx,t)-(\rvepsilon-\rvx)\|^2\Big].
\end{align*}
This can be directly translated into our notation:
\begin{align}
    a(t) = 1-t, b(t) = t, c(t) = -1, d(t) = 1,
\end{align}
and with $w(t)=1$ and $p(t)=\mcal{U}([0,1])$. The sampling process is given by solving the ODE
\begin{align*}
    \frac{\ud \rvx}{\ud t} = \rvv_\rvtheta(\rvx,t)
\end{align*}
from $t=1$ to $t=0$. Since we assume using a first-order method (\ie Euler method), the sampling equation is 
\begin{align*}
    \rvx_{i+1} = \rvx_i + \rvv_\rvtheta(\rvx_i,t_i)\cdot (t_{i+1}-t_i).
\end{align*}
This will give
\begin{align}
    \kappa_i = 0, \eta_i = t_{i+1}-t_i, \zeta_i = 0
\end{align}
as well as the sampling time
\begin{align}
    t_i = \frac{N-i}{N},
\end{align}

as in the fourth column of \cref{tab:coefficients}.

\subsection{Our uEDM Model in the Formulation}\label{app:v1}

Introduced in \cref{p:edmv1}, the uEDM model designed by us is a modified version of EDM \cite{karras2022edm}. The only modification is that we change $c_{\text{in}}(t)$ and $c_{\text{out}}(t)$ by
\begin{align*}
\begin{cases*}
    c_{\text{in}}(t) = \dfrac{1}{\sqrt{t^2+\sigma_{\ud}^2}} \\
    c_{\text{out}}(t) = \dfrac{t\sigma_{\ud}}{\sqrt{t^2+\sigma_{\ud}^2}}
\end{cases*}
\qquad\longrightarrow\qquad
\begin{cases*}
    c_{\text{in}}(t) = \dfrac{1}{\sqrt{t^2+1}} \\
    c_{\text{out}}(t) = 1
\end{cases*}
\end{align*}
and remain all other configurations the same as the original EDM model. 

In \cref{app:edm_coeff}, we have already derived the general form of the coefficients of EDM with functions $c_{\text{in}}(t)$, $c_{\text{out}}(t)$, $c_{\text{skip}}(t)$ and $\lambda (t)$ in \cref{eq:edm_coeff,eq:edm_sampling}. Plugging in the new set of these functions, we can then derive the coefficients of uEDM, as shown in \cref{tab:edmv1}.

\begin{table}[!ht]
    \caption{Comparison of coefficients of EDM and our uEDM.}\label{tab:edmv1}
    \centering
    \scriptsize
    \begin{tabular}{lccccccc}
        \toprule
        Coefficients & $a(t)$ & $b(t)$ & $c(t)$ & $d(t)$ & $w(t)$ & $\kappa_i$ & $\eta_i$ \\
        \midrule
        EDM & $\dfrac{1}{\sqrt{t^2+\sigma_{\ud}^2}}$ & 
        $\dfrac{t}{\sqrt{t^2+\sigma_{\ud}^2}}$ & 
        $\dfrac{t}{\sigma_{\ud}\sqrt{t^2+\sigma_{\ud}^2}}$ & 
        $-\dfrac{\sigma_{\ud}}{\sqrt{t^2+\sigma_{\ud}^2}}$ & 
        $1$ &
        $\sqrt{\dfrac{\sigma_{\ud}^2+t_{i}^2}{\sigma_{\ud}^2+t_{i+1}^2}}\left(1{-}\dfrac{t_i(t_i{-}t_{i+1})}{t_i^2+\sigma_{\ud}^2}\right)$ & 
        $\dfrac{\sigma_{\ud} (t_i-t_{i+1})}{\sqrt{(t_i^2+\sigma_{\ud}^2)(t_{i+1}^2+\sigma_{\ud}^2)}}$ \\
        \\[-1ex]
        uEDM & $\dfrac{1}{\sqrt{t^2+1}}$ & 
        $\dfrac{t}{\sqrt{t^2+1}}$ & 
        $\dfrac{t^2}{t^2+\sigma_{\ud}^2}$ & 
        $-\dfrac{t\sigma_{\ud}^2}{t^2+\sigma_{\ud}^2}$ & 
        $\dfrac{\sigma_{\ud}^2+t^2}{\sigma_{\ud} t}$ &
        $\sqrt{\dfrac{t_{i}^2+1}{t_{i+1}^2+1}}\left(1{-}\dfrac{t_i(t_i{-}t_{i+1})}{t_i^2+\sigma_{\ud}^2}\right)$ 
        & $\dfrac{t_i-t_{i+1}}{t_i\sqrt{t_{i+1}^2+1}}$ \\
        \bottomrule
    \end{tabular}
\end{table}

\section{Additional Samples}

Beyond the comparison shown in \cref{fig:samples} for noise-conditional and noise-unconditional models, we also provide additional samples for other models, on other datasets, or in class-conditional settings. We use the same configuration as in \cref{tab:exp}. \cref{fig:add_icm,fig:add_ecm} show the samples of ICM and ECM on CIFAR-10 with both 1 and 2 inference steps. \cref{fig:add_imgnet} show the samples of FM on ImageNet 32$\times$32 with both Euler and EDM-Heun sampler. \cref{fig:add_cond} shows the samples of FM and EDM on CIFAR-10 in a class-conditional setting.

\newpage

\begin{figure*}
    \centering
    \begin{subfigure}[b]{0.47\textwidth}
        \includegraphics[width=\textwidth]{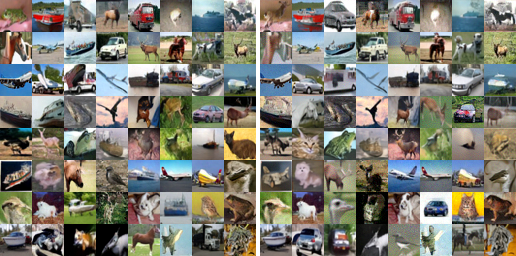}
        \caption{ICM 1 step (FID: $3.37\to 12.03$)\\[1.4ex]}
    \end{subfigure}
    \hfill
    \begin{subfigure}[b]{0.47\textwidth}
        \includegraphics[width=\textwidth]{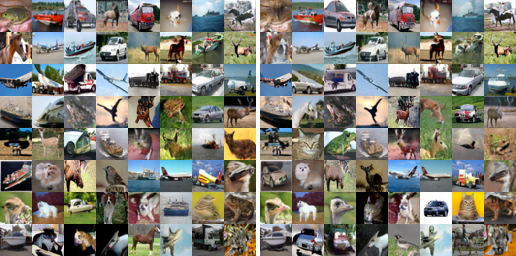}
        \caption{ICM 2 step (FID: $2.59\to 3.57$)\\[1.4ex]}
    \end{subfigure}
    \caption{Samples generated by ICM on CIFAR-10. From left to right: 1 step w/ $t$, 1 step w/o $t$, 2 step w/ $t$, 2 step w/o $t$. All corresponding samples use the same noise.}\label{fig:add_icm}
\end{figure*}

\begin{figure*}
    \centering
    \begin{subfigure}[b]{0.47\textwidth}
        \includegraphics[width=\textwidth]{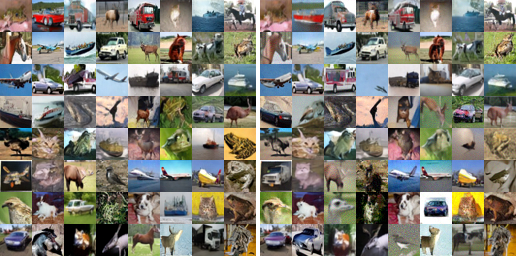}
        \caption{ECM 1 step (FID: $3.49\to 12.60$)\\[1.4ex]}
    \end{subfigure}
    \hfill
    \begin{subfigure}[b]{0.47\textwidth}
        \includegraphics[width=\textwidth]{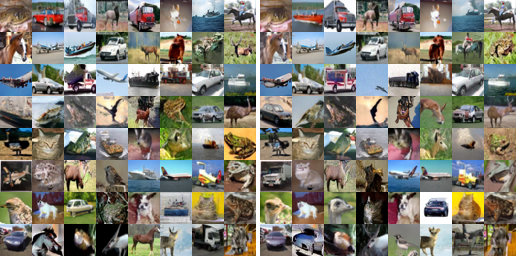}
        \caption{ECM 2 step (FID: $2.57\to 3.27$)\\[1.4ex]}
    \end{subfigure}
    \caption{Samples generated by ECM on CIFAR-10. From left to right: 1 step w/ $t$, 1 step w/o $t$, 2 step w/ $t$, 2 step w/o $t$. All corresponding samples use the same noise.}\label{fig:add_ecm}
\end{figure*}

\begin{figure*}
    \centering
    \begin{subfigure}[b]{0.47\textwidth}
        \includegraphics[width=\textwidth]{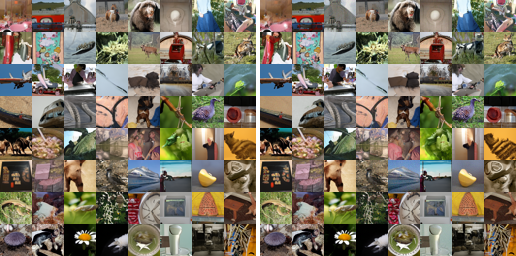}
        \caption{ImageNet FM, Euler Sampler (FID: $5.15\to 4.85$)\\[1.4ex]}
    \end{subfigure}
    \hfill
    \begin{subfigure}[b]{0.47\textwidth}
        \includegraphics[width=\textwidth]{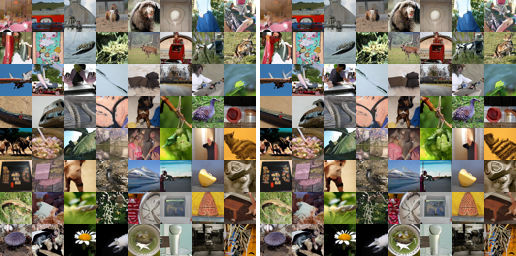}
        \caption{ImageNet FM, Heun Sampler (FID: $4.43\to 4.58$)\\[1.4ex]}
    \end{subfigure}
    \caption{Samples generated by FM on ImageNet 32$\times$32 with Euler and EDM-Heun sampler. From left to right: Euler w/ $t$, Euler w/o $t$, Heun w/ $t$, Heun w/o $t$. All corresponding samples use the same noise.}\label{fig:add_imgnet}
\end{figure*}

\begin{figure*}
    \centering
    \begin{subfigure}[b]{0.67\textwidth}
        \centering
        \includegraphics[width=\textwidth]{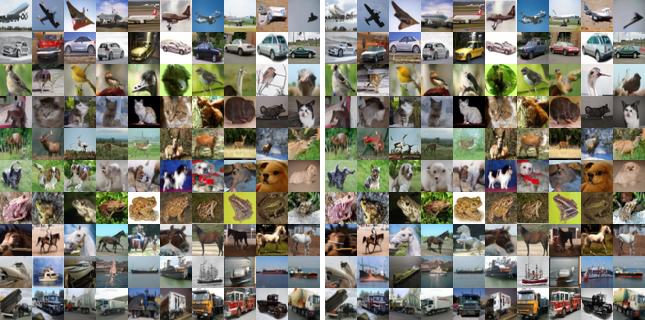}
        \caption{Class-conditional FM (FID: $2.72\to 2.55$)}
        \vspace{1em}
    \end{subfigure}
    \centering
    \begin{subfigure}[b]{0.67\textwidth}
        \centering
        \includegraphics[width=\textwidth]{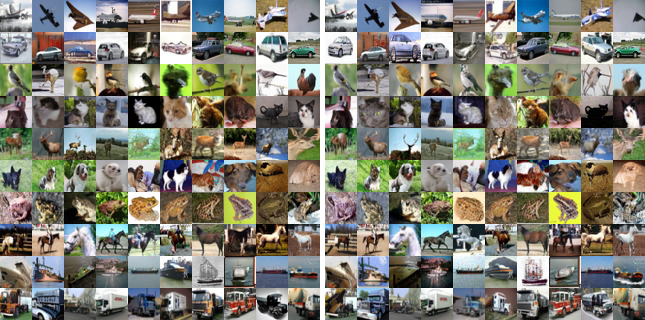}
        \caption{Class-conditional EDM (FID: $1.76\to 3.11$)}
    \end{subfigure}
    \caption{Class-conditional samples generated by FM and EDM on CIFAR-10. In rasterized order: FM w/ $t$, FM w/o $t$, EDM w/ $t$, EDM w/o $t$. All corresponding samples use the same noise and the same label.}\label{fig:add_cond}
\end{figure*}

\end{appendices}

\end{document}